\documentclass[journal]{IEEEtran} 
\usepackage{graphicx}
\usepackage{epsfig} 
\usepackage{booktabs}
\usepackage{array}
\usepackage{times} 
\usepackage{amsmath} 
\usepackage{amssymb}  
\usepackage{enumerate}
\usepackage{mathrsfs}
\usepackage{multirow}
\usepackage{subfigure}
\usepackage{latexsym}
\usepackage{bm}
\usepackage{dsfont}
\usepackage{multicol}
\usepackage{multirow}
\usepackage{color}
\usepackage{url}
\usepackage{cite}
\usepackage{algorithmic}
\usepackage{soul}
\usepackage[ruled]{algorithm2e}
\usepackage{comment}
\usepackage{picinpar}
\usepackage{flushend}
\usepackage{colortbl}
\usepackage{multirow}
\usepackage{pifont}
\usepackage{alltt}
\usepackage[hidelinks]{hyperref}
\usepackage{siunitx}
\usepackage{breakurl}
\usepackage{pbox}
\usepackage{footmisc} 
\usepackage{threeparttable}
\def\BibTeX{{\rm B\kern-.05em{\sc i\kern-.025em b}\kern-.08em
		T\kern-.1667em\lower.7ex\hbox{E}\kern-.125emX}}
\newtheorem{remark}{Remark}
\newtheorem{assumption}{Assumption}

\newtheorem{theorem}{Theorem}

{
	\begin{bmatrix}}%
	{\end{bmatrix}
	}

\def\build#1_#2^#3{\mathrel{\mathop{\kern0pt#1}\limits_{#2}^{#3}}}%

\def\build#1_#2^#3{\mathrel{\mathop{\kern0pt#1}\limits_{#2}^{#3}}}%

\makeatletter
\renewcommand*\env@matrix[1][*\c@MaxMatrixCols c]{%
	\hskip -\arraycolsep
	\let\@ifnextchar\new@ifnextchar
	\array{#1}}

	\newcommand{\mc}{\textcolor{black}}

\makeatother

\begin{document}
	
 \title{
 Distributed Secure Learning Control for Large-scale Multirobots under Stealthy {\color{black}Actuator} Attacks}
 \author{
		Xinglong Zhang,   Qingwen Ma, Cong Li, Hui Yin, Changxin Zhang, Yueying Wang, Wei Pan, Xin Xu
		\thanks{
 Xinglong Zhang, Qingwen Ma, Cong Li, Changxin Zhang, and Xin Xu are with the College of Intelligence Science and Technology, National University of Defense Technology, Changsha, 410073, China. Hui Yin is with the
College of Mechanical and Vehicle Engineering, Hunan University,
 Changsha, 410082, China. Yueying Wang is with the School of Mechatronic Engineeringand Automation, Shanghai University, Shanghai 200444, China. Wei Pan is with the Department of Computer Science, The University of
Manchester, 2628 CD Manchester, U.K. (e-mail:  \{zhangxinglong18, lc, zhangchangxin19, xinxu\}@nudt.edu.cn, 
 huiyin@hnu.edu.cn, yueyingwang@shu.edu.cn, wei.pan@manchester.ac.uk). 
			%
			
			
		}
	}
	
	\maketitle
	
\begin{abstract}

 Distributed learning control for multirobot systems (MRS) offers significant flexibility in presence of uncertainties but lacks provable performance guarantees. A promising direction involves integrating reinforcement learning (RL) into distributed model predictive control (DMPC), leveraging the strengths of RL in nonlinear policy design and the receding-horizon replanning capabilities of DMPC. However, ensuring secure control within such a learning framework under malicious cyber attacks, particularly stealthy ones, remains a critical challenge, because the distributed policies generation depends on information exchange among neighbors, where compromised agents can rapidly influence the behavior of others through the communication network.
This article proposes a distributed secure learning control (DSLC) framework for large-scale MRS under malicious, stealthy {\color{black}actuator} attacks. Our framework offers two key features: (i) a unified approach that enables secure learning control across various coordination scenarios, including general formation, affine formation, and containment; and (ii) a game-theoretic distributed learning-based predictive control strategy that learns how to balance the attacker and defender through a differential-game based DMPC framework. Specifically, DSLC employs a distributed attacker–actor–critic architecture to learn the optimal defense and attack policies online within each prediction interval. Unlike numerical optimization-based controllers that calculate open-loop control sequences, our method simultaneously generates adversarial attack policies and corresponding defense policies in analytical closed-loop form. The defense policies could be directly generalized to MRS with varying scales and diverse {\color{black}actuator} attack probabilities, enabling rapid and secure control deployment in large-scale MRS. The effectiveness and scalability of DSLC are validated through comprehensive simulations and real-world experiments in multiple wheeled robots via various control tasks.

	\end{abstract}
	\begin{IEEEkeywords}
		 Large-scale multirobots, malicious attacks, differential game, distributed model predictive control, distributed secure learning control, continuous-time dynamics.
	\end{IEEEkeywords}
	{}

	\IEEEdisplaynontitleabstractindextext
	\IEEEpeerreviewmaketitle
 
 \section{Introduction}
 Multirobot systems (MRS) consist of a number of robots working cooperatively to complete complex tasks that are beyond the capabilities of a single robot, and have received increasing attention in both civil and industrial fields \cite{santilli2021dynamic,wu2023secure,ma2023robust,zhou2023robust}.  In recent decades, distributed control for MRS has received notable attention for enabling a wide range of cooperative tasks, such as formation and containment control. In multirobot control, cooperative performance optimization is crucial, which can address task-level coordination and energy consumption optimization in addition to stability-oriented control.
 


As a foundational optimization-based control methodology, distributed model predictive control (Distributed MPC, DMPC) has been extensively studied in MRS \cite{lee2015cooperative,zhou2022event,conte2016distributed}, due to its ability to handle explicit constraints and its well-established theoretical framework. Traditional DMPC for MRS typically depends on prior knowledge of nonlinear dynamics, and each local controller is required to solve complex numerical optimization problems online at each time instant. This reliance on real-time computation poses significant challenges for the practical deployment of DMPC in large-scale MRS, particularly when operating on resource-constrained robotic platforms. 
Recent advances have used policy learning techniques, such as actor-critic reinforcement learning (RL), to replace online numerical solvers for the generation of closed-loop MPC policies \cite{zhang2025toward}. These approaches are promising in improving the scalability and practical applicability of DMPC for large-scale systems. However, each local controller in a distributed control setting calculates its policy using state information received from neighboring agents via communication networks.
This reliance on inter-agent communication introduces significant vulnerabilities: cyber attacks targeting local controllers can compromise the global performance of MRS.   Despite growing interest,  distributed learning control with security guarantees for MRS in the presence of malicious cyber attacks remains an open and unexplored problem. This problem is particularly critical in the context of large-scale multirobot control under attacks, where communication constraints and limited computational resources further complicate the design of secure controllers. 


Malicious cyber attacks represent one of the most critical security threats to MRS, as they can compromise or manipulate the information exchanged among robots, thereby undermining cooperative behavior and impeding task execution \cite{wu2018secure,li2023intelligent}. 
 %
%
Unlike widely investigated exogenous disturbances on dynamical systems, which are not manipulable and typically lead to performance degradation, malicious attacks are often intentional and pose serious threats to system security, such as unauthorized attempts to take control of robots. Malicious attacks can generally be categorized into two main types: denial-of-service (DoS) attacks and deception attacks~\cite{wu2023secure,feng2020secure}. DoS attacks disrupt communication by overwhelming servers or network resources, thereby blocking access to critical services. Due to their high traffic demands, these attacks are relatively easier to detect. 
 In contrast, deception attacks, including false data injection (FDI) attacks, replay attacks, and actuator attacks, are significantly more stealthy and insidious. {\color{black}For example, in actuator attacks, an adversarial policy covertly manipulates the actuator signals such that the corrupted inputs remain indistinguishable from legitimate control commands to the controller and do not trigger anomaly- or residual-based detection mechanisms, while simultaneously driving the system toward undesired or potentially unsafe behaviors.} Owing to their covert nature, such attacks are considerably more difficult to detect and mitigate, thereby posing a substantial threat to the integrity and functionality of MRS.
To mitigate such threats, various distributed secure control strategies have been proposed, including event-triggered control~\cite{tan2023event,geng2023hybrid,zhan2023event}, 
 observer-based robust control~\cite{zhao2025observer}, and detection-monitor-based control~\cite{bonczek2022detection,liu2023resilient,li2025securing}. 
Specifically, secure control for MRS under DoS attacks was considered in \cite{tan2023event,geng2023hybrid,zhan2023event},
and deception attacks for MRS were dealt with in \cite{bonczek2022detection,liu2023resilient}.
Although these approaches have made notable progress in control under certain types of cyber attacks, they primarily focus on security and robustness. The critical challenge of achieving real-time cooperative performance optimization under malicious, stealthy attacks remains unresolved.

To address above challenges, a distributed secure learning control (DSLC) framework is presented for large-scale MRS under malicious, stealthy actuator attacks. The effectiveness and potentiality of DSLC are validated both theoretically and empirically through extensive simulations and real-world experiments on multiple wheeled robots.
The contributions of this article are summarized below.
\begin{enumerate}[(i)]
\item We propose a unified, game-theoretic, distributed learning-based predictive control algorithm to address the secure control challenges of large-scale MRS under malicious, stealthy actuator attacks. Our approach represents a generalizable framework to optimize cooperative performance across various coordination scenarios, including general formation, affine formation, and containment.
\item In each prediction interval of our approach, attackers (who degrade control performance) and defenders (who secure the closed-loop system) are modeled as adversarial players in a finite-horizon differential game. Solving such a problem in real time using nonlinear numerical optimization solvers is highly challenging. 
To address this, we propose an efficient attacker–actor–critic learning algorithm that generates attack and defense policies online. Unlike numerical optimization-based methods that calculate open-loop control sequences over the entire prediction horizon, our approach decomposes the finite-horizon optimization problem into a series of smaller subproblems. Then, it simultaneously generates the adversarial attack policies and corresponding defense policies in analytic closed-loop form, rather than open-loop control sequences, through solving the subproblems in a heuristic and forward-in-time iteration way. This approach enables efficient deployment across diverse robot team sizes and attack distribution probabilities.
\item The proposed approach has been extensively validated through both simulated and real-world experiments. The results demonstrate its scalability in distributed control of 1000 robots under malicious actuator attacks. Furthermore, comprehensive real-world experiments on multiple wheeled robots, covering general formation, affine formation, and containment scenarios under actuator attacks, further highlight the superiority and generalizability of our methodology in diverse applications.
\end{enumerate}

The remainder of the article is organized as follows.  Section II reviews the related work. Section III presents the MRS models under actuator attacks and formulates the differential game-based DMPC problem. The proposed DSLC framework as well as its efficient implementation via a distributed attacker-actor-critic structure are presented in Section IV. Section V demonstrates the simulation results, while the real-world experimental tests are performed in Section VI. Conclusions are drawn in Section VII. Some auxiliary materials and the main theoretical results are given in Appendices~\ref{appendix} {\color{black}and~\ref{Auiliary-MATERIAL}}.

 \textbf{Notation:}  Throughout this article, we use $\mathbb{R}$ and $\mathbb{R}^{+}$ to denote sets of real numbers and positive real numbers, respectively;
		$\mathbb{R}^{n}$ to denote the Euclidean space of the $n$-dimensional real vector;
		$\mathbb{R}^{n \times m}$ to denote the Euclidean space of $n \times m$ real matrices. Denote $\mathbb{N}$ as the set of integers and $\mathbb{N}_{l_1}^{l_2}$ as the set of integers $l_1,l_1+1,\cdots,l_2$. Given a general function $f(x)$ on the argument $x$, we denote $\nabla f(x)$ as the gradient with respect to $x$. For a group of vectors $z_i\in\mathbb{R}^{n_i}$, $i\in\mathbb{N}_{1}^{M}$, we use $\text{col}_{i\in\mathbb{N}_1^M}(z_i)$ or $(z_1,\cdots,z_M)$ to denote $[z_1^{\top},\cdots,z_M^{\top}]^{\top}$, where $M\in\mathbb{N}$.  For a matrix $P\in\mathbb{R}^{n\times n}$, $P\succ 0$ means that it is positive definite.  For a vector $x\in\mathbb{R}^{n}$, we denote $\|x\|_Q^2$ as $x^{\top}Qx$ and $\|x\|$ as the Euclidean norm.

\section{Related Work}\label{sec:related work}
This section first introduces the recent advances on RL and DMPC for multirobot control, and then reviews the secure control of MRS under attacks.

 \subsection{RL for Multirobot Control}
RL is a powerful data-driven paradigm for tackling nonlinear optimal control problems in continuous, high-dimensional state spaces. 
Recently, the integration of RL with distributed control architectures has emerged as a promising direction.
 Among them,  a zero-order distributed policy optimization approach was proposed in~\cite{9616447} for linear multi-agent systems, which blends consensus-based cost estimation with derivative-free local policy gradients, yielding stabilizing controllers with provable non-asymptotic guarantees.
 An asynchronous advantage actor-critic scheme was employed in \cite{8755279} to learn decentralized locomotion controllers, where each agent corresponds to an independently actuated segment of a snake or hexapod robot, thus achieving sample-efficient, hardware-friendly closed-loop policies without centralized coordination. 
 In resilient control, an RL-based fault-tolerant containment control protocol was developed in \cite{9597477} for multi-agent discrete-time systems, which ensures uniform boundedness of containment error under unknown faults. 
 Similarly, a deep RL-based distributed longitudinal control strategy was introduced in \cite{SHI2023104019}, which treats blocks of human-driven vehicles as aggregated entities to reduce stochasticity and improve oscillation damping and eco-driving performance. 
 
 It is worth noting that the existing RL approaches for nonlinear multirobot control~\cite{8755279,9597477,SHI2023104019} generally lack online learning capability and formal stability guarantees. Moreover, none of these works addresses the control problem under malicious cyber attacks within a distributed learning-based framework~\cite{wu10144090}. These limitations have motivated the development of our DSLC framework, which offers performance optimization with theoretical guarantees for multirobot control under adversarial cyber threats.

 \subsection{Distributed MPC for MRS}

\emph{Distributed MPC for multirobot control}: DMPC has been widely used in MRS to enable cooperative performance optimization, with applications spanning path planning~\cite{chen2023multirobot}, coordinated area coverage~\cite{yu2025cognitive}, and formation control~\cite{lee2015cooperative}. In~\cite{lee2015cooperative}, a DMPC method based on a cooperative co-evolutionary algorithm was proposed for MRS with undirected communication graphs, ensuring asymptotic stability regardless of optimality. In~\cite{wang2021distributed}, a DMPC algorithm was applied to optimize quadratic performance index under asynchronous sampling intervals in MRS.
To handle directed communication topologies, a neural network-optimized DMPC framework was introduced in~\cite{xiao2023integrated} for formation control, where optimal solutions were calculated using a quadratic programming solver based on a primal-dual neural network. In multirobot control with unknown disturbances, a robust DMPC strategy incorporating extended state observers was presented in~\cite{liu2020formation} to enhance disturbance rejection and anti-jamming capability in formation tasks. Furthermore, a control barrier function-based DMPC scheme was developed in~\cite{chao2024incorporating} for cluster flocking, ensuring inter-robot collision avoidance while maintaining desired formation behaviors.
Note that in the aforementioned DMPC approaches~\cite{wang2021distributed,lee2015cooperative,xiao2023integrated,liu2020formation,chao2024incorporating}, each local controller relies on numerical solvers to compute control inputs. However, this process is computationally intensive and lacks scalability for deployment, particularly in resource-constrained scenarios. 

\emph{Policy learning in DMPC}: Recent studies have integrated RL with MPC to combine their respective strengths for robot control, with most efforts focused on single robot systems~\cite{Song2022,Zhang2022TIV}. In multirobot control, a fast policy learning method was proposed in~\cite{zhang2025toward}, which enables the generation of explicit closed-loop DMPC policies without relying on numerical optimization online. This approach significantly improves computational efficiency and facilitates rapid deployment in large-scale MRS.
However, it is limited to a specific formation control task and does not consider the control impacts of cyber attacks or external disturbances. 

In this work, we tackle the challenge of distributed secure control for large-scale MRS operating under malicious, stealthy actuator attacks, while ensuring real-time performance optimization and system stability. Furthermore, our approach offers a unified optimization framework that generalizes across various coordination scenarios, including general formation, affine formation, and containment, thereby enhancing its applicability and robustness in diverse multirobot applications.

 \subsection{Secure Control for MRS under Attacks}
 Numerous secure control strategies have been developed for MRS under DoS and deception attacks~\cite{survey9609637}, but with a special focus on the stability and robustness guarantees under attacks. 
 
 \emph{Secure control under DoS attacks}: Based on attack detection tools, secure control methods such as distributed event-triggered control strategies \cite{tasooji2024event, zheng2024periodic} and sliding mode-based robust controller \cite{wang2025predefined} were developed to ensure safe and reliable multirobot operation in the presence of DoS attacks. 
 In \cite{zhan2024resilient}, a data-driven Koopman operator approach was introduced to estimate missing signals caused by DoS attacks, enabling resilient formation control for networked non-holonomic mobile robots. 
 Other notable contributions to secure control under DoS attacks can be found in \cite{yin2023security, liu2020sliding, qin2020optimal}; however, these studies cannot be directly applied to secure control under stealthy deception attacks. 

\emph{Secure control under deception attacks}: In contrast to DoS attacks, deception attacks are stealthier and more difficult to detect and identify, thereby receiving significant attention in the field of secure control. In \cite{wang2021distributedB}, a distributed adaptive robust control strategy was proposed for MRS to address the secure control problem under deception attacks, with stability analysis conducted for the closed-loop system. In \cite{li2025securing}, a secure and ubiquitous formation control method for MRS was developed to defend against replacement attacks. A convex neighbor polygon technique was introduced to explicitly characterize physical proximity among robots, aiding in the identification of attackers. Similarly, in \cite{yang2024defense}, both active and passive distributed secure control approaches were developed for MRS formation under displacement attacks, using only local information.
To counter replay attacks, a secure control framework was proposed in \cite{liu2023secure} for leader–follower formation control, where an extended state observer was designed to estimate unknown nonlinearities, including system uncertainties and external disturbances. Furthermore, observer-based robust control strategies were developed in \cite{ma2020sparse,ye2024extended} to mitigate the impact of false data injection attacks, with theoretical guarantees on closed-loop system stability.

It is highlighted that the above methods \cite{tasooji2024event,zheng2024periodic,wang2025predefined,zhan2024resilient,yin2023security,liu2020sliding,wang2021distributedB,li2025securing,yang2024defense,liu2023secure,ma2020sparse,ye2024extended} focus mainly on the robustness and stability issue for MRS against different types of cyber attacks. However, beyond robustness and stability, performance optimization under cyber attacks remains a critical and unaddressed challenge.
To the best of our knowledge, no existing work has tackled the performance optimization problem for MRS under cyber attack conditions.
This gap has motivated the development of our distributed secure learning control framework, which aims to achieve cooperative optimal control in the presence of malicious stealthy attacks.

 \section{Problem Formulation}\label{problem_formulation}
	In this section, we first introduce the continuous-time kinematic models of MRS under malicious actuator attacks. Next, we present the formulation of the differential game-based DMPC problem.  
	\subsection{Kinematic Models of MRS under Malicious Attacks}
	Consider $M$ continuous-time, nonlinear, mobile robots. The kinematic model of the $i$-th robot is given as
\begin{equation}\label{kinematic-model}
    \begin{aligned}
        \dot q_i = [v_i \cos\theta_i, \;v_i \sin \theta_i,\;\omega_i,\;a_i]^{\top},
    \end{aligned}
\end{equation}
where $q_i =[p_{x,i},\;p_{y,i},\;\theta_i,\;v_i]^{\top}$, $i\in\mathbb{N}_1^M$, $(p_{x,i},\;p_{y,i})$ denotes the position of the $i$-th robot in the Cartesian frame, $\theta_i$ is the yaw angle of the $i$-th robot with respect to the Cartesian frame, and $ v_i$ depicts the linear velocity; $a_i$ and $\omega_i$ are the acceleration and yaw rate of the robot, respectively. 

In this article, our goal is to develop a unified, distributed secure learning control framework for MRS described in~\eqref{kinematic-model} under malicious actuator attacks. In the following, we first introduce a unified notation to represent the consensus error state of each local robot, applicable across various control tasks including general formation, affine formation, and containment as follows. \\
(a) General formation:\\
\begin{subequations}\label{eqn:uni_error}
    \begin{align}\label{error}
        x_i =& T_i\left(c_{\scriptscriptstyle i}(q_r-q_i+\delta_{ri}) +\sum_{j\in\mathcal{N}_i}  \Lambda_1(q_j - q_i + \delta_{ji})\right),
    \end{align}
(b)  Affine formation: 
    \begin{align}\label{error-AF}
         x_i =& T_i\left(\zeta_{ri}(q_r-q_i) +\sum_{j\in\mathcal{N}_i}  \Lambda_1\zeta_{ji}(q_j - q_i )\right),
    \end{align}
(c) Containment:
    \begin{align}\label{error-Con}
         x_i =& T_i\left(\sum_{j\in\mathcal{N}_i^l} (q_{r_j}-q_i) +\sum_{j\in\mathcal{N}_i}  \Lambda_1 (q_j - q_i )\right),
    \end{align}
\end{subequations}
where $\Lambda_1=\text{diag}\{1,1,0,0\}$, and
$$
T_i=\left[\begin{array}{cccc}
\cos\theta_i&\sin\theta_i&0\\
-\sin\theta_i&\cos\theta_i&0\\
0&0&I_2\\
\end{array}\right], $$
 $q_r$ denotes the state information of the leader; $\delta_{\star i}$, $\star=j,r$ denotes the deviation between the $j$-th robot or the leader and the $i$-th robot; $\mathcal{N}_i $ represents the set of neighbors of the $i$-the robot, which implies that the $i$-th robot can obtain the information from the $j$-th robot for $j\in\mathcal{N}_i$;  $c_{\scriptscriptstyle i}=1$ if the $i$-th robot can receive the leader's information, $c_{\scriptscriptstyle i}=0$ otherwise. 
 $\zeta_{ri}$ and $\zeta_{ji}$ are the element of the Laplacian matrix $L^*$ for the affine formation of the MRS.
The set $\mathcal{N}_i^l$ includes the leader robots, of which the information can be received by the $i$-th robot in containment control; $q_{rj}$ is the state information of the $j$-th leader, $j\in \mathcal{N}_i^l$.
 For convenience, we define the error state as $x_i=[x_{i,1},x_{i,2},x_{i,3},x_{i,4}]^{\top}$, where $x_{i,\star}$ denotes the $\star$-th element of $x_i$.
 
In each local robot~\eqref{kinematic-model}, a malicious actuator attack is imposed on the input channel to deteriorate the control performance (see Fig.~\ref{Attack_Scheme_Diagram}). Hence, with~\eqref{kinematic-model} 
and~\eqref{eqn:uni_error}, one can write
a unified dynamic evolution of the error state for each robot $i$ under attacks, i.e., 
\begin{equation}\label{Eqn:LL}
 \dot x_{i}=f_{i}(x_{\scriptscriptstyle \mathcal{N}_i})+g_{i}(x_i)(u_{s,i}+ \beta_{t,i} u_{a,i}),
\end{equation}
where $u_{s,i}=[w_r-w_i,a_r-a_i]^{\top}$ are the input variables associated with the $i$-th robot, $w_r$ and $a_r$ are the reference yaw rate and acceleration, $u_{a,i}$ is the malicious
attack signal designed by the adversary, and the combined control applied to the system is $u_{i}=u_{s,i}+ \beta_{t,i} u_{a,i}\in {\mathbb{R}}^{2}$; 
$\beta_{t,i}$ is a Bernoulli random variable; 
  $\beta_{t,i}=1$ if the actuator of the $i$-th robot is attacked, $\beta_{t,i}=0$ otherwise; $\text{Prob}\left(\beta_{t,i}=1\right) = \beta_{i},\;\beta_{i} \in [0,1]$; 
$x_{\scriptscriptstyle \mathcal{N}_i}\in\mathbb{R}^{n_{\mathcal{N}_i}}$ is the collection of all the neighboring states (including itself), that is, $x_{\scriptscriptstyle \mathcal{N}_i}={\rm col}_{j\in{\scriptscriptstyle\mathcal{N}_i}}x_j$;
the mappings $f_{i}: \mathbb{R}^{n_{\scriptscriptstyle \mathcal{N}_i}}\rightarrow\mathbb{R}^{4}$ and $g_{i}: \mathbb{R}^{4}\rightarrow \mathbb{R}^{4\times 2}$ are state transition and input mapping functions, respectively, and $f_{i}(0)=0$; the detailed descriptions of $f_{i}(x_{\scriptscriptstyle \mathcal{N}_i})$ for scenarios (a), (b), (c) are omitted here for clarity and can be found in Appendix~\ref{model-append}. 
\begin{remark}
Unlike attacks on sensors or communication networks which are often easier to identify and isolate~\cite{zhan2024resilient}, this work focuses specifically on a class of stealthy actuator attacks. {\color{black}Such attacks are not necessarily limited to low-magnitude noises; instead, they may be high-magnitude signals that preserve the dynamical consistency of nominal control signals and evolve with smooth temporal profiles to avoid triggering anomaly- or residual-based
detection mechanism.} To address the secure control problem under stealthy actuator attacks, we adopt a game-theoretic, distributed learning control approach that optimizes the defense policy against the worst-case attack strategy. 
\end{remark}
	
	Collecting all the local robot systems from~\eqref{Eqn:LL}, the overall centralized dynamical model is written as 
	\begin{equation}\label{Eqn:C full model} \dot x=F_c(x)+G_c(x)u,
	\end{equation}
	where $u=\text{col}_{i\in\mathbb{N}_1^M}(u_i)\in\mathbb{R}^{2M}$, $F_c=\text{col}_{i\in\mathbb{N}_1^M}(f_i)$, the diagonal blocks of $G_c$
are $g_{i}$, $i\in\mathbb{N}_1^M$. 
 We also give the following standard assumption on the stabilizing control policy for stability analysis.
	\begin{assumption}[Structured stabilizing control~\cite{conte2016distributed}]\label{assum:stabilizing_control}
		There exists a feedback control policy $u_i(x_{\scriptscriptstyle\mathcal{N}_i})$  for all $i\in\mathbb{N}_1^M$, such that $u={\rm col}_{i\in\mathbb{N}_1^M}(u_i)$ is a stabilizing feedback control policy of~\eqref{Eqn:C full model}. 
	\end{assumption}
	\begin{figure*}[!htpb]
					\centering
     \includegraphics[width=1.6\columnwidth]{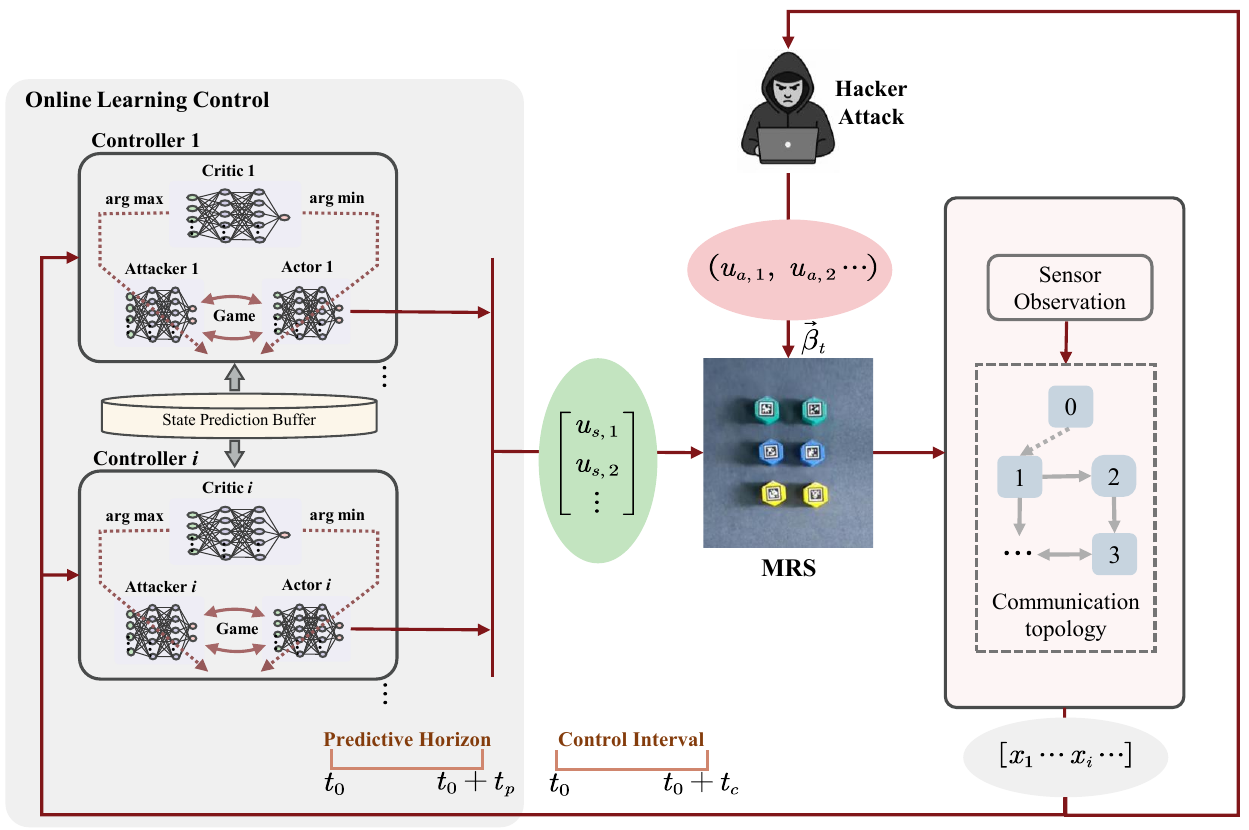}
				\caption{\mc{The scheme diagram of the distributed secure learning control (DSLC) framework. The attackers that deteriorate
the control performance and the defenders that secure
the closed-loop system are modeled as multi-players
in a differential-game based DMPC framework. The resulting problem is solved by a computationally efficient attacker-actor-critic learning algorithm, which generates the closed-loop control policies for the
attacker and defender. $\vec\beta_t=(\beta_{t,1},\cdots,\beta_{t,M})$}.\vspace{-2mm}}
				\label{Attack_Scheme_Diagram}
\end{figure*}
	\subsection{\mc{Differential-game based} DMPC Problem}
	 The secure control problem under malicious actuator attacks is formulated as a zero-sum differential game problem made by the defender and the attacker, within a continuous-time DMPC framework. At a generic time instant $t_0$, the following finite-horizon min-max optimization problem is solved: 
	\begin{equation}\label{Eqn:optimiz}
	\begin{array}{ccc}
	\min & \max & {J(x(t_0)))}\\
	{\bm u_{s,i}(t_0), \forall i\in\mathbb{N}_1^M}& {\bm u_{a,i}(t_0), \forall i\in\mathbb{N}_1^M}&\\
	\end{array},
	\end{equation}
	where $\bm u_{s,i}(t_0)=u_{s,i}(t_0:t_0+t_p)$, $\bm u_{a,i}(t)=u_{a,i}(t_0:t_0+t_p)$, $t_p>0$ is the prediction horizon, the global cost $J(x(t_0))=\sum_{i=1}^MJ_i(x_{\scriptscriptstyle \mathcal{N}_i}(t_0))$ with the local cost being defined as
	\begin{equation}\label{Eqn:HL_cost}
	\begin{array}{cl}
J_i(x_{\scriptscriptstyle \mathcal{N}_i}(t_0))=\mathbb{E}\left\{\int_{\tau=t_0}^{t_0+t_p}r_i(x_{\scriptscriptstyle \mathcal{N}_i}(\tau),u_{s,i}(\tau),u_{a,i}(\tau))d\tau\right.\vspace{1mm}\\
	 \hspace{30mm}+\left.\| x_i(t_0+t_p)\|_{P_i}^{2} \right\},
	\end{array}
	\end{equation}	
	the stage cost 
	$$r_i(x_{\scriptscriptstyle \mathcal{N}_i},u_i,u_{a,i})=\| x_{\scriptscriptstyle \mathcal{N}_i}\|_{Q_i}^{2}+\| u_{s,i}\|_{R_{s,i}}^{2}-\beta_{t,i} \|u_{a,i}\|^2_{R_{a,i}};$$
	 $Q_i=Q_i^{\top}\in\mathbb{R}^{n_{\scriptscriptstyle \mathcal{N}_i}\times n_{\scriptscriptstyle \mathcal{N}_i}}$, $Q_i\succ 0$, $R_{s,i}=R_{s,i}^{\top}\in\mathbb{R}^{2\times 2}$, $R_{a,i}=R_{a,i}^{\top}\in\mathbb{R}^{2\times 2}$, $R_{s,i},R_{a,i}\succ 0$, $i\in\mathbb{N}_1^M$; the matrix $P_i=P_i^{\top}\in\mathbb{R}^{4\times 4}$ is the terminal penalty matrix, $P_i\succ 0$. {\color{black}Note that, the maximization operator in problem~\eqref{Eqn:optimiz} models the adversary’s goal of maximizing cost~\eqref{Eqn:HL_cost} through an attack policy, while the minimization operator reflects the defender’s strategy of minimizing cost~\eqref{Eqn:HL_cost} against the worst-case attack.}

    In traditional DMPC, at a time instant $t_0$, the optimal control and attack sequence in the prediction interval can be calculated
using distributed optimization tools~\cite{conte2016distributed}. Only the defense and attack actions associated with the time interval $[t_0,t_0+t_c)$ are applied and the associated actions are calculated repeatedly at the next time $t_0+t_c$.
This paradigm requires each local robot to call nonlinear numerical solvers at each time interval, which is computationally intensive in the game-theoretic optimization framework for large-scale systems. In the sequel, we propose a computationally efficient distributed secure learning control algorithm that online learns the closed-loop policies for defenders and attackers. 
\section{Distributed Secure Learning Control Framework}   
In this section, the differential game-based DMPC problem is solved by our DSLC approach (see Fig.~\ref{Attack_Scheme_Diagram}). Unlike numerical optimization-based tools that online calculate open-loop defense and attack sequences, our DSLC approach simultaneously learns the analytic closed-loop defense policies, i.e., $u_{s}=\text{col}_{i\in\mathbb{N}_1^M}(u_{s,i}(x_{\scriptscriptstyle \mathcal{N}_i}))$ and the worst-case attack policies, i.e., $u_{a}=\text{col}_{i\in\mathbb{N}_1^M}(u_{a,i}(x_{\scriptscriptstyle \mathcal{N}_i}))$. In each prediction interval, the policy learning is executed in a forward-in-time way, which substantially reduces the computation load compared to the backward-in-time way in dynamic programming~\cite{zhang2025toward}.  
In what follows, a game-theoretic finite-horizon distributed policy learning algorithm is firstly introduced  with the closed-loop stability guarantee. Then, our approach is implemented by an efficient distributed attacker-actor-critic algorithm.
%
\subsection{Game-theoretic Finite-horizon Distributed Policy Learning} 
In what follows, our objective is to generate closed-loop policies for the differential game-based DMPC~\eqref{Eqn:optimiz} via a distributed policy learning approach. {\color{black}As noted in~\cite{kirk2004optimal}, a critical solution is to reformulate~\eqref{Eqn:optimiz} as the optimization of an overall Hamiltonian function in the prediction interval $t\in[t_0,t_0+t_p]$, that is} 
\begin{equation}\label{eqn:hailm}
\begin{array}{ll}
H(x(t),u_{s}(t),u_{a}(t),\nabla J\big(x(t)))= \\
\hspace{25mm}\mathbb{E}\left\{r(t)+\nabla J\big(x(t))^{\top}\dot x(t)\right\},
\end{array}
\end{equation}
where $r(t)=\sum_{i=1}^M r_i(t)$, $\nabla J\big(x(t))$ is the gradient of $J\big(x(t))$ with respect to $x$, and $J(x(t))=\sum_{i=1}^MJ_i(x_{\scriptscriptstyle \mathcal{N}_i}(t))$.
The optimal value function $J^{\ast}\big(x(t))$ satisfies the expected Hamilton–Jacobi–Isaacs (HJI) equation, i.e., 
\begin{equation}\label{Eqn:hjb}
    0=\min_{u_{s}(t)}\max_{u_{a}(t)}\mathbb{E}\left\{H(x(t),u_{s}(t),u_{a}(t),\nabla  J^{\ast}\big(x(t)))\right\}.
\end{equation}
Note that the local defense policy $u_{s,i}(x_{\scriptscriptstyle \mathcal{N}_i})$ and attack policy $u_{a,i}(x_{\scriptscriptstyle \mathcal{N}_i})$  of the $i$-th robot have direct impacts on the performance of the local costs $J_j^{\ast}\big(x_{\scriptscriptstyle {\mathcal{N}}_j})$ for all $j\in\bar{\mathcal{N}}_i$, where $\bar{\mathcal{N}}_i$ is the collection of local robots that share the $i$-th robot as one of their neighbors. Hence, for each robot $i$ in $t \in [t_0, t_0+t_p]$, the optimal policies of the defender and attacker can be calculated using the first-order optimality condition, that is,
\begin{subequations}\label{Eqn:optimal}
\begin{align}
 u_{a,i}^{\ast}(t)&=\frac{1}{2}R_{a,i}^{-1}\sum_{j\in\bar{\mathcal{N}}_i} g_i(x_i(t))^{\top}\nabla J_j^{\ast}\big(x_{{\scriptscriptstyle \mathcal{N}}_j}(t))\\
 u_{s,i}^{\ast}(t)&=-\frac{1}{2}R_{s,i}^{-1}\sum_{j\in\bar{\mathcal{N}}_i} g_i(x_i(t))^{\top}\nabla J_j^{\ast}\big(x_{\scriptscriptstyle {\mathcal{N}}_j}(t)),
\end{align}
\end{subequations}
$i\in\mathbb{N}_1^M$.
Substituting $u_{a,i}^{\ast}(t)$ and $ u_{s,i}^{\ast}(t)$ into the HJI equation leads to a partial differential equation on the optimal value function $J_j^{\ast}$, which is difficult to solve analytically. 
\begin{remark}
Note that the connection between the optimal defense and attack policies and the optimal local cost gradient $\nabla J_j^{\ast}\big(x_{\scriptscriptstyle {\mathcal{N}}_j}(t)\big)$ in equation~\eqref{Eqn:optimal} enables the design of a secure, distributed policy learning algorithm, where the defense and attack policies for the $i$-th robot can be computed using only local information from neighboring states.
\end{remark}

\emph{Game-theoretic distributed policy iteration}: In each prediction interval, a finite-horizon distributed policy iteration approach is designed to iteratively solve the game-theoretic HJI equation~\eqref{Eqn:hjb}. {\color{black}A typical policy iteration algorithm consists of two iterative procedures: policy evaluation and policy improvement~\cite{modares2014optimal}. Note that, during policy evaluation, the Hamiltonian function in~\eqref{eqn:hailm} contains the information of $\dot{x}$, which could be difficult to measure and noisy.} Hence, following the integral RL approach in \cite{vamvoudakis2014online}, we note for a time interval $T=t_p/N$ ($N\in\mathbb{N}$) and for any $t\in[t_0,\,t_0+t_p-T]$, the cost function satisfies
\begin{equation}\label{HJB-IRL}
J\big(x(t))=\mathbb{E}\left\{\int_{\tau=t}^{t+T}r(\tau)d\tau\right\}+J\big(x(t+T)).
\end{equation}
{\color{black}Equation~\eqref{HJB-IRL} allows us to perform policy evaluation in discrete time instants $t=t_0, t_0+T,\cdots, t_0+NT$, which enables efficient implementation while preserving prediction accuracy through the integral formulation over the interval $T$.}

Given an initial admissible control policy $u^0_{s,i}(x_{{\scriptscriptstyle\mathcal{N}}_i}(t))$ and $u^0_{a,i}(x_{{\scriptscriptstyle\mathcal{N}}_i}(t))$. The policy iteration at the iterative step $h$ for each robot $i$ is performed at each time step $t=t_0, t_0+T,\cdots, t_0+NT$.\\
(\romannumeral1) Policy evaluation:\\
\begin{equation}\label{policy-evaluation}
\begin{array}{ll}
\hspace{-2mm}\mathbb{E}\{\int_{\tau=t}^{t+T}r_i^h(\tau)d\tau\}+J_i^h\big(x_{{\scriptscriptstyle\mathcal{N}}_i}(t+T),u^h_{s,i}(t+T),u^h_{a,i}(t+T)\big)\vspace{1mm}\\
-J_i^h\big(x_{\scriptscriptstyle{\mathcal{N}}_i}(t),u^h_{s,i}(t),u^h_{a,i}(t)\big)=0.
\end{array}
\end{equation}
(\romannumeral2) Policy improvement:\\
\begin{subequations}\label{policy-improvement}
\begin{align}
 u_{a,i}^{h}(t)&=\frac{1}{2}R_{a,i}^{-1}\sum_{j\in\bar{\scriptscriptstyle\mathcal{N}}_i} g_i(x_i(t))^{\top}\nabla J_j^h\big(x_{{\scriptscriptstyle\mathcal{N}}_j}(t)),\vspace{1mm}\\
 u_{s,i}^{h}(t)&=-\frac{1}{2}R_{s,i}^{-1}\sum_{j\in\bar{\scriptscriptstyle\mathcal{N}}_i} g_i(x_i(t))^{\top}\nabla J_j^h\big(x_{{\scriptscriptstyle\mathcal{N}}_j}(t)).
\end{align}
\end{subequations}

Note that, through the formulation in~\eqref{HJB-IRL}, the min-max optimization problem~\eqref{Eqn:optimiz} over the prediction interval $[t_0,\,t_0+t_p]$ is now decomposed into $N$ subproblems. These subproblems are solved sequentially at each time step $t=t_0, t_0+T,\cdots, t_0+NT$ through the distributed policy iteration procedures~\eqref{policy-evaluation} and \eqref{policy-improvement}, which are executed in a heuristic, forward-in-time way.

\emph{Terminal penalty design:} {\color{black} A proper choice of $P_i$ in~\eqref{Eqn:optimiz} is crucial for the stability guarantee of the closed-loop system under the receding horizon optimization principle.} To this end, we first linearize model \eqref{Eqn:LL}  around the origin, i.e.,
$   \dot x_i = A_{\scriptscriptstyle\mathcal{N}_i} x_{\scriptscriptstyle\mathcal{N}_i} + B_i(u_{s,i} + \beta_{t,i} u_{a,i}) + \phi_i(x_{\scriptscriptstyle\mathcal{N}_i},u_{a,i},u_{s,i})$, where $\phi_i(x_{\scriptscriptstyle\mathcal{N}_i},u_{a,i},u_{s,i})\in\mathbb{R}^{n_{\scriptscriptstyle\mathcal{N}_i}}$ is the linearization error and $A_{\scriptscriptstyle\mathcal{N}_i}\in\mathbb{R}^{n_{\scriptscriptstyle\mathcal{N}_i}\times n_{\scriptscriptstyle\mathcal{N}_i}}$ and $B_{i}\in\mathbb{R}^{n_{\scriptscriptstyle\mathcal{N}_i}\times m_i}$ are the model parameters. 
According to \cite{hassan2002nonlinear}, $\lim_{(x_{\scriptscriptstyle\mathcal{N}_i},  u_{a,i},u_{s,i})\rightarrow 0}  
\phi_i(x_{\scriptscriptstyle\mathcal{N}_i},u_{a,i},u_{s,i})/(x_{\scriptscriptstyle\mathcal{N}_i},u_{a,i},u_{s,i})
 \rightarrow 0$, and there exists $L_\phi>0$ such that $\|\phi_i\| \le L_\phi \|x_{\scriptscriptstyle\mathcal{N}_i}\|$ for $x_{\scriptscriptstyle\mathcal{N}_i}\rightarrow 0$.
Let $ u_{s,i} = K_{\scriptscriptstyle \mathcal{N}_{s,i}} x_{\scriptscriptstyle\mathcal{N}_i}$, $ u_{a,i} = K_{\scriptscriptstyle \mathcal{N}_{a,i}} x_{\scriptscriptstyle\mathcal{N}_i}$ be candidate terminal control policies, where $K_{\scriptscriptstyle \mathcal{N}_{s,i}},\, K_{\scriptscriptstyle \mathcal{N}_{a,i}}\in\mathbb{R}^{m_i\times n_{\mathcal{N}_i}}$ are gain matrices. 
The terminal penalty matrix $P_i$ along with $K_{\scriptscriptstyle \mathcal{N}_{s,i}}$ and $ K_{\scriptscriptstyle \mathcal{N}_{a,i}}$ are designed to satisfy the following inequality, i.e., 
\begin{equation}\label{Lyapunov-equation}
\begin{aligned}
    &A_{\scriptscriptstyle \mathcal{N}_i}^{\top} P_{i} T_{\scriptscriptstyle \mathcal{N}_i} +   T_{\scriptscriptstyle \mathcal{N}_i}^{\top} P_i A_{\scriptscriptstyle \mathcal{N}_i} + 2 T_{\scriptscriptstyle \mathcal{N}_i}^{\top} P_i  B_{i} K_{\scriptscriptstyle \mathcal{N}_{s,i}} \\
      &\quad \quad \quad \quad \quad +  2\beta_{i}  T_{\scriptscriptstyle \mathcal{N}_i}^{\top} P_i B_i K_{\scriptscriptstyle \mathcal{N}_{a,i}} + Q_i^*  \le - \tau_{\scriptscriptstyle P} I + \Gamma_{\scriptscriptstyle \mathcal{N}_i},
\end{aligned}
\end{equation}
where $T_{\scriptscriptstyle \mathcal{N}_i}\in\mathbb{R}^{4\times4n_{\mathcal{N}_i}} $ is defined as the selective matrix with $ x_i = T_{\scriptscriptstyle \mathcal{N}_i} x_{\scriptscriptstyle \mathcal{N}_i}$, $\tau_{\scriptscriptstyle P} I \ge 2 T_{\scriptscriptstyle\mathcal{N}_i}^{\top} P_i L_{\phi_i}$, $\sum_{i=1}^M \Gamma_{\scriptscriptstyle \mathcal{N}_i}\leq 0$, $Q_i^* =  Q_i +  K_{\scriptscriptstyle \mathcal{N}_{s,i}}^{\top} R_{s,i} K_{\scriptscriptstyle \mathcal{N}_{s,i}} - \beta_{i} K_{\scriptscriptstyle \mathcal{N}_{a,i}}^{\top} R_{a,i} K_{\scriptscriptstyle \mathcal{N}_{a,i}}$. 
\begin{remark}
Note that the choices of $R_{s,i}$ and $R_{a,i}$ are critical to satisfy~\eqref{Lyapunov-equation}. As seen from~\eqref{Eqn:optimal}, $R_{s,i}$ should be set with a smaller magnitude relative to $R_{a,i}$ in order to yield a stronger defense input (i.e., higher input energy contribution from the defender). However, from~\eqref{Lyapunov-equation}, $R_{a,i}$ cannot be arbitrarily large; it must be selected such that $Q_i^* >0$ and~\eqref{Lyapunov-equation} are fulfilled.
The terminal penalty satisfying~\eqref{Lyapunov-equation} plays a critical role in establishing recursive feasibility and closed-loop stability within the distributed policy iteration framework. For detailed theoretical results and associated proofs, please refer to Appendix~\ref{appB}.\end{remark}
%
%
\subsection{Distributed Attacker-Actor-Critic Implementation}\label{sec:aclearning} 
Note that the distributed policy iteration procedure in~\eqref{policy-evaluation} and~\eqref{policy-improvement} is not ready for online rapid implementation due to the computationally intensive procedure~\eqref{policy-evaluation}.  
In what follows,  we present a game-theoretic, distributed attacker-actor-critic architecture to solve~\eqref{policy-evaluation} and~\eqref{policy-improvement} efficiently in each prediction interval. In the architecture, $M$ attacker-actor-critic network pairs are designed for MRS to generate near-optimal control policies with local information. Each pair consists of an attacker, an actor, and a critic for each robot. The critic network approximates the robot's value function, whereas the defense and attack policies are approximated by the actor and attacker networks, respectively. 
%

The critic neural network for the $i$-th robot is designed as
\begin{equation}\label{critic-network}
   \hat J_i(x_{\scriptscriptstyle{\mathcal{N}}_i}(t)) =  \hat W_{c,i}^{\top}  \varphi_{c,i}(x_{\scriptscriptstyle{\mathcal{N}}_i}(t)),
\end{equation}
where $\hat J_i $ is the estimation value of $ J_i^* $, $\hat W_{c,i}\in\mathbb{R}^{N_{c,i}}$ and $\varphi_{c,i} (x_{\scriptscriptstyle{\mathcal{N}}_i}(t))\in\mathbb{R}^{N_{c,i}}$ denote the  weight matrix and the activation function for the $i$-th robot, respectively. {\color{black}With the critic network~\eqref{critic-network} for value function approximation in~\eqref{HJB-IRL}}, one defines the approximation error as follows:
\begin{equation}\label{error-HJB-IRL-1}
\begin{aligned}
     \zeta_{i}(t)=\mathbb{E}\left\{ \int_{\tau=t}^{t+T}\hat r_i(\tau)d\tau\right\} + \hat W_{c,i}^{\top} & \varphi_{c,i}(x_{\scriptscriptstyle{\mathcal{N}}_i}(t+T))\\
     &-\hat W_{c,i}^{\top}\varphi_{c,i}(x_{\scriptscriptstyle{\mathcal{N}}_i}(t)),\\
\end{aligned}
\end{equation}
where $\hat r_i = \| x_{\scriptscriptstyle{\mathcal{N}}_i}\|_{Q_i}^{2}+\| \hat u_{s,i}\|_{R_{s,i}}^{2}-\beta_{t,i}\|\hat u_{a,i}\|^2_{R_{a,i}}$. 
Minimizing the loss function $E_{c,i} = \frac{1}{2} \|\zeta_{i}\|^2, \, \forall i\in\mathbb{N}$, is essential for obtaining an accurate approximation of the value function, which leads to the following update law for $\hat W_{c,i}$:
\begin{equation}\label{update-law-critic}
\begin{aligned}
	  \dot{\hat{W}}_{c,i}=&-\eta_{c,i} \frac{\Delta\varphi_{c,i}}{(1+\Delta\varphi_{c,i}^{\top}\Delta\varphi_{c,i})^2}\mathbb{E}
  \left\{\int_{t}^{t+T}\hat r_i(\tau)d\tau\right\}\\
  &-\eta_{c,i} \frac{\Delta\varphi_{c,i}}{(1+\Delta\varphi_{c,i}^{\top}\Delta\varphi_{c,i})^2} \Delta\varphi_{c,i}^{\top} \hat W_{c,i},\\  
\end{aligned}
\end{equation} 
where $\eta_{c,i}>0$ is the learning rate, and  $\Delta \varphi_{c,i} = \varphi_{c,i}(x_{\scriptscriptstyle{\mathcal{N}}_i}(t+T)) - \varphi_{c,i}(x_{\scriptscriptstyle{\mathcal{N}}_i}(t))$.

Likewise, the attacker and actor networks for constructing the explicit defense and attack policies of each robot $i$ are given as
\begin{subequations}\label{actor-networks}
\begin{align}
\hat u_{a,i}(t)&=\hat W_{a,i}^{\top} \varphi_{a,i}(x_{\scriptscriptstyle{\mathcal{N}}_i}(t)),\label{eqn:attack-hat}\\
\hat u_{s,i}(t)&=\hat W_{s,i}^{\top} \varphi_{s,i}(x_{\scriptscriptstyle{\mathcal{N}}_i}(t)),\label{eqn:actor-hat}
\end{align}
\end{subequations}
where $\hat u_{a,i}$ and $\hat u_{s,i}$ are the estimations of $u_{a,i}^*$ and $u_{s,i}^*$, $\hat W_{a,i}\in\mathbb{R}^{N_{a,i}\times m_i}$ and $\hat W_{s,i}\in\mathbb{R}^{N_{s,i}\times m_i}$ are the weight matrices, $\varphi_{a,i}(x_{\scriptscriptstyle{\mathcal{N}}_i}(t))$ and $\varphi_{s,i}(x_{\scriptscriptstyle{\mathcal{N}}_i}(t))$ denote the activation functions. 

{\color{black}To approximate the optimal attack and defense policies with~\eqref{actor-networks}}, the following loss functions are defined, respectively, as
\begin{subequations}\label{objective function of actor}
\begin{align}
E_{a,i}&=\frac{1}{2}\|\rho_{u_{a,i}^d} - \hat \rho_{\hat u_{a,i}}\|^2,\\
E_{s,i}&=\frac{1}{2}\|\rho_{u_{s,i}^d} - \hat \rho_{\hat u_{s,i}}\|^2,
\end{align}
\end{subequations}
with
\begin{equation*}
    \begin{array}{ll}
        \rho_{u_{a,i}^d}&=  \frac{1}{2} R_{a,i}^{-1}\sum_{j\in\bar{\mathcal{N}}_i} g_i^{\top}(x_i)\triangledown\varphi_{c,j}^{\top} \hat W_{c,j},\\
         \rho_{u_{s,i}^d}&=-\frac{1}{2}R_{s,i}^{-1}\sum_{j\in\bar{\mathcal{N}}_i} g_i^{\top}(x_i) \triangledown\varphi_{c,j}^{\top} \hat W_{c,j},\\
        \hat \rho_{\hat u_{a,i}}&=\hat W_{a,i}^{\top} \varphi_{a,i},\;\hat \rho_{\hat u_{s,i}}=\hat W_{s,i}^{\top} \varphi_{s,i},
    \end{array}
\end{equation*}
where $\rho_{u_{a,i}^d} $ and $ \rho_{u_{s,i}^d}$ are the desired control inputs generating by the critic networks of the $i$-th robot.
Subsequently, the update laws for the attacker and actor networks are given as
\begin{subequations}\label{update-law-actor}
    \begin{align}
    \dot{\hat W}_{a,i}&= \eta_{a,i} \varphi_{a,i}(\rho_{u_{a,i}^d} - \hat \rho_{\hat u_{a,i}})^{\top},\label{update-attack}\\
    \dot{\hat W}_{s,i}&= \eta_{s,i} \varphi_{s,i}(\rho_{u_{s,i}^d} - \hat \rho_{\hat u_{s,i}})^{\top},
    \end{align}
\end{subequations}
where $\eta_{a,i},\, \eta_{s,i}>0$ are the learning rates. 
\renewcommand{\algorithmicrequire}{\textbf{Notations:}}
\begin{algorithm}[h]
	\label{algorithm 1}
	{\caption{{DSLC for MRS under actuator attacks}}
		\begin{algorithmic}[1]
			\REQUIRE {$h_{m}$: the maximum iterations; $ t_p $, $ t_c $: the prediction horizon and control implementing interval; $t_{\scriptscriptstyle \text{end}}$: the simulation time; 
            $ T$: the simulation step size.}
			\STATE Initialize $ \hat W_{c,i} $, $ \hat W_{s,i} $, $\hat W_{a,i}$, $i\in\mathbb{N}_1^M$, $ x(0)$; 
			\WHILE{ $t_0 \le t_{\scriptscriptstyle \text{end}}$ }
			\WHILE{ $h \le h_m $ }
			\STATE $t=t_0$;
			\WHILE{$t\le t_0+t_p$}
			\STATE Calculate $ \hat u_{s,i},\;\hat u_{a,i}$, $i\in\mathbb{N}_1^M$ via~\eqref{actor-networks}; 
			\STATE Calculate ${\hat W}_{c,i}$, $i\in\mathbb{N}_1^M$, via~\eqref{update-law-critic}; // Updated for each $T$ step size. 
			\STATE Calculate ${\hat W}_{s,i},\;{\hat W}_{a,i}$, $i\in\mathbb{N}_1^M$, via~\eqref{update-law-actor}; // Updated for each $T$ step size.
			\ENDWHILE
			\STATE $h=h +1$;
			\ENDWHILE
			\STATE $ t=t_0$;
			\WHILE {$ t \le t_0 +t_c \; (t_c= Z T,\; Z \in \mathbb{N_+})$}
			\STATE Apply $\hat u_{s,i}$, $i\in\mathbb{N}_1^M$ with~\eqref{actor-networks};
			\STATE Update state $x$ with~\eqref{Eqn:C full model};
			\ENDWHILE     
			\STATE $ t_0=t_0+t_c $;
			\ENDWHILE			
		\end{algorithmic}}
	\end{algorithm}
    
In summary,  the pseudocode of the distributed attacker-actor-critic algorithm in each predictive interval is presented in Algorithm \ref{algorithm 1}. The learning convergence and closed-loop stability under the distributed attacker–actor–critic learning framework are established but omitted here for brevity. Please refer to Appendix~\ref{app:closed-loop} for details.

\begin{remark}
In each prediction interval $[t_0,t_0+t_p]$, the min-max DMPC problem~\eqref{Eqn:optimiz} is solved using the computationally efficient update laws~\eqref{update-law-critic} and~\eqref{update-law-actor}, which are executed forward in time at discrete instants $t_0,\, t_0+T,\, \cdots, t_0+NT$. Unlike traditional numerical optimization methods that solve the optimization problem in each prediction interval independently, our approach continuously updates the weights of the actor, attacker, and critic over successive prediction intervals. This enables real-time policy generation with reduced computational load and improved learning efficiency. 
\end{remark}
	
%
\begin{table*}[h]
  \centering
  \caption{Criterion for the Accessment of the Distributed Control Performance under Different Attacks.}
\mc{   \begin{tabular}{ccccccccccc}
    \toprule
    \multirow{2}{*}{Attacks}&\multirow{2}{*}{Methods} & \multicolumn{3}{c}{Two Robots} & \multicolumn{3}{c}{Four Robots} & \multicolumn{3}{c}{Six Robots} \\
\cmidrule{3-11}    \multicolumn{2}{c}{} & \multicolumn{1}{c}{IAE} & \multicolumn{1}{c}{ITAE} & \multicolumn{1}{c}{$J_c$} & \multicolumn{1}{c}{IAE} & \multicolumn{1}{c}{ITAE} & \multicolumn{1}{c}{$J_c$} & \multicolumn{1}{c}{IAE} & \multicolumn{1}{c}{ITAE} & \multicolumn{1}{c}{$J_c$} \\
    \midrule
    \multirow{3}[2]{*}{$\beta=0.1$} & DSLC  &  \textbf{0.57}&	\textbf{0.47}&	\textbf{3.35}&	\textbf{0.82}&	\textbf{0.8}&	\textbf{4.13}&	\textbf{0.93}&	\textbf{1.22}&	\textbf{7.09}\\
          & DRHRL~\cite{zhang2025toward}     &  7.04&	46.14&	44.45&	11.49&	101.41&	39.52&	25.65&	255.22&	249.43\\
          & DRMPC~\cite{zhou2022event}    & 19.56&	206.85&	23.68&	38.51&	546.78&	127.14&	89.83&	1014.86&	470.44\\
    \midrule
\multirow{3}[2]{*}{$\beta=0.2$} & DSLC     &  \textbf{0.65}&	\textbf{0.5}&	\textbf{3.36}&	\textbf{0.73}&	\textbf{1.42}&	3.84&	\textbf{0.78}&	\textbf{1.15}&	\textbf{6.01}\\
          & DRHRL~\cite{zhang2025toward}     &   17.63&	172.64&	283.94&	28.45&	214.12&	187.36&	24.34&	241.9&	277.96 \\
          & DRMPC~\cite{zhou2022event}    &  63.53&	761.76&	267.73&	6.25&	64.06&	\textbf{2.9}&	106.04&	1286.68&	719.28 \\
    \midrule
    \multirow{3}[2]{*}{$\beta=0.3$} & DSLC     & \textbf{0.73}&	\textbf{0.37}&	\textbf{3.07}&\textbf{0.92}&\textbf{1.54}&	\textbf{5.33}&	\textbf{0.89}&	\textbf{2.1}&	\textbf{5.34}\\
          & DRHRL~\cite{zhang2025toward}     &   18.79&	136.1&	103.19&	14.08&	59 &	145.24&	20.57&	203.65&	68.4\\
          & DRMPC~\cite{zhou2022event}     &   90.75&	930.27&	507.33&	83.44&	1020.81&	472.49&	119.47&	1418.92&	911.39 \\
    \midrule
    \multirow{3}[2]{*}{$\beta=0.4$} & DSLC     & \textbf{0.64}&	\textbf{0.31}&	4.4&	\textbf{0.53}&	\textbf{0.71}&	\textbf{3.2}&	\textbf{1}&	\textbf{2.56}&	\textbf{3.92}\\
          & DRHRL~\cite{zhang2025toward}     &   8.94&	68.81&	75.32&	14.77&	135.16&	50	&21.91&	110.22&	192.65 \\
          & DRMPC~\cite{zhou2022event}     &   4.43&	43.92&	\textbf{1.43}&	22.13&	204.24&	38.51&	141.02&	1618.08&	1198.06\\
    \midrule
    \multirow{3}[2]{*}{$\beta=0.5$} & DSLC   &   \textbf{0.62}&	\textbf{0.35}&	5.11&	\textbf{0.81}&	\textbf{1.96}&	\textbf{2.82}&	\textbf{1.31}&	\textbf{4.46}&	\textbf{5.47}\\
          & DRHRL~\cite{zhang2025toward}     &      22.54&	219.36&	256.33&	7.12&	46.89&	19.6&	20.26&	198.34&	138.21 \\
          & DRMPC~\cite{zhou2022event}     &    5.64&	55.29&	\textbf{2.25}&	8.1&	106.69&	6.15&	142.06&	1866.72&	1460.03\\
    \midrule
    \multirow{3}[2]{*}{$\beta=0.6$} & DSLC    & \textbf{0.56}&	\textbf{0.59}&	1.57&	\textbf{0.69}&	\textbf{1.54}&	\textbf{3.18}&\textbf{	1.21}&	\textbf{3.13}&	\textbf{6.13}\\
          & DRHRL~\cite{zhang2025toward}     &    17.2&	160.01&	97.27&	26.3&	275.65&	326.65&	21.77&	207.97&	441.99\\
          & DRMPC~\cite{zhou2022event}     &   2.07&	8.96&	\textbf{0.74}&	170.55&	2156.23&	1914.15&	157.42&	2070.78&	1747.56\\
    \midrule
    \multirow{3}[2]{*}{$\beta=0.7$} & DSLC     & \textbf{0.6}&	\textbf{0.32}&	\textbf{4.18}&	\textbf{0.86} &	\textbf{1.35}&	\textbf{5.02}&	\textbf{0.97}&	\textbf{2.65}&	\,{\textbf{4.38}}\\
          & DRHRL~\cite{zhang2025toward}      &   44.13&	243.14&	442.89&	20.73&	198.72&	164.22&	20.42&	199.1&	309.31\\
          & DRMPC~\cite{zhou2022event}     &    130.73&	1590.58&	1099.49&	17.53&	201.57&	18.59&	258.31&	3071.1&	4011.04\\
    \midrule
    \multirow{3}[2]{*}{$\beta=0.8$} & DSLC    &\textbf{ 0.58}&	\textbf{0.3}&	\textbf{4.87}&	\textbf{0.98}&	\textbf{2.59}&	\textbf{3.14}&	\textbf{0.98}&	\textbf{3.11}&	\textbf{5.53}\\
          & DRHRL~\cite{zhang2025toward}     &   26.38&	158.87&	114.48&	41.83&	304.79&	343.37&	21.47&	186.9&	60.76\\
          & DRMPC~\cite{zhou2022event}     &   101.18&	1438.59&	884.88&	54.91&	771.02&	251.79&	284.34&	3327.38&	4833.78\\
    \midrule
    \multirow{3}[2]{*}{$\beta=0.9$} & DSLC     &  \textbf{0.59}&	\textbf{0.35}&	\textbf{7.1}&\textbf{	0.81}&	\textbf{1.58}&	\textbf{5.19}&\textbf{1.36}&	\textbf{4.92}&	\textbf{4.95}\\
          & DRHRL~\cite{zhang2025toward}      &   22.49&	224.52&	360.27&	30.49&	328.23&	449.88&	63.04&	665.82&	815.02\\
          & DRMPC~\cite{zhou2022event}     &    118.74&	1640.72&	1136.03&	96.59&	1174.04&	664.81&	79.17&	932.58&	466.19\\
    \midrule
    \multirow{3}[2]{*}{$\beta=1$} & DSLC    &   \textbf{0.61}&	\textbf{0.28}&	\textbf{3.69}&	\textbf{0.61}&	\textbf{1.06}&	\textbf{3.03}&	\textbf{1.34}&	\textbf{3.75}&	\textbf{7.5}\\
          & DRHRL~\cite{zhang2025toward}      &    94.1&	792.94&	992.42&	39.67&	447.36&	321.14&	25.1&	304.52&	321.6\\
          & DRMPC~\cite{zhou2022event}     &  39.8&	590.23&	157.94&	21.76&	323.9&	48.84&	17.6&	179.69&	18.76\\
    \bottomrule
    \end{tabular}}
  \label{access-control-performance}%
\end{table*}%
 
 \begin{figure}[http]
	\centering
	\includegraphics[width=0.8\columnwidth]{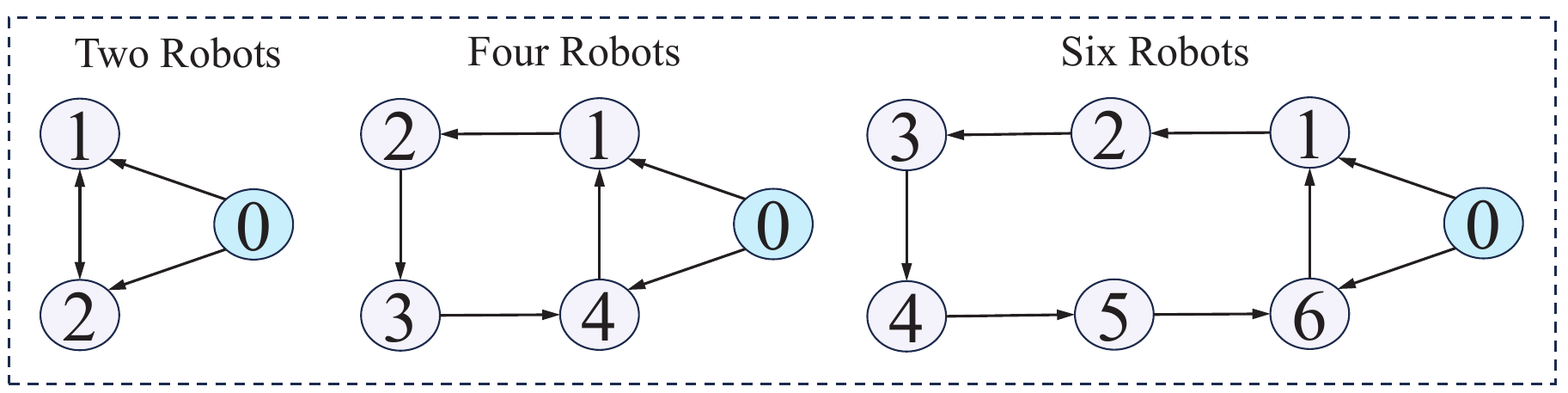}
	\caption{
		\mc{Communication graphs of multirobots, where Robot 0 is the virtual leader.}}
	\label{communication-graph}
\end{figure}

\begin{figure}[h]
					\centering
     \includegraphics[width=0.9\columnwidth]{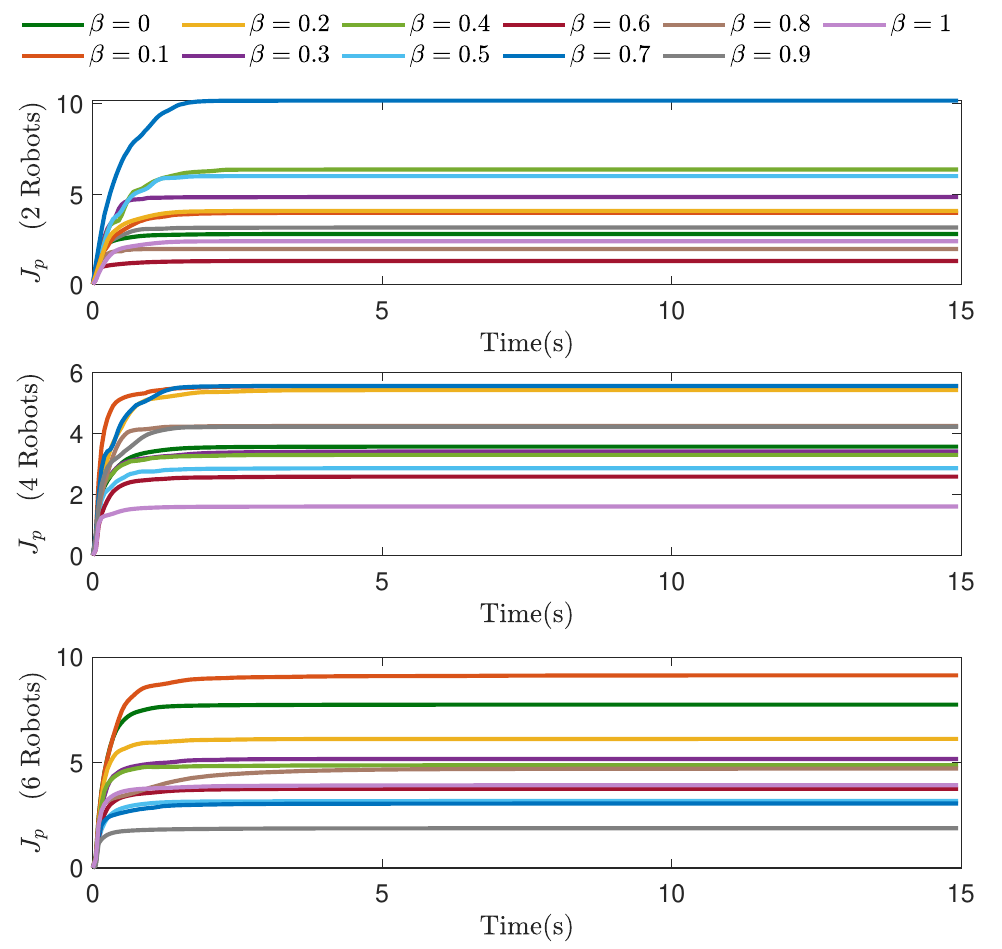}
				\caption{The curves of the cost to go $J_p = \int_{\tau=0}^{t_{\rm sim}} r(\tau)d\tau$ for formation control under different attack probabilities, where $t_{\rm sim}$ is the terminal time of the simulation period.}
				\label{index-Jc}
\end{figure}

\begin{figure*}[http]
					\centering
     \includegraphics[width=1.9\columnwidth]{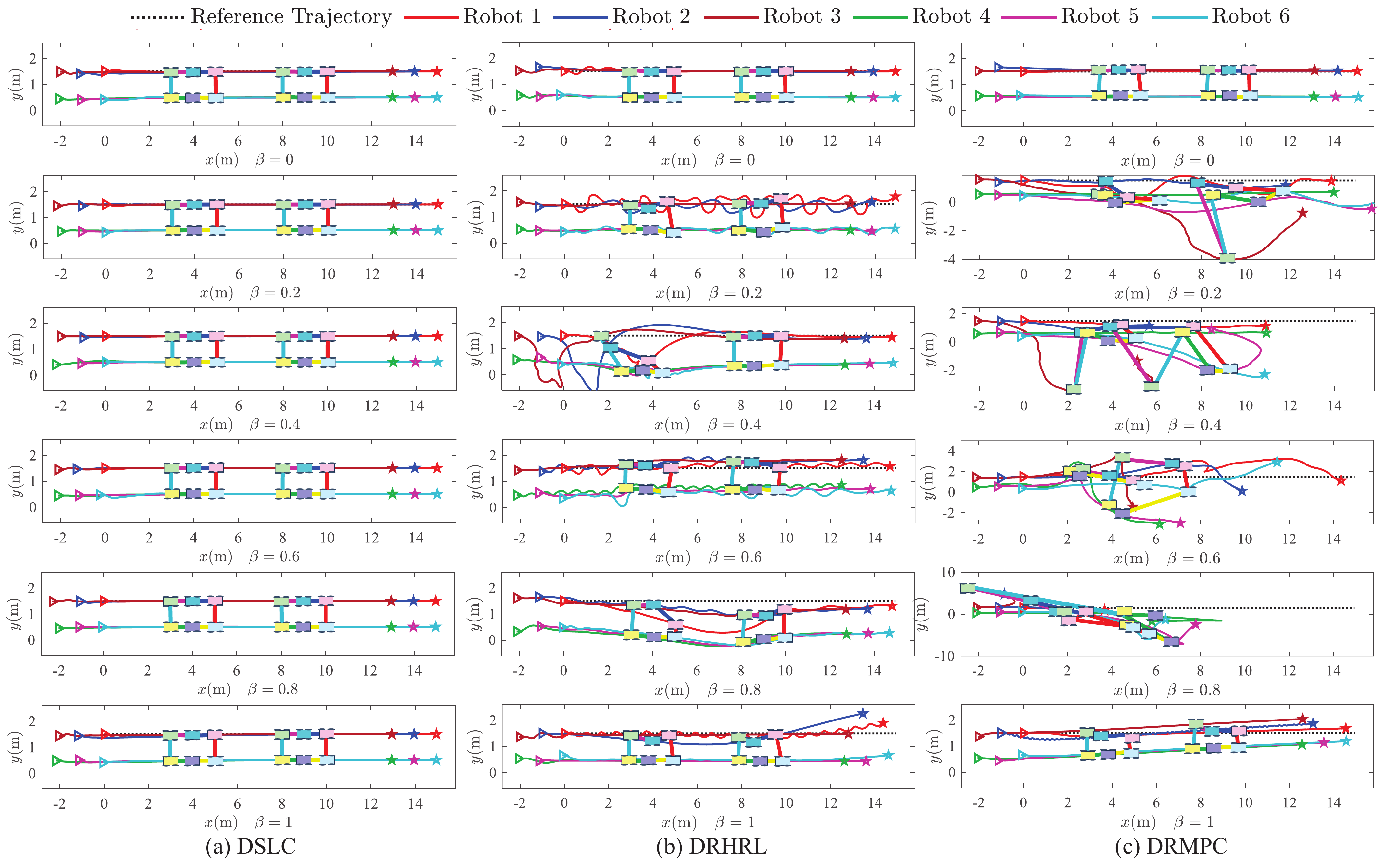}
    \caption{{The trajectories of six-robot formation with  different random attack probabilities under DRHRL~\cite{zhang2025toward}, DRMPC~\cite{zhou2022event}, and DSLC.}}
				\label{six-robots}
\end{figure*}
\begin{figure*}[t]
	\centering
	\includegraphics[width=1.8\columnwidth]{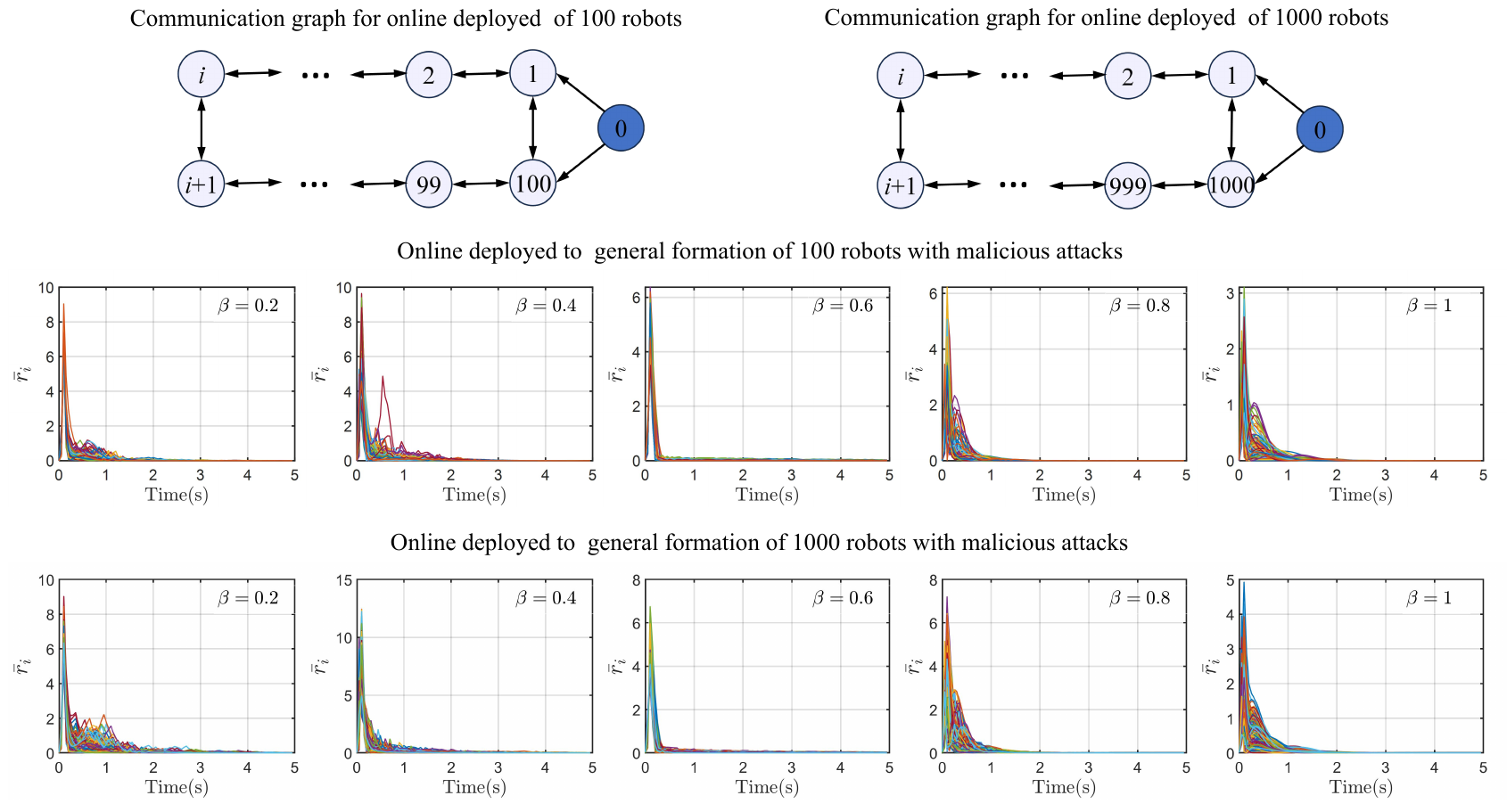}
	\caption{
		{\color{black}The curves of $\bar r_i=\| x_{\scriptscriptstyle \mathcal{N}_i}\|_{Q_i}^{2}+\| u_{s,i}\|_{R_{s,i}}^{2}$ for general formation of 100 and 1000 robots under different random probabilistic attacks using the policies learned from two robots.}}
	\label{deploy-100-robots}
\end{figure*}
\section{Simulation Studies}
	In this section, simulation studies are performed to verify the effectiveness of DSLC for MRS under actuator attacks.  Meanwhile, the scalability and practical deployment capability of DSLC are also validated through simulations on MRS with different scales up to 1000.

 \subsection{Parameter Setting and Evaluation Criteria}
 In the simulation studies, 
 the parameters of DSLC were set as 
 \begin{equation}\label{control_parameter_1}
     \begin{aligned}
&\eta_{c,i}=0.02,\eta_{s,i}=0.01,\eta_{a,i}=0.01,\;T=0.05\, {\rm s},\\
     &t_p=20T,Q_{i}=I_8,R_{s,i}=0.5I_2,R_{a,i}=0.1 I_2.
     \end{aligned}
 \end{equation}
 {\color{black}The above parameters were chosen empirically, subject to certain constraints. Specifically, $\eta_{c,i}$ must be chosen larger than $\eta_{s,i}$ and $\eta_{a,i}$ to satisfy the theoretical condition~\eqref{Eqn:convergence}, as detailed in Appendix~\ref{app:closed-loop}. In addition,  $Q_{i}$, $R_{s,i}$, and $R_{a,i}$ were chosen to satisfy condition~\eqref{Lyapunov-equation}.}
 To simplify the experimental setup, all attack probabilities are set the same, i.e., $\beta=\beta_i,\, \forall i\in\mathbb{N}_1^M$. The DSLC approach was comprehensively evaluated under different values of $\beta$, {\color{black}while the complementary validations with different attack probabilities among robots are deferred in Fig.~\ref{six-robots-different} of Appendix~\ref{Auiliary-results}.}
  {\color{black} Furthermore, the following indices, including the Integral of Absolute Error (IAE) and the Integral of Time-weighted Absolute Error (ITAE)}, were introduced to quantitatively evaluate the control performance under attacks, i.e.,
 \begin{equation}\label{assess_control_performance}
     \begin{aligned}\text{IAE}=&\int_{\tau=0}^{t_{\text{sim}}}\|x\|d\tau,\;
     \text{ITAE}=\int_{\tau=0}^{t_{\text{sim}}}\tau\|x\|d\tau,\\
     J_c=&\int_{\tau=0}^{t_{\text{sim}}} \lVert x \rVert^2_{\scriptscriptstyle Q} + \lVert u \rVert^2_{\scriptscriptstyle R} d\tau, 
     \end{aligned}
 \end{equation}
 where $ Q=I_{\scriptscriptstyle 4M},R=0.5I_{\scriptscriptstyle2M}$, and $t_\text{sim}$ is the terminal time of the simulation period.
 
\subsection{Comparative Results}
We initially performed comparative studies for MRS in three scenarios: (1) two robots, (2) four robots, and (3) six robots.  In the comparative studies, the attack probabilities were set as $\beta=0,0.1,0.2,\cdots,1$ for each local robot. The associated communication graphs are shown in Fig.~\ref{communication-graph}. {\color{black}As discussed in Section~\ref{sec:related work}, no existing studies have addressed distributed optimal control for nonlinear multi-robot systems (MRS) under malicious, stealthy actuator attacks.} Therefore, we compare our approach with two baseline distributed optimal control methods, namely distributed receding-horizon reinforcement learning (DRHRL)~\cite{zhang2025toward} and distributed robust MPC (DRMPC)~\cite{zhou2022event}, to validate the effectiveness of DSLC in secure control under attacks, as well as its generalizability across various tasks and scalability to large-scale MRS.  In the simulation studies, the initial states of MRS were set as uniformly random values, with the distance to its expected value in the interval $[-0.5,\, 0.5]$. In our approach and DRHRL, the control policies were continuously updated online, while the control actions of DRMPC were calculated with the \emph{fmincon} solver. 
Table~\ref{access-control-performance} presents the performance criteria of the three methods (DSLC, DRHRL~\cite{zhang2025toward}, DRMPC~\cite{zhou2022event}) under the same conditions. It can be seen that both the IAE and ITAE of DSLC are much less than those of DRHRL and DRMPC. In most cases, the cost $J_c$ achieved by DSLC is significantly lower than that obtained by DRHRL and DRMPC, {\color{black}owing to the game-theoretic secure learning control mechanism underlying the proposed approach.} 
 
 The curves of the cost to go $J_p= \int_{\tau=0}^{t_{\rm sim}} r(\tau)d\tau$ are shown in Fig.~\ref{index-Jc}. The results show that, although attacks with different probabilities are imposed on the MRS, the cost to go $J_p$ under attacks asymptotically converges to constant values. 
 The comparison results in the six-robot scenario with $\beta =0,0.2,0.4,\cdots,1$ are given in Fig.~\ref{six-robots}, while the results in the two-robot and four-robot scenarios are deferred to Figs.~\ref{two-robots} and~\ref{four-robots} in Appendix~\ref{Auiliary-results}. The results show that MRS with all methods could track the reference trajectory and maintain the desired formation in the absence of attacks, i.e., $\beta=0$.
 The multirobots exhibit formation disorder and failure in trajectory tracking under DRHRL and DRMPC, when there are attacks with different probabilities.
 In contrast, DSLC can effectively control the MRS to track the reference trajectory, even when attacks with different probabilities arise and the number of robots varies.

\subsection{Scalability Validation of DSLC under Attacks}
To verify the scalability and practical deployment capability of DSLC, 
the policies learned from just two robots under attacks were directly deployed to 100 robots and 1000 robots.
As shown in Fig.~\ref{deploy-100-robots}, the policies learned from just two robots can successfully stabilize 100 and even 1000 robots under various attack conditions, demonstrating the scalability and practical deployment capability of the proposed approach.
This result indicates that control policies for large-scale MRS can be efficiently generated by training on small-scale systems, substantially reducing computational and deployment complexity. {\color{black}Note also that, the closed-loop dynamic behaviors may vary across different values of $\beta$, due to the varying levels of stochasticity introduced by distinct attack probability distributions.}
In general, the reported simulation results validate the effectiveness of DSLC for large-scale MRS under malicious cyber attacks.

\section{Real-World Experiments}
In this section, real-world experiments are carried out to further validate the effectiveness and generalizability of DSLC under malicious actuator attacks in various coordination scenarios, including general formation, affine formation, and containment.

The experimental platform is illustrated in Fig.~\ref{d_System_Attack}. It consists of multiple wheeled mobile robots, a vision-based localization system, and a host computer (see Fig.~\ref{d_System_Attack}(a)). {\color{black}Each mobile robot adopts a differential-drive configuration with dimensions of 119.75 mm $\times$ 105.01 mm $\times$ 79.07 mm.} The robot states were measured in real time using a Hikvision MV-CA013-21UM camera {\color{black}at a sampling frequency of 50 Hz. With the same frequency, the defense policies~\eqref{eqn:actor-hat} learned in simulation were implemented on the host computer equipped with an Intel i7-8565U CPU. The calculated control commands were transmitted to the STM32 microcontroller of each mobile robot via a WiFi 6 network using the UDP protocol.} As illustrated in Fig.~\ref{d_System_Attack}(b), each attacker utilized local state information to inject stealthy deception attack signals into the actuator of an individual robot. {\color{black}In all experiments, the learned attack policies~\eqref{eqn:attack-hat} were utilized as adversaries to destabilize MRS.}


\begin{figure*}[http]
	\centering
	\includegraphics[width=1.8\columnwidth]{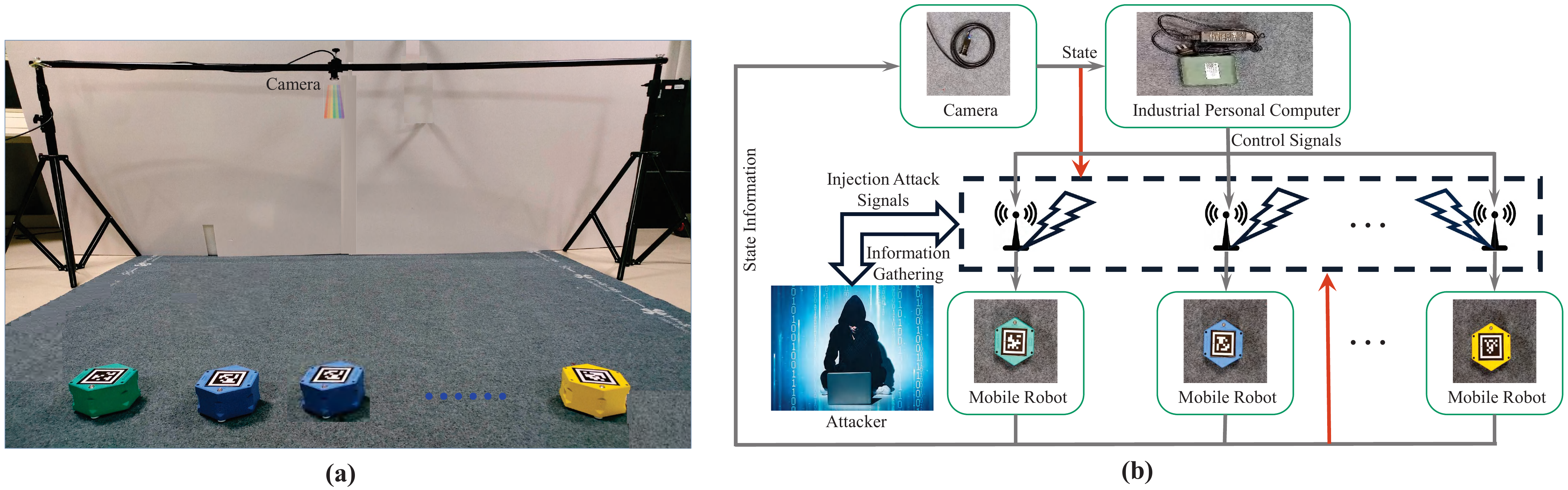}
	\caption{Schematic diagram of the MRS experimental platform under additive deception actuator attacks. (a) The scene of the experimental platform; (b) The information flow for MRS under attacks.}
	\label{d_System_Attack}
\end{figure*}
\subsection{General Formation Control under Attacks}
\begin{figure}[htb]
	\centering
	\includegraphics[width=1\columnwidth]{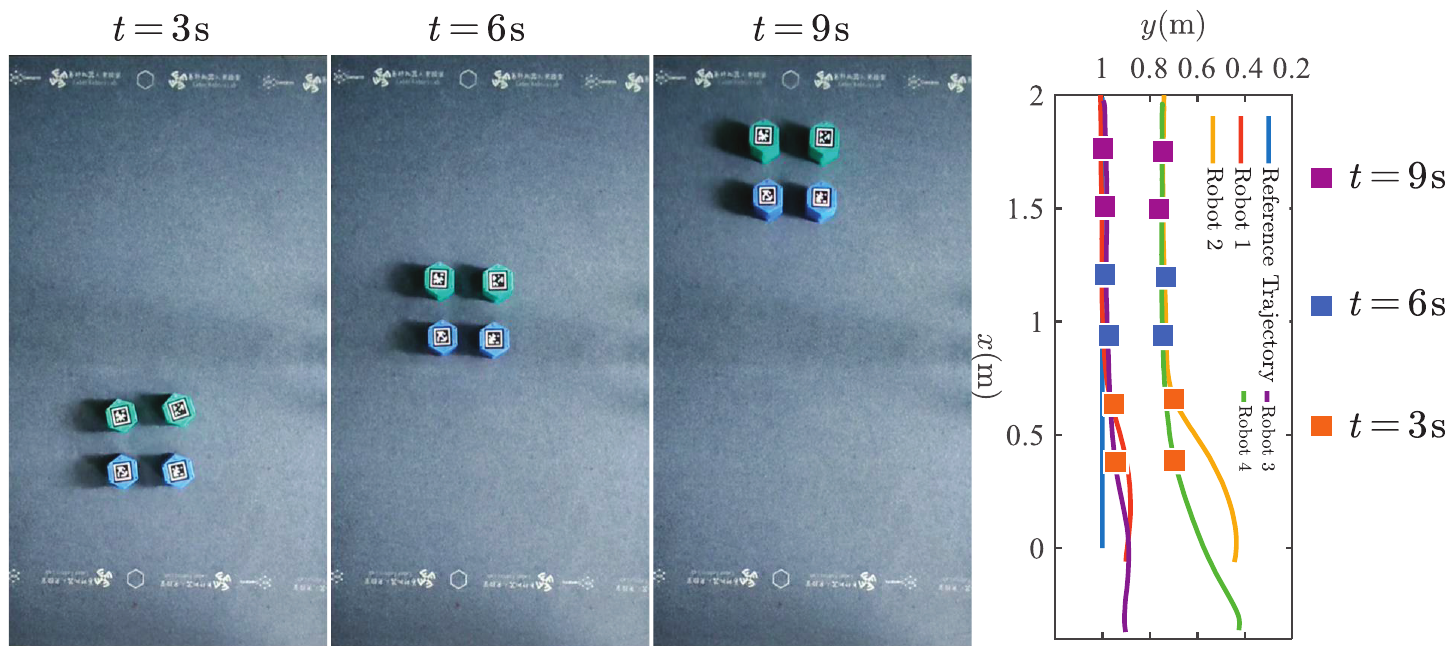}
	\caption{In Scenario I, trajectories and snapshots of 4 robots without attacks ($\beta=0$) under DSLC. }
	\label{R4LB00}
\end{figure}
\begin{figure}[htb]
	\centering
	\includegraphics[width=0.8\columnwidth]{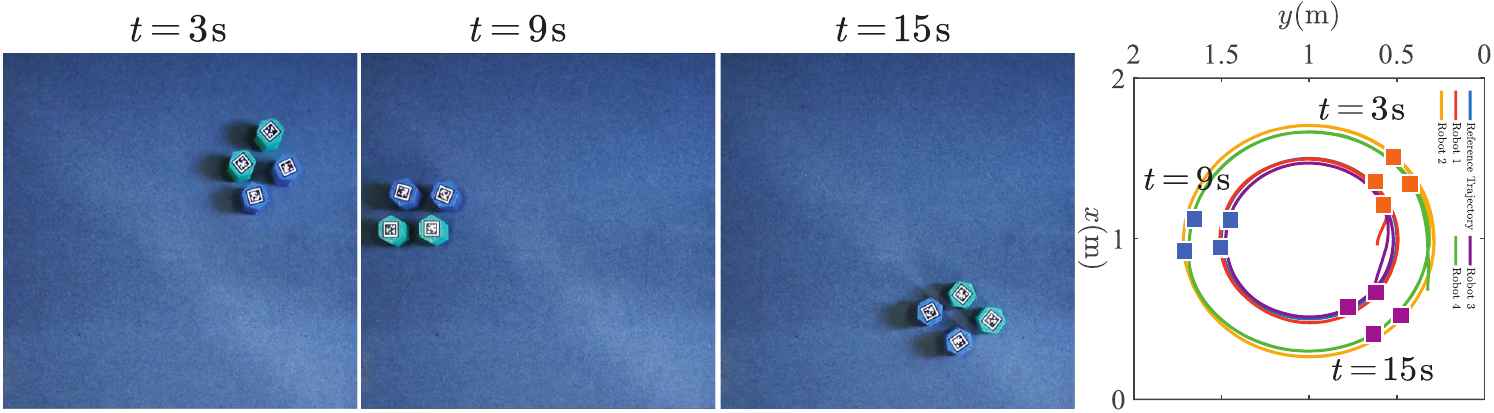}
	\caption{In Scenario II, trajectories and snapshots of 4 robots without attacks ($\beta=0$) under DSLC.}
	\label{R4CB00}
\end{figure}
\begin{figure}[htb]
	\centering
	\includegraphics[width=0.9\columnwidth]{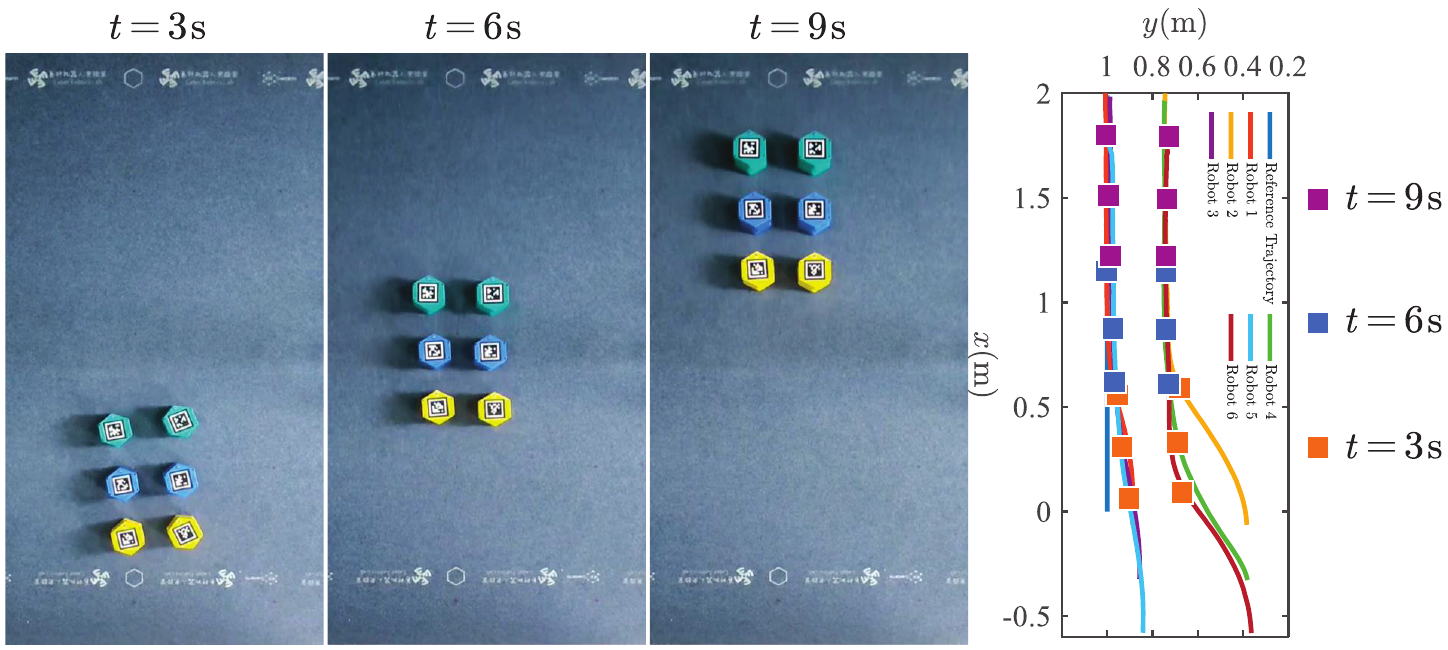}
	\caption{Scenario I: trajectories and snapshots of 6 robots without attacks ($\beta=0$) under DSLC. }
	\label{R6LB00}
\end{figure}
\begin{figure}[htb]
	\centering
	\includegraphics[width=0.9\columnwidth]{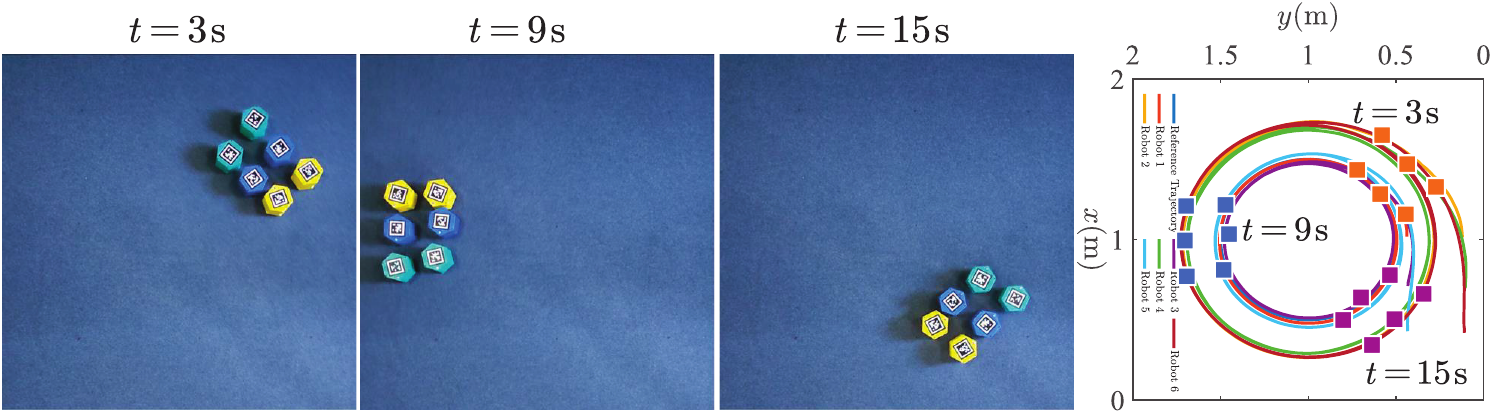}
	\caption{Scenario II: trajectories and snapshots of 6 robots without attacks ($\beta=0$) under DSLC. }
	\label{R6CB00}
\end{figure}
\begin{figure}[htb]
	\centering
	\includegraphics[width=0.9\columnwidth]{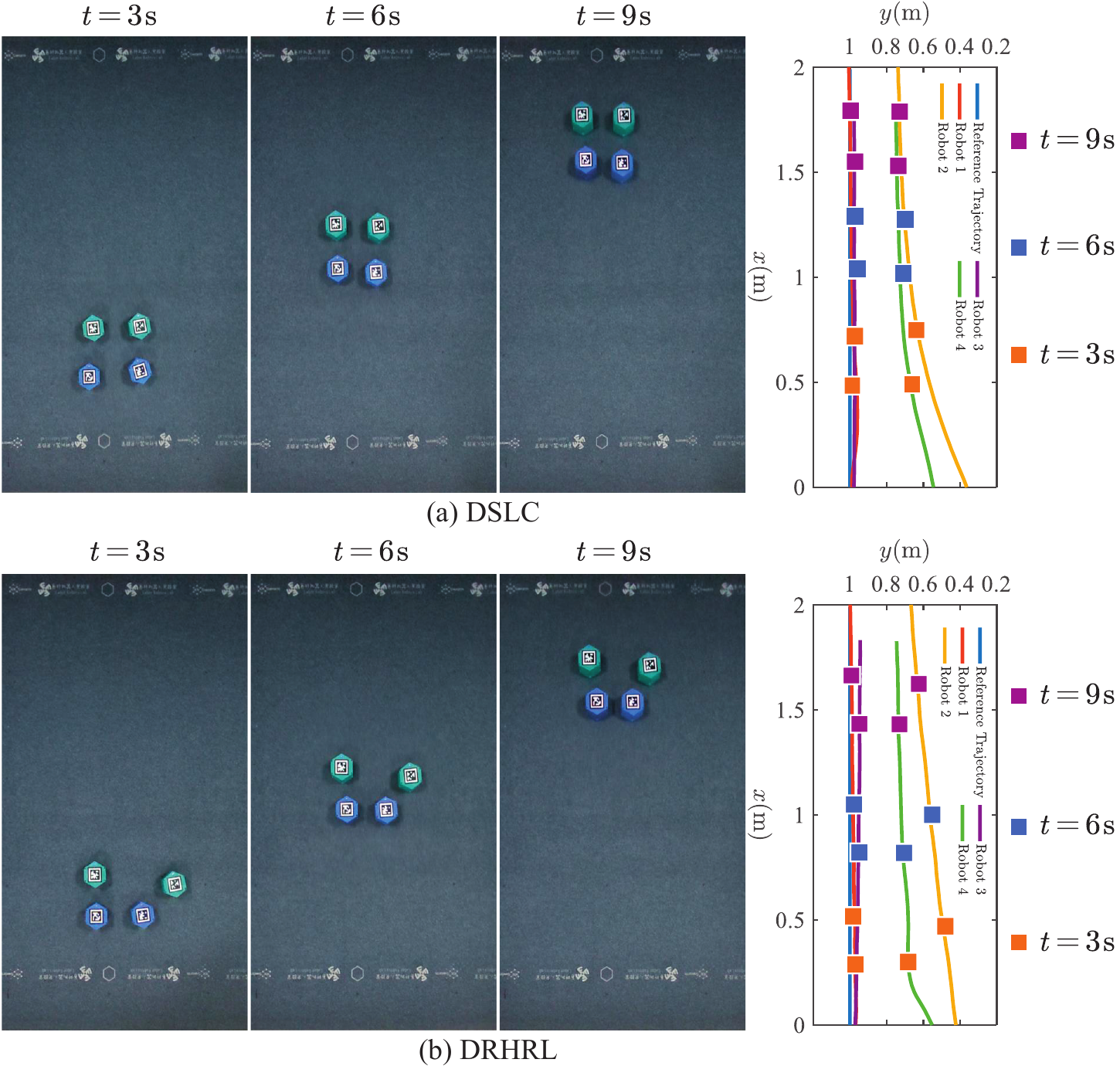}
	\caption{Scenario I: trajectories and snapshots of 4 robots with an attack probability of $\beta=0.5$. (a) DSLC; (b) DRHRL. The robots under DSLC formed and maintained a stable rectangular formation, whereas the formation under DRHRL was disrupted by the attack.}
	\label{R4LB05}
\end{figure}
\begin{figure}[htb]
	\centering
	\includegraphics[width=0.9\columnwidth]{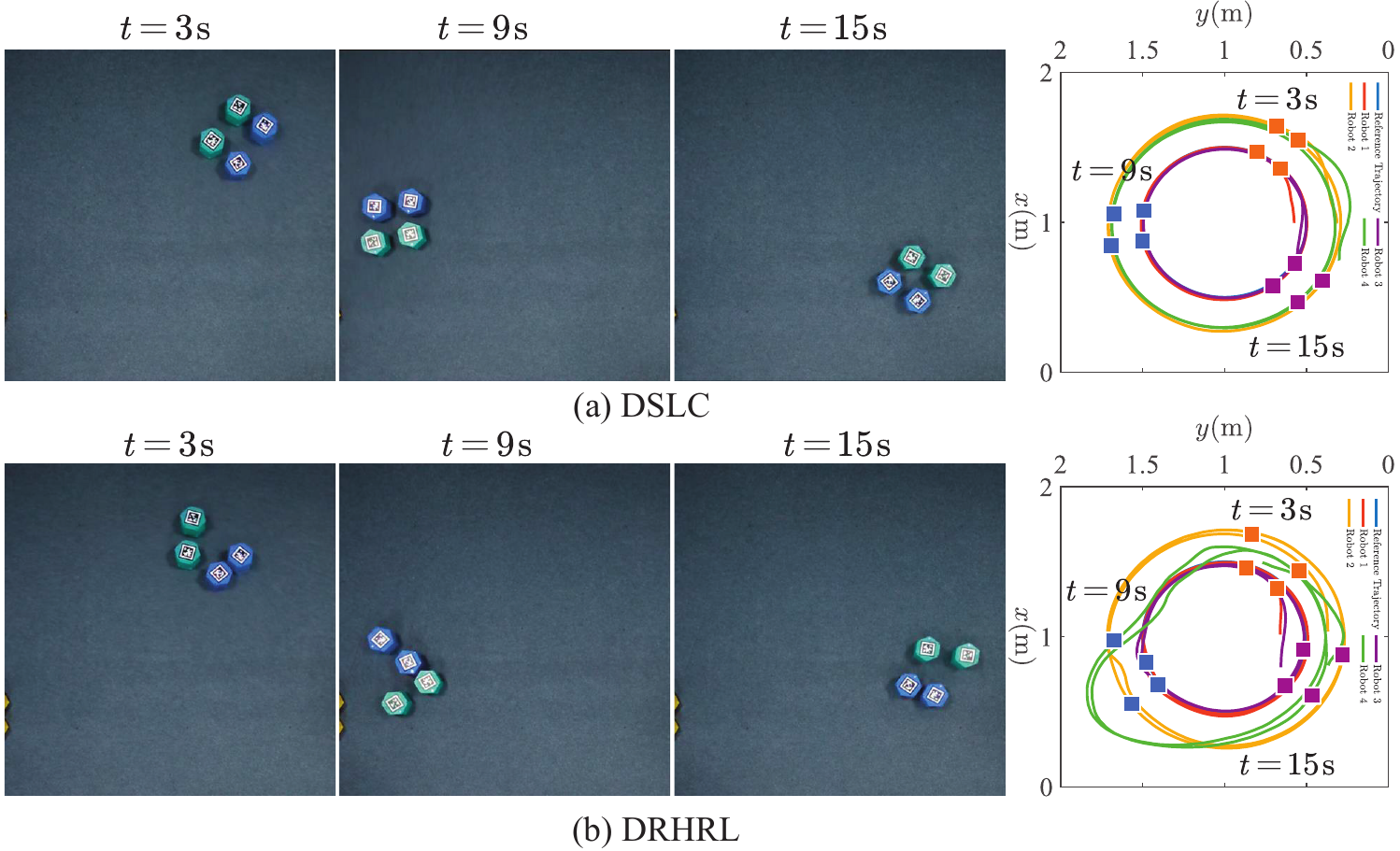}
	\caption{Scenario II: trajectories and snapshots of 4 robots with an attack probability of $\beta=0.5$. (a) DSLC; (b) DRHRL.  The robots under DSLC formed and maintained a stable rectangular formation, whereas the formation under DRHRL was disrupted by the attack.}
	\label{R4CB05}
\end{figure}
\begin{figure}[htb]
	\centering
	\includegraphics[width=0.9\columnwidth]{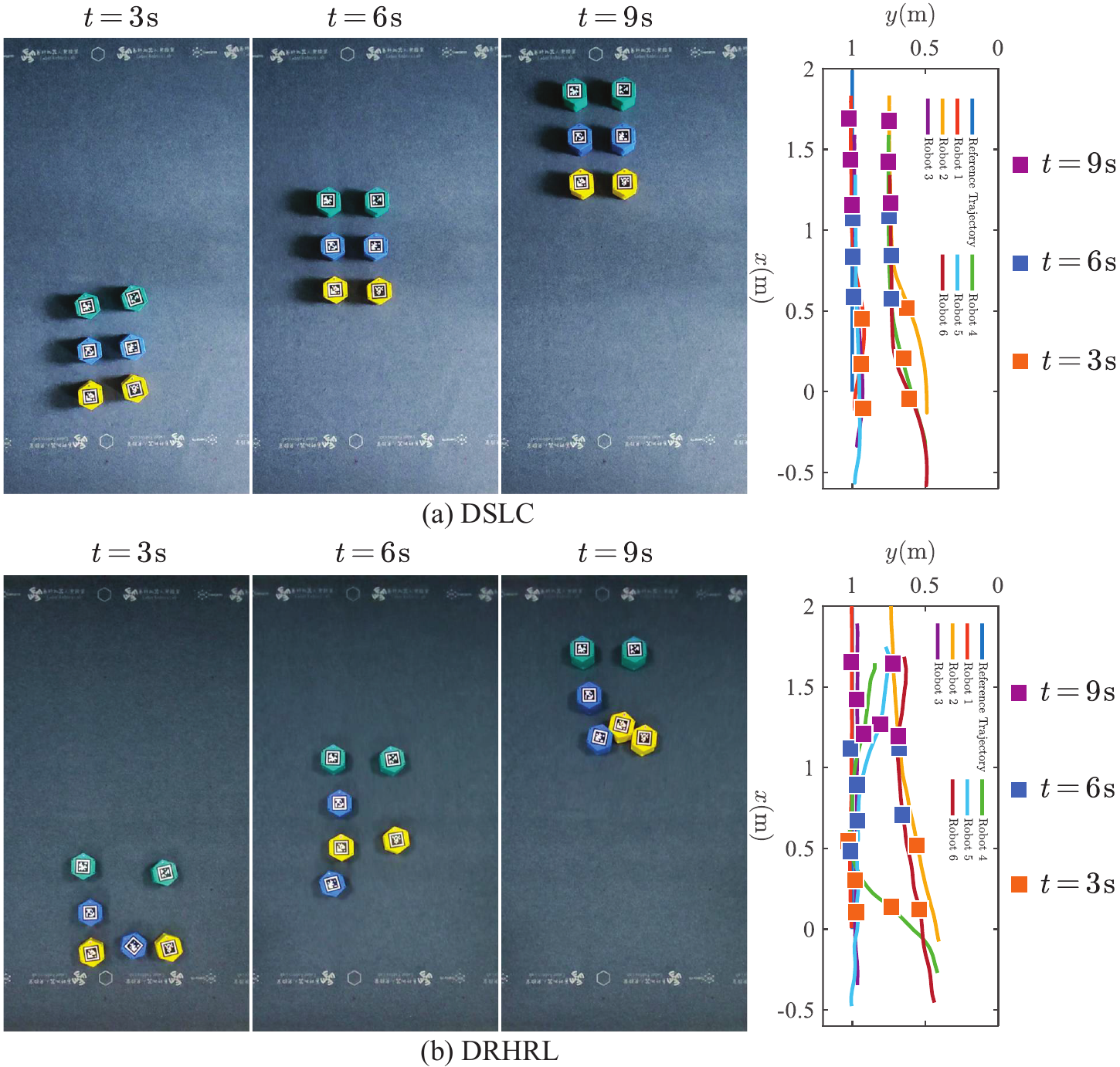}
	\caption{Scenario I: trajectories and snapshots of 6 robots with an attack probability of $\beta=0.5$. (a) DSLC; (b) DRHRL.  The robots under DSLC formed and maintained a stable rectangular formation, whereas the formation under DRHRL was disrupted by the attack.}
	\label{R6LB05}
\end{figure}
\begin{figure}[htb]
	\centering
	\includegraphics[width=0.9\columnwidth]{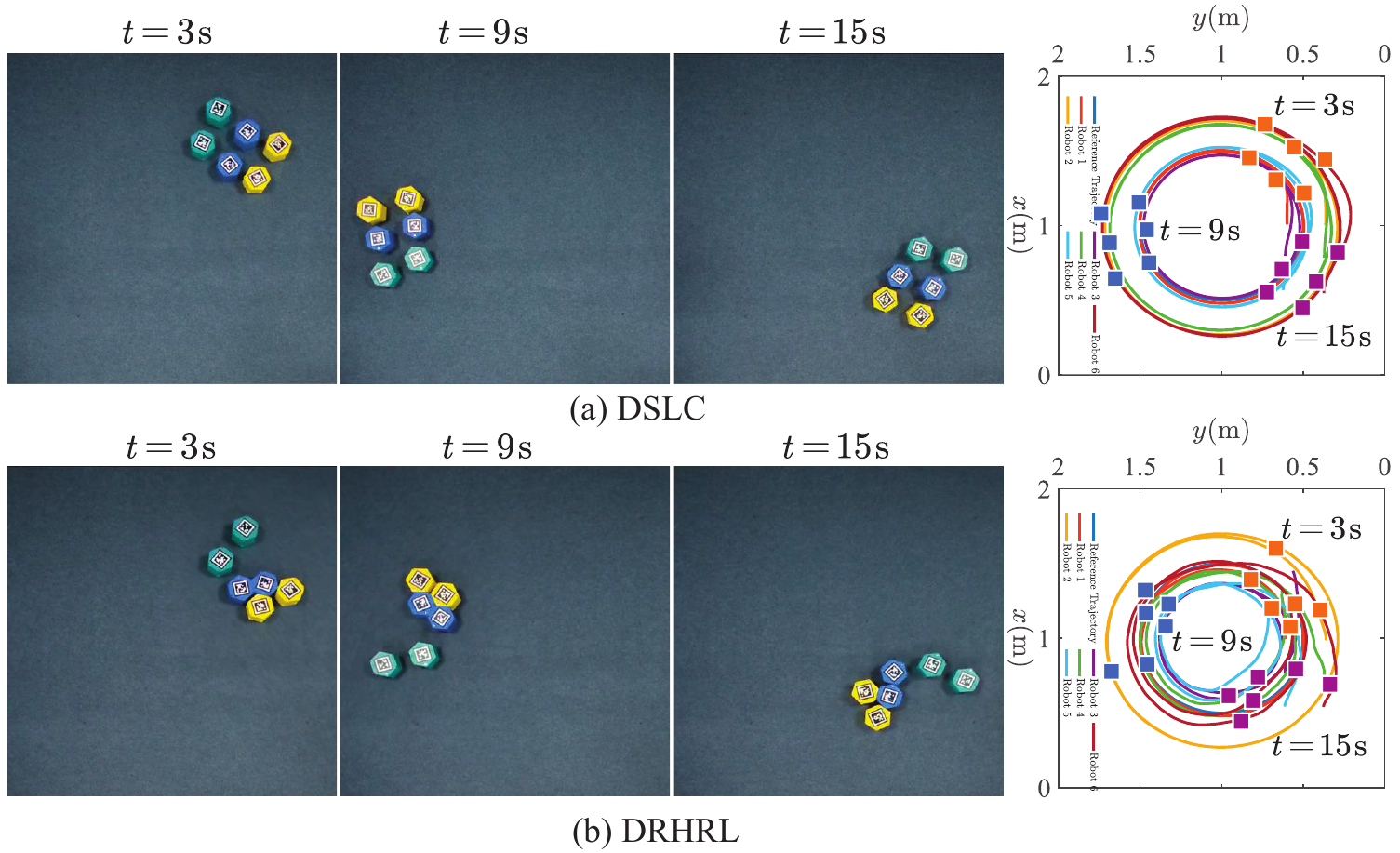}
	\caption{Scenario II: trajectories and snapshots of 6 robots with an attack probability of $\beta=0.5$. (a) DSLC; (b) DRHRL.  The robots under DSLC formed and maintained a stable rectangular formation, whereas the formation under DRHRL was disrupted by the attack.}
	\label{R6CB05}
\end{figure}
\begin{figure}[htb]
	\centering
	\includegraphics[width=0.9\columnwidth]{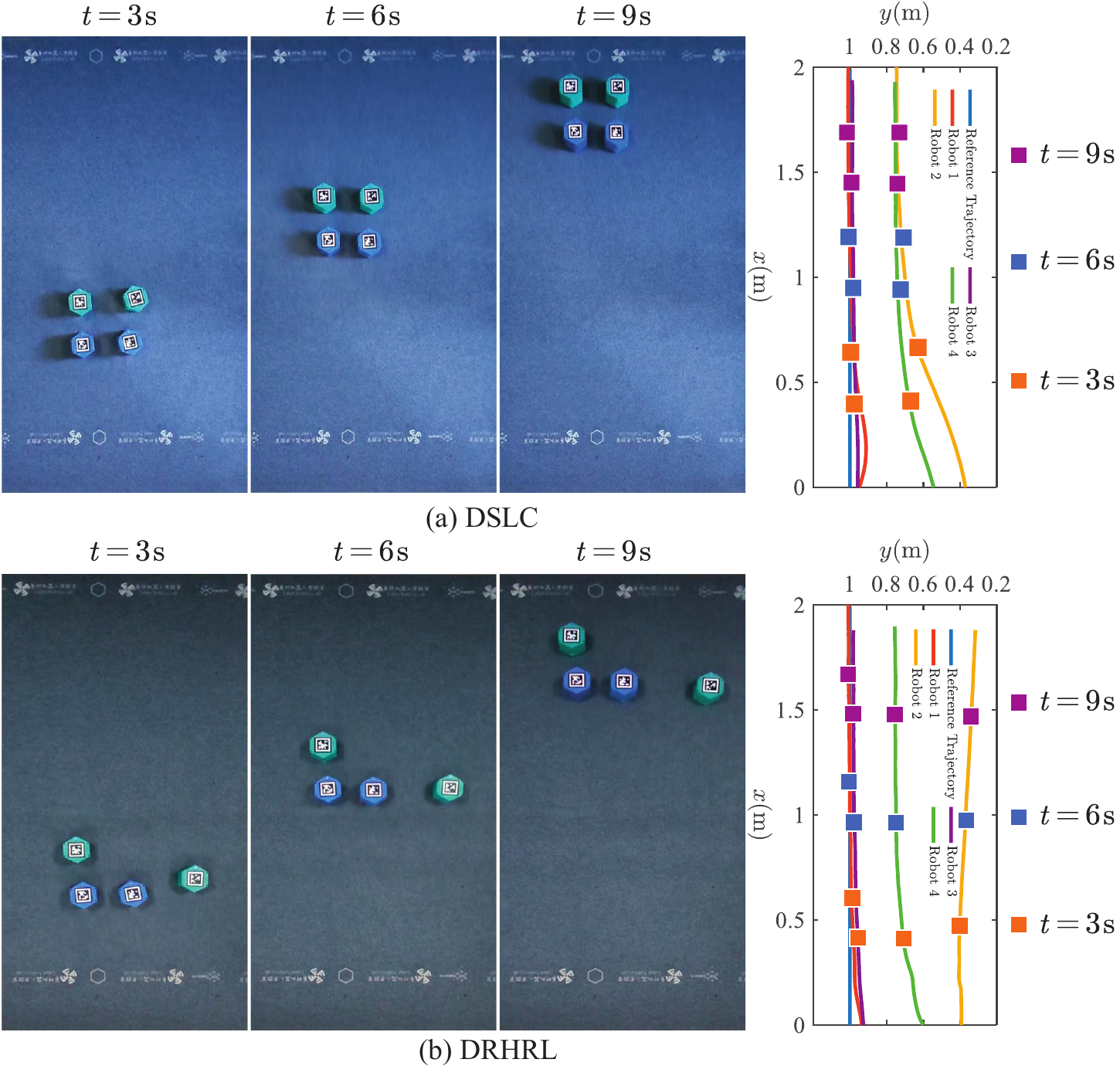}
	\caption{Scenario I: trajectories and snapshots of 4 robots with an attack probability of $\beta=1$. (a) DSLC; (b) DRHRL.  The robots under DSLC formed and maintained a stable rectangular formation, whereas the formation under DRHRL was disrupted by the attack.}
	\label{R4LB10}
\end{figure}
\begin{figure}[htb]
	\centering
	\includegraphics[width=0.9\columnwidth]{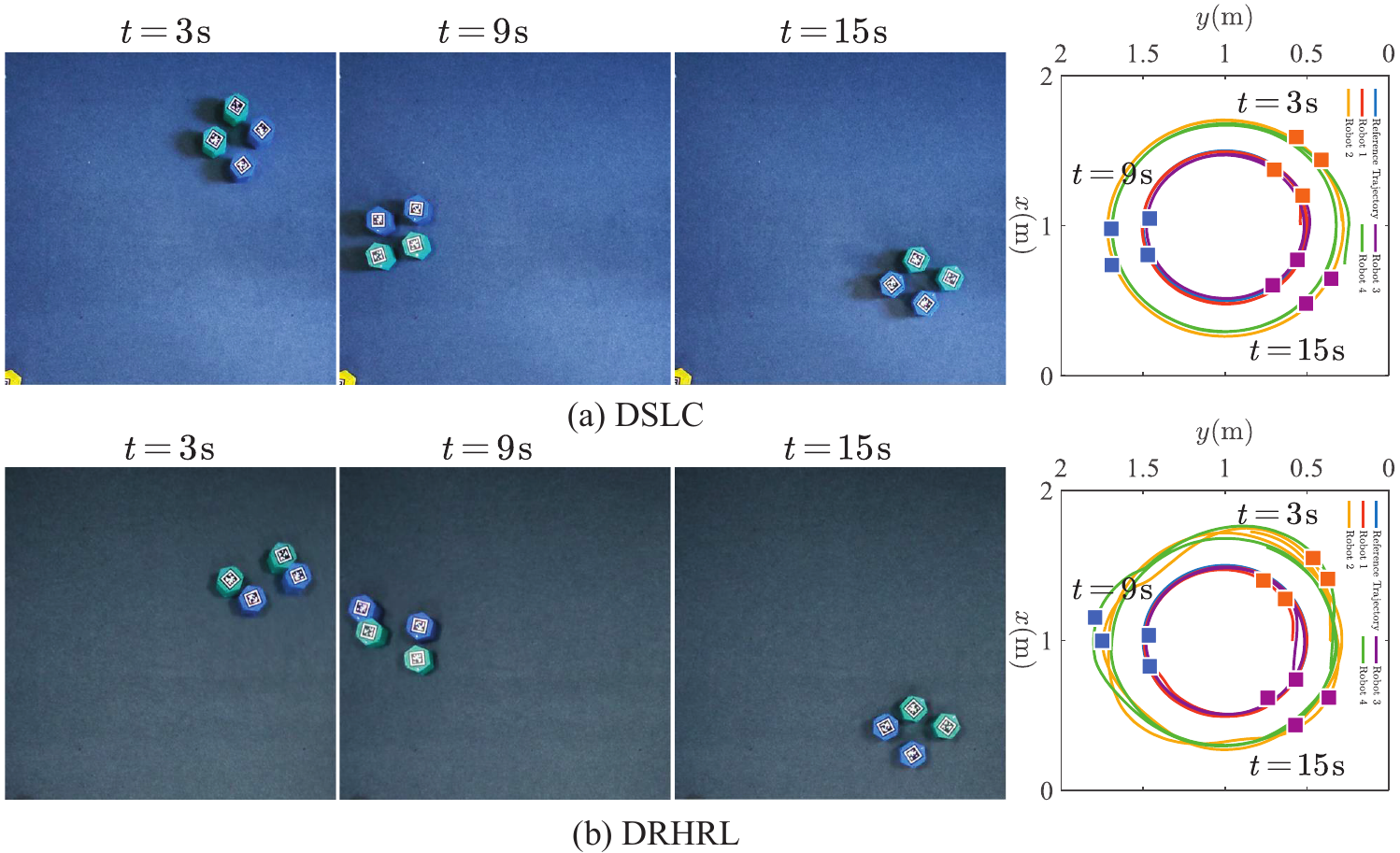}
	\caption{Scenario II: trajectories and snapshots of 4 robots with an attack probability of $\beta=1$. (a) DSLC; (b) DRHRL.  The robots under DSLC formed and maintained a stable rectangular formation, whereas the formation under DRHRL was disrupted by the attack.}
	\label{R4CB10}
\end{figure}
\begin{figure}[htb]
	\centering
	\includegraphics[width=0.9\columnwidth]{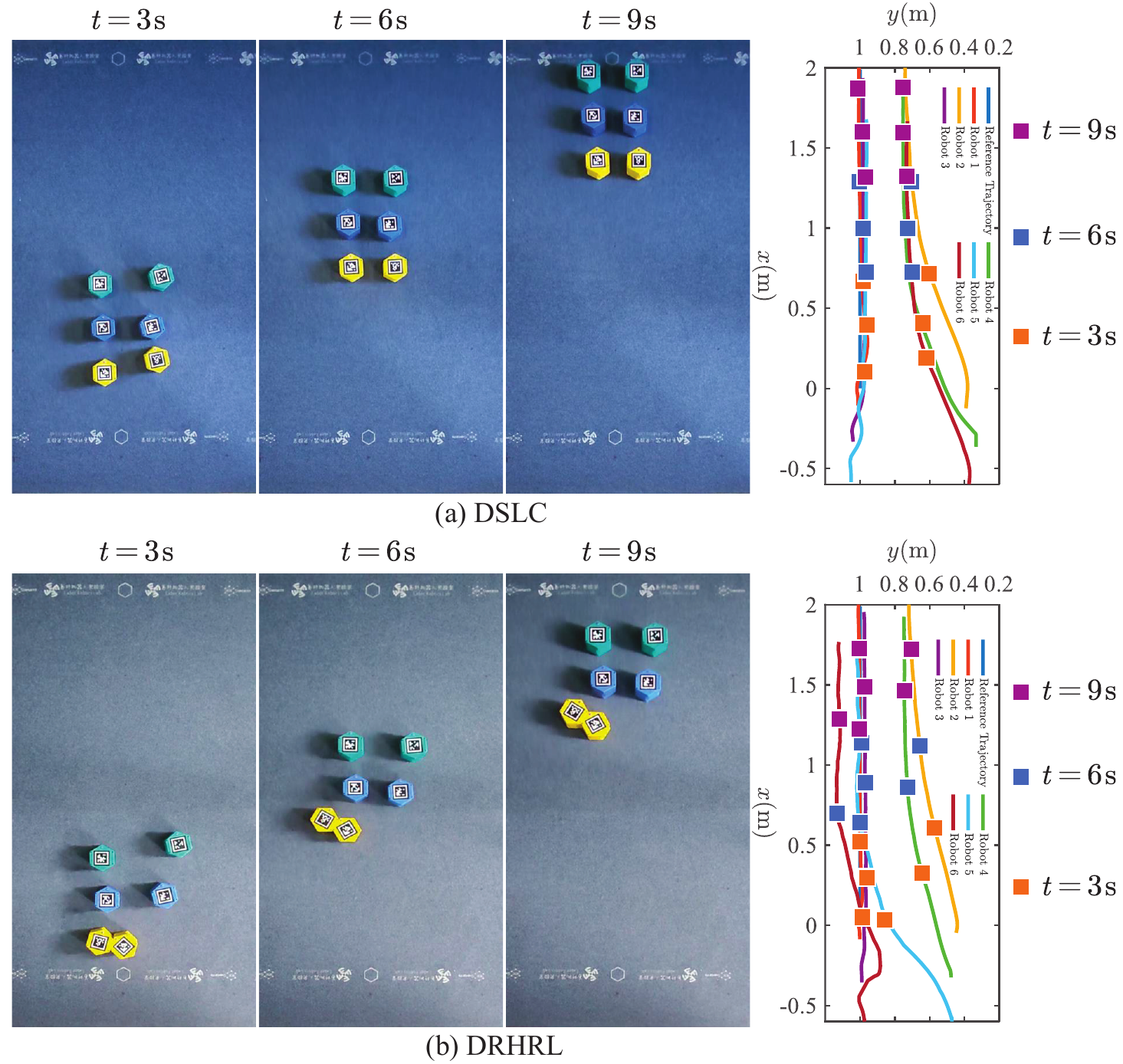}
	\caption{Scenario I: trajectories snapshots of 6 robots with an attack probability of $\beta=1$. (a) DSLC; (b) DRHRL.  The robots under DSLC formed and maintained a stable rectangular formation, whereas the formation under DRHRL was disrupted by the attack.}
	\label{R6LB10}
\end{figure}
\begin{figure}[htb]
	\centering
	\includegraphics[width=0.9\columnwidth]{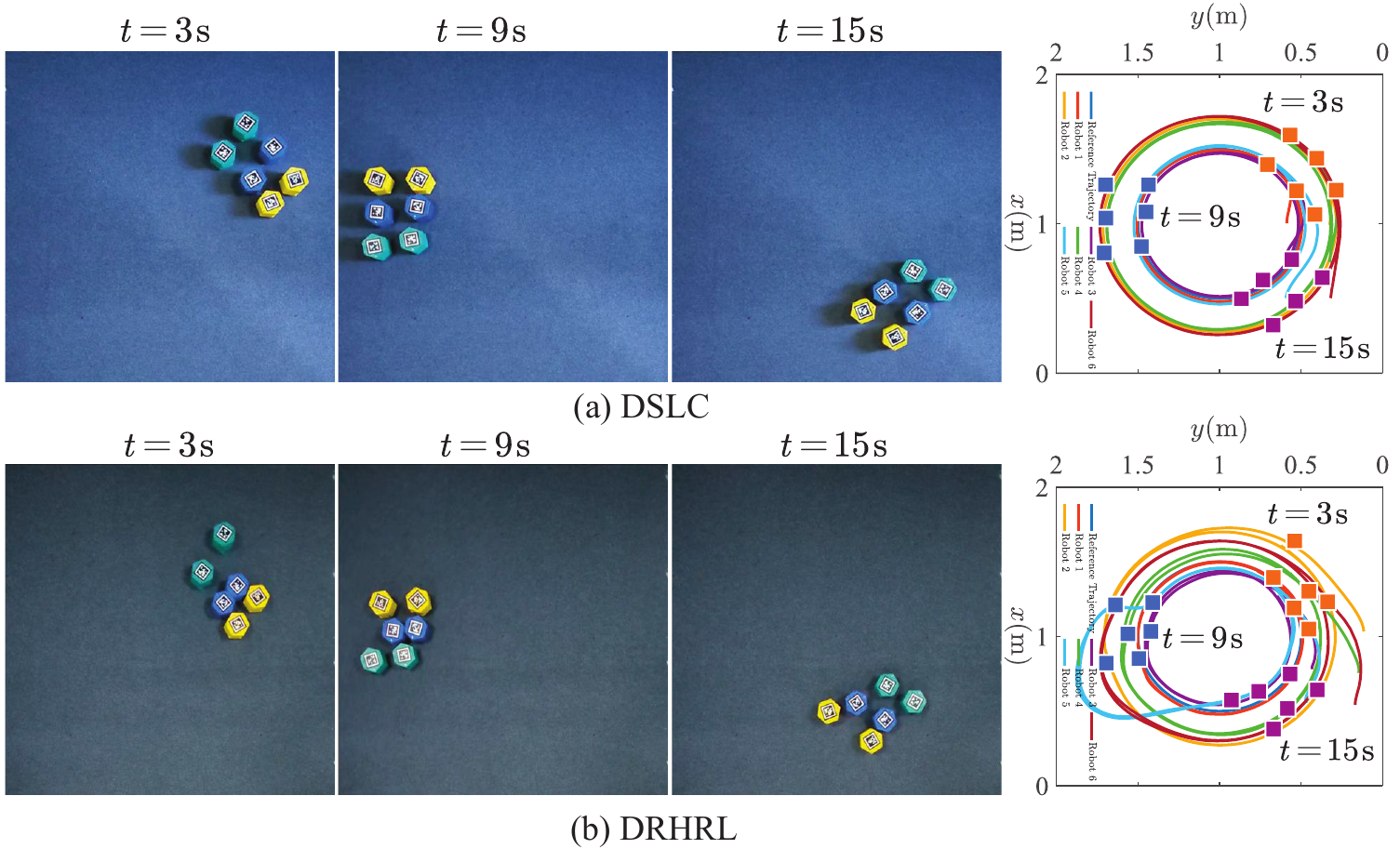}
	\caption{Scenario II: trajectories and snapshots of 6 robots with an attack probability of $\beta=1$. (a) DSLC; (b) DRHRL. The robots under DSLC formed and maintained a stable rectangular formation, whereas the formation under DRHRL was disrupted by the attack.}
	\label{R6CB10}
\end{figure}

 The experiments for general formation of MRS are conducted in two scenarios. In Scenario I, the MRS follows a straight-line reference trajectory, while in Scenario II, it tracks a circular reference trajectory. The communication graphs for general formation experiments are given in Fig.~\ref{communication-graph}. In the experiments, Robot 0 served as the virtual leader and its path matched the desired path of Robot 1. 
In Scenario I, the side length of the formation shape was set as 0.25 m, and in Scenario II, it was set as 0.2 m.
The control parameters in the general formation were set as in~\eqref{control_parameter_1}.

General formation experiments were conducted with different numbers of mobile robots ($M=$ 4 and 6) and various attack probabilities ($\beta = 0,\, 0.5,\, 1$). The experimental results are presented in Figs.~\ref{R4LB00}–\ref{R6CB10}. Therein,
Figs.~\ref{R4LB00}–\ref{R6CB00} show the experimental trajectories and snapshots of DSLC without attacks ($\beta = 0$) in Scenarios I and II. The results demonstrate that the MRS can efficiently track the reference trajectories while maintaining the desired formation shapes.
To further validate the robustness of DSLC under attacks, experiments were conducted with different robot scales and attack conditions in both scenarios. The corresponding MRS trajectories and snapshots are shown in Figs.~\ref{R4LB05}–\ref{R6CB10}. These results confirm that DSLC enables robots of varying scales to accurately track the reference trajectories while preserving formation integrity, even under different attack distributions.
In contrast, under the same experimental conditions, mobile robots controlled by DRHRL deviate from the reference trajectory, experience formation disruptions, and suffer collisions when attacked.

The above experimental results demonstrate that DSLC outperforms DRHRL~\cite{zhang2025toward} under various attack probabilities.
DSLC effectively enables robots of different scales to accomplish general formation control tasks, showcasing strong robustness against cyber attacks.

\begin{table*}[t]	
	\centering
	\caption{
 Control Performance under Attacks with Different Probabilities for General Formation in the Experiments
 }
	\label{d_index_experiment}
	\begin{tabular}{cccccc}
\toprule
\multirow{2}[2]{*}{Attack probabilities} &\multirow{2}[2]{*}{Approaches}& \multicolumn{2}{c}{4 Robots} & \multicolumn{2}{c}{6 Robots} \\
\cmidrule{3-6}
		&&  \multicolumn{1}{c}{Scenario (\romannumeral1)} & \multicolumn{1}{c}{Scenario (\romannumeral2)} & \multicolumn{1}{c}{Scenario (\romannumeral1)} & \multicolumn{1}{c}{Scenario (\romannumeral2)}  \\
		\midrule
		\multirow{2}[2]{*}{$\beta=0.5$} & DSLC     &\textbf{	1.6}&\textbf{	7.36}&	\textbf{1.83}&	\textbf{15.17}\\
		& DRHRL\cite{zhang2025toward}   &	1.97&	8.46&	5.09&	20.47\\
		\midrule
		\multirow{2}[2]{*}{$\beta=1$} & DSLC     &	\textbf{1.85}&	\textbf{7.22}&	\textbf{2.19}&	\textbf{14.92}\\
		& DRHRL \cite{zhang2025toward} &	3.22&	8.27&	7.3&	16.12\\
\bottomrule
 \end{tabular}
	\label{d_Index_Experiment}%
\end{table*}%
To quantitatively evaluate the control performance of the two approaches under attacks, the IAE performance index defined in~\eqref{assess_control_performance} is adopted for performance analysis.
Since the initial positions of the robots are randomly distributed, the IAE performance index is recorded from $t=1$ s to eliminate statistical errors caused by different initial conditions.
The results, summarized in Table~\ref{d_Index_Experiment}, show that the IAE values of DSLC are lower than those of DRHRL with $\beta=0.5$ and $\beta=1$, indicating that DSLC achieves superior control performance compared to DRHRL.

\subsection{Affine Formation Control under Attacks}
We have further validated our approach to achieve affine formation with attack probabilities $\beta=0.5,1$.
The MRS includes four real robots and one virtual robot which generates the reference trajectory described as
\begin{equation}
\begin{aligned}
x_r(t)=&\left\{\begin{array}{ll}
C_{x,1} +\frac{v_r}{\omega_r}\cos{(\omega_r t +\text{atan}2)}&t\in[0,T_1)\\
L_{x,1} -v_r(t-T_1)\cos{(\text{atan}2)}&t\in[T_1,\infty),\\
\end{array}\right.\\
y_r(t)=&\left\{\begin{array}{ll}
C_{y,1} +\frac{v_r}{\omega_r}\sin{(\omega_r t +\text{atan}2)}&t\in[0,T_1)\\
L_{y,1} -v_r(t-T_1)\sin{(\text{atan}2)}&t\in[T_1,\infty),\\
\end{array}\right.\\
\end{aligned}
\end{equation}
where $T_1=6.2$, $C_{x,1}=1.2$,  $C_{y,1}=1.15$, $L_{x,1}=0.7$, $L_{y,1}=1.4$, $v_r=0.2$, $\omega_r=0.36$ for $t\in[0,T_1)$
 and $\omega_r=0$ for $t\in[T_1,\infty)$.
The affine formation shape for the MRS was set as
\begin{equation}
\mathcal{F}_1=\left[\begin{matrix}0.1 & 0.2  & -0.2 & -0.2 & 0.2\\
0 & 0.2 & 0.2 & -0.2 & -0.2\\
\end{matrix}\right]^{\top}.
\end{equation} 
The Laplacian matrix for affine formation of the MRS was given as
\begin{equation}
\bm{L}^* = \left[  \begin{array}{ccc|cc}
0  & 0 & 0 & 0 & 0\\
-1 &  1 & 0 &  0&  0\\
0 &  -1 &  1 &  0&  0\\
\hline
 -2 &  1.5 &  -0.5 &  1&  0\\
-4 &  2 &  0 &  1&  1\\
\end{array} \right].
\end{equation}
The control parameters for the affine formation were set as in~\eqref{control_parameter_1}.

\begin{figure}[http]
 \centering
 \includegraphics[width=0.9\columnwidth]{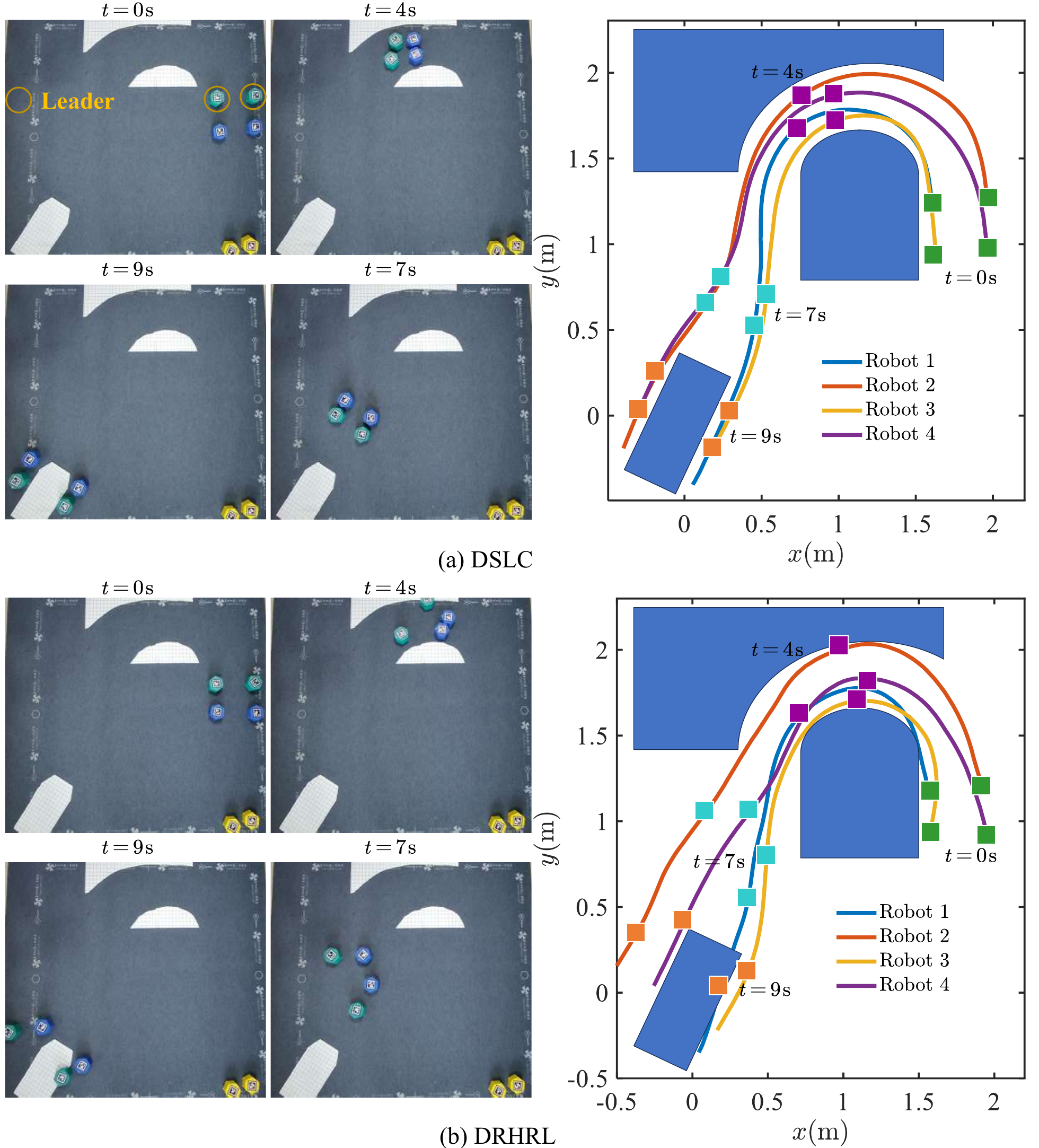}
\caption{Trajectories and snapshots for affine formation with an attack probability of $\beta=0.5$. (a) DSLC. (b) DRHRL.  The robots under DSLC formed a stable rectangular formation and dynamically expanded or contracted it to avoid collisions, whereas the robots under DRHRL failed to avoid the obstacles due to the attack.}
 \label{AF_B05_PC}
\end{figure}
\begin{figure}[http]
 \centering
 \includegraphics[width=0.9\columnwidth]{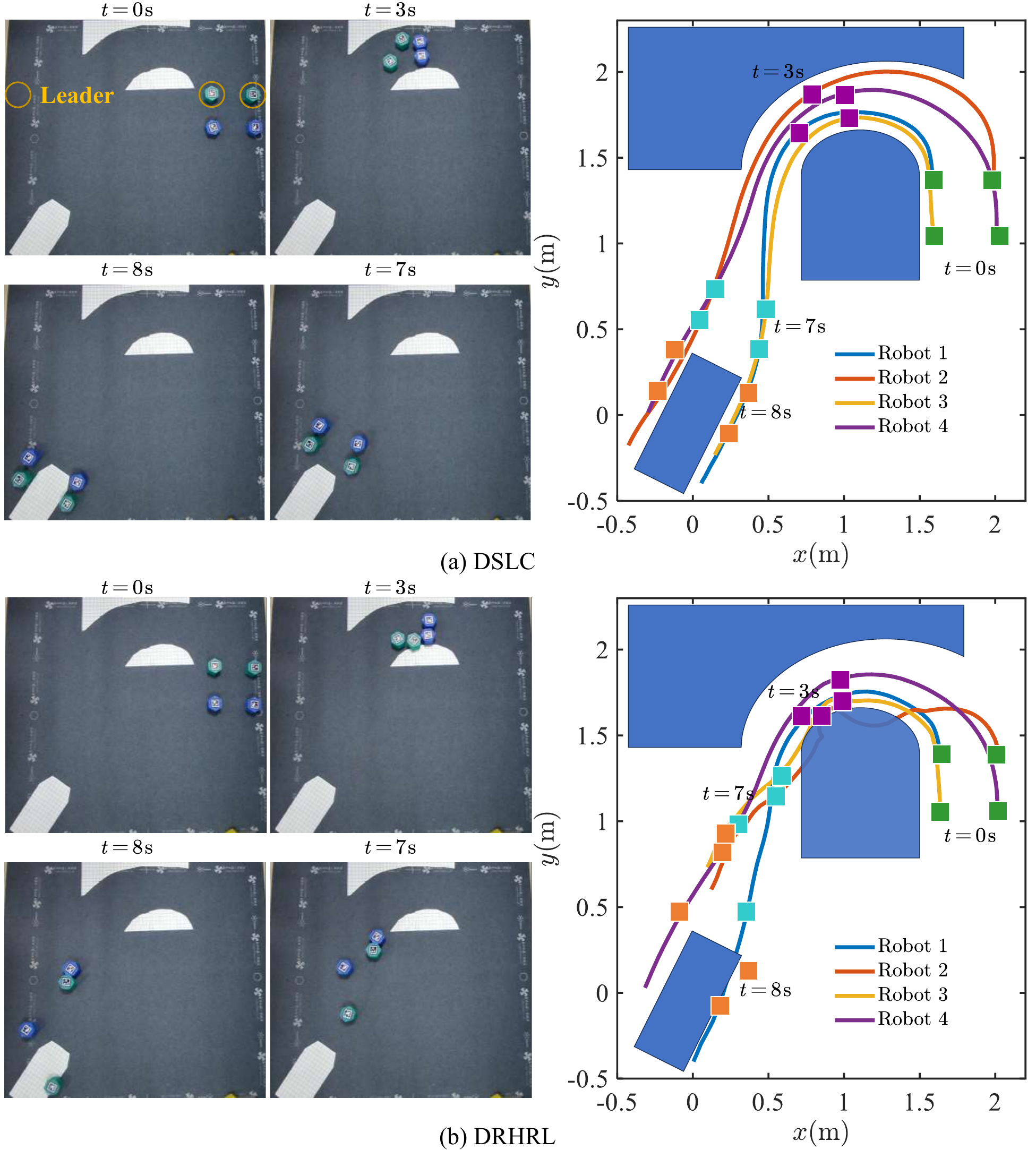}
\caption{Trajectories and snapshots for affine formation with an attack probability of $\beta=1$. (a) DSLC. (b) DRHRL.  The robots under DSLC formed a stable rectangular formation and dynamically expanded or contracted it to avoid collisions, whereas the robots under DRHRL failed to avoid the obstacles due to the attack.}
 \label{AF_B10_PC}
\end{figure}

The experimental results for the affine formation control of MRS are shown in Figs.~\ref{AF_B05_PC}–\ref{AF_B10_PC}.
The trajectories and snapshots of the MRS with an attack probability of $\beta=0.5$ are presented in Fig.~\ref{AF_B05_PC}(a). There are two obstacle regions: a narrow passage bordered by obstacles on both sides and a single obstacle located directly on the reference trajectory, both highlighted in blue (see Fig.~\ref{AF_B05_PC}(a)).
In the narrow passage, the multirobots, controlled by DSLC, perform an affine transformation to shrink the formation size and successfully navigate the confined space while maintaining the formation shape, as verified by the snapshot at $t=4$ s.
At $t=7$ s, the formation expands back to its original geometric size.
Subsequently, to avoid the obstacle on the reference path, the MRS again scales its formation to maneuver around the obstacle, as shown in the snapshot at $t=9$ s.

In contrast, as shown in Fig.~\ref{AF_B05_PC}(b), the mobile robots controlled by DRHRL experience formation disorder, collisions between robots, and collisions with obstacles.
Compared to DRHRL, DSLC can effectively control the MRS with an attack probability of $\beta=0.5$, allowing robots to follow the reference trajectory, maintain formation shapes, and safely navigate through obstacle areas.
Similar conclusions can be drawn as in Fig.~\ref{AF_B05_PC} for DSLC and DRHRL with $\beta=1$.
The IAE values, calculated using~\eqref{assess_control_performance}, for affine formation experiments under attack probabilities of $\beta=0.5$ and $\beta=1$ are summarized in Table~\ref{IAE_AffineFormation_Containment}.
The results show that the IAE values for DSLC are lower than those for DRHRL, and further demonstrate the superior performance and robustness of DSLC over DRHRL under cyber attacks.

\subsection{Containment Control under Attacks}
\begin{figure}[htb]
	\centering
	\includegraphics[width=0.9\columnwidth]{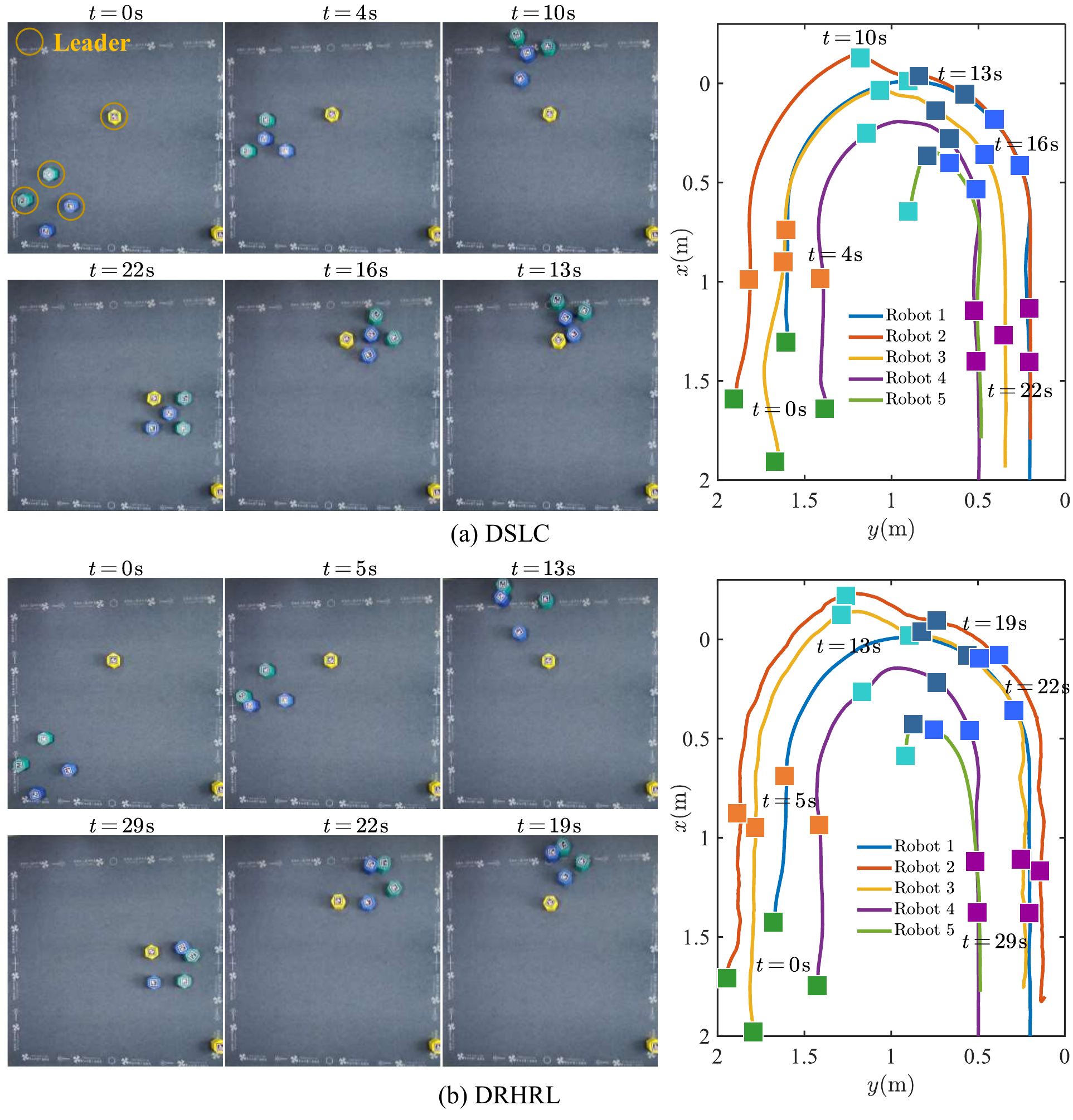}
	\caption{
    Trajectories and snapshots for containment with an attack probability of $\beta=0.5$. (a) DSLC. (b) DRHRL. The robots under DSLC formed a stable containment shape and successfully reconfigured to accommodate a newly joined robot. In contrast, the follower robot under DRHRL failed to effectively enter the convex hull and maintain safe distances, resulting in collisions.
    }
	\label{ContainmentB05_P_C}
\end{figure}
\begin{figure}[htb]
	\centering
	\includegraphics[width=0.9\columnwidth]{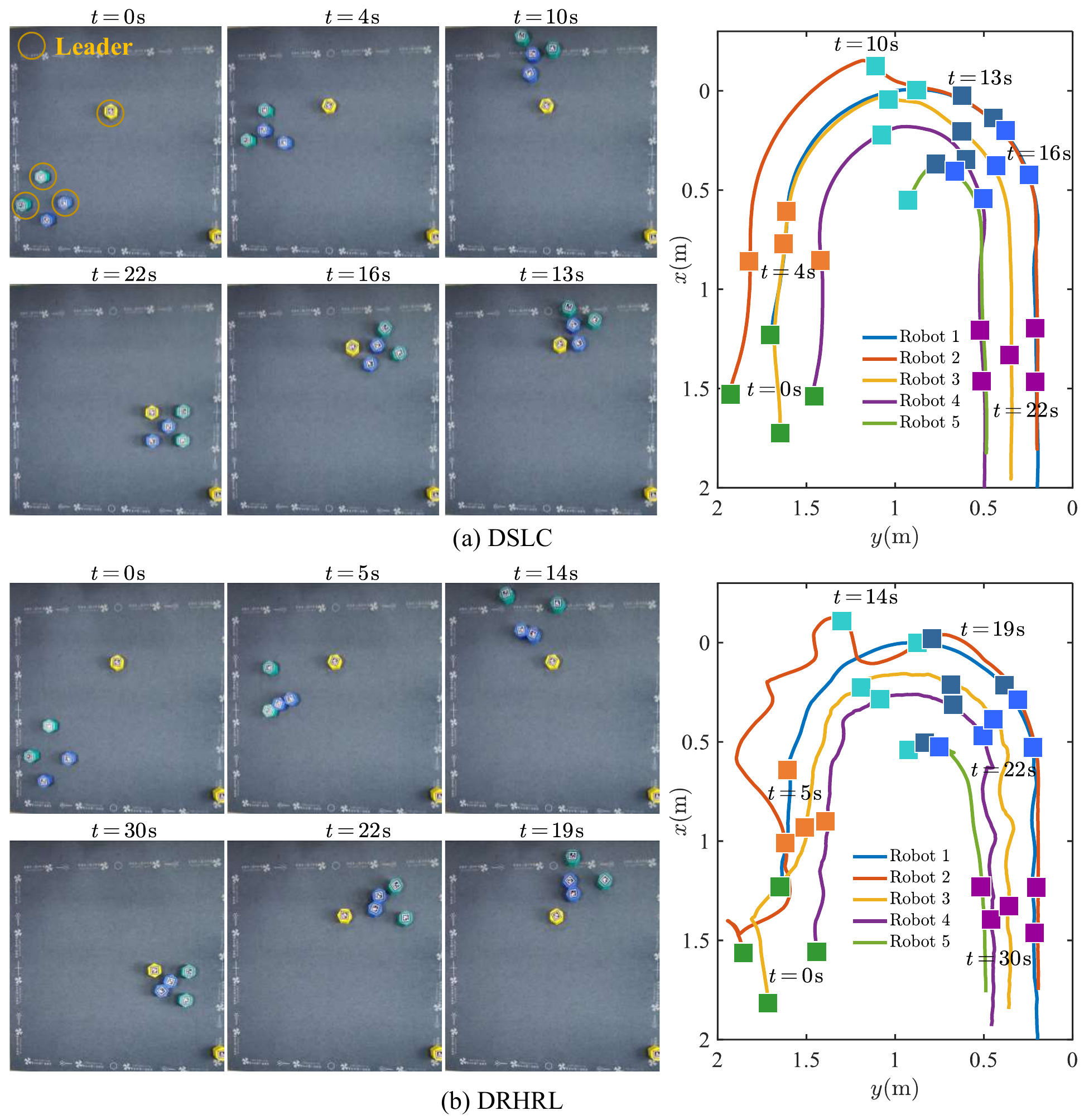}
	\caption{
    Trajectories and  snapshots for containment with an attack probability of $\beta=0.5$. (a) DSLC. (b) DRHRL. The robots under DSLC formed a stable containment shape and successfully reconfigured to accommodate a newly joined robot. In contrast, the follower robot under DRHRL failed to effectively enter the convex hull and maintain safe distances, resulting in collisions.
    }
	\label{ContainmentB10_P_C}
\end{figure}

We further validate the effectiveness of the proposed approach through experiments on containment control with $\beta=0.5$, $\beta=1$.
The MRS in the containment experiments includes five real robots and one virtual robot, and the reference trajectory is generated by the virtual robot, which is described as
\begin{equation}
\begin{aligned}
x_r(t)=&\left\{\begin{array}{ll}
L_{x,2} -v_rt&t\in[0,T_2)\\
C_{x,2} +\frac{v_r}{w_r}\cos{(\omega_r t+0.5\pi)}&t\in[T_2,T_3)\\
L_{x,3} -v_r(t-T_2)&t\in[T_3,\infty),\\
\end{array}\right.\\
y_r(t)=&\left\{\begin{array}{ll}
L_{y,2}&t\in[0,T_2)\\
C_{y,2} +\frac{v_r}{w_r}\sin{(\omega_r t+0.5\pi)}&t\in[T_2,T_3)\\
L_{y,3} &t\in[T_3,\infty),\\
\end{array}\right.\\
\end{aligned}
\end{equation}
where $L_{x,2}=1.8$, $L_{y,2}=1.6$,
$L_{x,3}=1.7$, $L_{y,3}=0.2$,
$C_{x,2}=0.7$, $C_{y,2}=0.9$,
$T_1=5.5$,  $T_2=16.5$, 
$v_r=0.2$, $\omega_r=0.29$ for $t\in [T_2,T_3)$ and $\omega_r=0$ for $t\in[0,T_2)\cup[T_3,\infty)$.
During $t\in [0,10)$, there existed leader robots and one follower robot in the containment, and after $t=10$s, another leader robot joined the MRS.
In the experiments, the relative desired position offset of the leader robots from the reference trajectory were given as 
\begin{equation}
\mathcal{F}_2=\left\{\begin{array}{cc}
 \left[\begin{matrix}
 0 & -0.195\sqrt{3}  & 0.195\sqrt{3} & \\
0 & -0.195 & 0.195 \\
\end{matrix}\right]^{\top}   &  t\in[0,10) \vspace{1mm}\\
\left[\begin{matrix}
0 & -0.3  & -0.15  & -0.3\\
0 & 0 & 0.15  & 0.3\\
\end{matrix}\right]^{\top}    & t\in[10,\infty).     \\
\end{array}\right.
\end{equation}
The control parameters in the experiments were set as in~\eqref{control_parameter_1}.  

The experimental results are shown in Figs.~\ref{ContainmentB05_P_C}–\ref{ContainmentB10_P_C}.
 The trajectories and snapshots of robots controlled by DSLC under attacks with $\beta=0.5$ is presented in Fig.~\ref{ContainmentB05_P_C}(a).
From $t=0$ s to $t=4$ s, the follower robot successfully entered the convex hull formed by the leader robots while maintaining safe distances.
During $t\in[4,10)$ s, the MRS remained stable in containment formation.
After $t=10$ s, an additional leader robot joined the MRS, and during $t\in[10,16)$ s, the follower robot reentered the updated convex hull formed by the leader robots, maintaining stability and safe interrobot distances thereafter.
The trajectories and snapshots of the multirobots controlled by DRHRL are shown in Fig.~\ref{ContainmentB05_P_C}(b).
It can be seen that the follower robot failed to effectively enter the convex hull and maintain safe distances, leading to collisions under $\beta=0.5$.

The experimental results under $\beta=1$ for both DSLC and DRHRL are provided in Fig.~\ref{ContainmentB10_P_C}.
The results show that DSLC efficiently guides the robots to follow the reference trajectory under $\beta=1$, ensuring that the follower robot enters the convex hull while maintaining safe distances to prevent collisions.
In contrast, under DRHRL, formation disorganization and collisions among the robots occurred, and the follower robot was unable to remain inside the convex hull under $\beta=1$.

The corresponding IAE results are summarized in Table~\ref{IAE_AffineFormation_Containment}, showing that DSLC achieves lower IAE values than DRHRL\cite{zhang2025toward} under attacks, thereby demonstrating its superior control performance.
\begin{table}[htbp]	
\centering
\caption{Performance Assessment of Affine Formation and Containment under Different Attacks in the Experiments }
\begin{tabular}{cccc}
\toprule
 Tasks&Approaches& $\beta=0.5$&$\beta=1$\\ 
 \midrule
 \multirow{2}{*}{Affine Formation}&DSLC&\textbf{4.41}&\textbf{4.06}\\
 &DRHRL\cite{zhang2025toward}&6.54&6.31\\
\midrule
\multirow{2}{*}{Containment}&DSLC&\textbf{21.44}&\textbf{15.84}\\
&DRHRL\cite{zhang2025toward}&31.66&40\\
\bottomrule
 \end{tabular}
 \label{IAE_AffineFormation_Containment}
\end{table}

\subsection{Discussion and Limitations}  
The extensive simulation and experimental results indicate that the proposed approach constitutes a generalizable framework for optimal cooperative control under malicious, stealthy actuator attacks across a broad range of coordination scenarios, including general formation, affine formation, and containment control. Moreover, our approach exhibits scalability for large-scale MRS under malicious, stealthy actuator attacks, owing to the optimization decomposition and heuristic policy iteration mechanism (see Table~\ref{tab:computation_time}  for the comparison of the computational time between DSLC and DRMPC). This enables real-time, distributed performance optimization to be implemented on small-scale onboard computing processors. Beyond its theoretical development and practical applicability, several limitations of the proposed approach should be acknowledged.

{\color{black}First, the current framework assumes a static communication network without communication delays. Extending the proposed method to support plug-and-play operations under switching communication topologies, as well as to ensure closed-loop robustness in the presence of communication delays, is left for future investigation.

Second, the attack probability
$\beta$ is assumed known in policy learning, which may be difficult to accurately estimate. Although Fig.~\ref{deploy-different} demonstrates that a defense policy designed for a specific attack probability can still be effective under mismatched attack probabilities, developing secure control strategies that are robust to unknown $\beta \in [0,1]$ remains a challenge problem.
 
Finally, our framework is formulated for general MRS governed by kinematic models and is, in principle, applicable to a wide range of robotic platforms, including ground and aerial robots. However, the experimental validation in this work is limited to wheeled ground robots. Extending the framework to aerial platforms will be explored in future studies.}

\section{Conclusions}
This paper proposes a distributed secure learning control framework for multirobots under malicious, stealthy actuator attacks. In our framework, adversaries that deteriorate control performance and defenders that secure the closed-loop system are modeled as adversarial players in a distributed zero-sum  differential game. Both defense and attack policies are optimized by a game-theoretic distributed learning-based predictive control approach. This approach integrates the receding horizon mechanism into the policy learning process, thus ensuring closed-loop stability and performance under suitable conditions. Notably, we designed a novel distributed attacker-actor-critic algorithm that can be performed online to efficiently implement the proposed DSLC approach. The resulting defense policies are of explicit closed-loop form, which could also be directly deployed (without retraining) to multirobots with varying scales and distinct attack distribution probabilities. This feature supports the scalability of our approach to secure control under attacks of large-scale MRS, which is, in general, nontrivial in a numerical optimization-based control framework. The simulation and experimental results, including comparisons with advanced approaches, have demonstrated the effectiveness and scalability of our approach. Future works will consider the extension to distributed secure control with communication delays unknown attack probabilities and even unknown dynamics with higher dimensions. 

\bibliographystyle{Bibliography/IEEEtranTIE}
\bibliography{Bibliography/,ref}

				\appendices
	\section{}\label{appendix}
\subsection{Functions in Section~\ref{problem_formulation}-A}\label{model-append}
The functions in \eqref{Eqn:LL} for different cooperative control tasks are described as follows.\\
(a) General formation
\begin{equation}
\begin{aligned}
&f_{i}(x_{\scriptscriptstyle \mathcal{N}_i}) =\\
&\left[
\begin{matrix}
c_{i} x_{r,4}\cos x_{ri,3} + h_i (x_{j,4} \cos x_{ji,3}) - (c_0 +h_i)x_{i,4}\\
c_i x_{r,4}\sin x_{ri,3} +  h_i (x_{j,4}\sin x_{ji,3})\\
\bm 0_{2\times 1}
\end{matrix} \right],
\end{aligned}
\end{equation}
\begin{equation}
g_i(x_i)=\left[\begin{matrix}
         \;\;\; 0, \;\;\;\;0, \;\;\;\;\;\;0, \;\;\;1\\
          x_{i,2},-x_{i,1},\;\;1,\;\;0\\
     \end{matrix}\right]^{\top};
\end{equation}
(b)  Affine formation
\begin{equation}
\begin{aligned}
&f_{i}(x_{\scriptscriptstyle \mathcal{N}_i}) =\\
&\left[
\begin{matrix}
\zeta_{ri} x_{r,4}\cos x_{ri,3} + h_i (x_{j,4} \cos x_{ji,3}) - (\zeta_{ri} +h_i)x_{i,4}\\
\zeta_{ri} x_{r,4}\sin x_{ri,3} +  h_i (x_{j,4}\sin x_{ji,3})\\
\bm 0_{2\times 1}
\end{matrix} \right],
\end{aligned}
\end{equation}
\begin{equation}
g_i(x_i)=\left[\begin{matrix}
         \;\;\; 0, \;\;\;\;0, \;\;\;\;\;\;0, \;\;\;1\\
          x_{i,1},-x_{i,2},\;\;1,\;\;0\\
     \end{matrix}\right]^{\top};
\end{equation}
(c) Containment
\begin{equation}
\begin{aligned}
&f_{i}(x_{\scriptscriptstyle \mathcal{N}_i}) =\\
&\left[
\begin{matrix}
h_i^l (x_{r_j,4}\cos x_{r_j i,3}) + h_i (x_{j,4} \cos x_{ji,3}) - (h_i^l  +h_i)x_{i,4}\\
h_i^l(x_{r_j,4}\sin x_{r_ji,3}) +  h_i (x_{j,4}\sin x_{ji,3})\\
\bm 0_{2\times 1}
\end{matrix} \right],
\end{aligned}
\end{equation}
\begin{equation}
g_i(x_i)=\left[\begin{matrix}
         \;\;\; 0, \;\;\;\;0, \;\;\;\;\;\;0, \;\;\;1\\
          x_{i,1},-x_{i,2},\;\;1,\;\;0\\
     \end{matrix}\right]^{\top},
\end{equation}
where $h_i(\cdot) = \sum_{j\in\mathcal{N}_i} (\cdot)$, $h_i^l(\cdot) = \sum_{j\in\mathcal{N}_i^l} (\cdot)$, $x_{ji,\ast}= x_{j,\ast}-x_{i,\ast}$, $x_{ri,\ast}= x_{r,\ast}-x_{i,\ast}$, $x_{r_j i,\ast}= x_{r_j,\ast}-x_{i,\ast}$, and $x_{i,\ast}$,  $x_{j,\ast}$, $x_{r,\ast}$, $x_{r_j,\ast}$ denote the $\ast$-th element of  $x_{i}$,  $x_{j}$, $x_{r}$, $x_{r_j}$, respectively.

\subsection{Closed-loop Stability Analysis}\label{appB}
Assume that at time $t_0=0$, the prediction horizon $t_p$ is selected so that the terminal state $x_i(t_0+t_p)$ is in the neighbor of the origin under the proposed policy learning approach.  Then, the following stability results are stated.
\begin{theorem}[Closed-loop stability]\label{stability_analysis}
    If there exists $P_i$ such that~\eqref{Lyapunov-equation} is met and the optimal problem \eqref{Eqn:optimiz} with \eqref{Eqn:HL_cost} is feasible at the initial time $t_0=0$, the closed-loop system~\eqref{Eqn:C full model} under optimal policies \eqref{Eqn:optimal} is asymptotically stable.
\end{theorem}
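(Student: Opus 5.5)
We follow the standard MPC stability argument with the optimal value function as Lyapunov function. Let $J^\ast(x(t_0))=\sum_{i=1}^M J_i^\ast(x_{\scriptscriptstyle\mathcal{N}_i}(t_0))$ be the min-max value of \eqref{Eqn:optimiz}. The plan is to show three things:
\begin{enumerate}[(i)]
\item $J^\ast$ is positive definite and bounded above by a class-$\mathcal{K}$ function near the origin;
\item recursive feasibility holds, via a shifted candidate defense policy;
\item $J^\ast$ strictly decreases along the closed loop, with a decrease bounded below by $\int\|x\|_Q^2$.
\end{enumerate}
Asymptotic stability then follows from a Lyapunov/Barbalat argument.

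\textbf{Step 1: terminal region.} We take $V_f(x)=\sum_i\|x_i\|_{P_i}^2=\sum_i x_{\scriptscriptstyle\mathcal{N}_i}^\top T_{\scriptscriptstyle\mathcal{N}_i}^\top P_iT_{\scriptscriptstyle\mathcal{N}_i}x_{\scriptscriptstyle\mathcal{N}_i}$. Along the linearized dynamics plus $\phi_i$, we apply the terminal laws $u_{s,i}=K_{\scriptscriptstyle\mathcal{N}_{s,i}}x_{\scriptscriptstyle\mathcal{N}_i}$ and $u_{a,i}=K_{\scriptscriptstyle\mathcal{N}_{a,i}}x_{\scriptscriptstyle\mathcal{N}_i}$ and take expectations over $\beta_{t,i}$, using $\mathbb{E}\beta_{t,i}=\beta_i$. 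We then bound $2x_i^\top P_i\phi_i\le 2\|T_{\scriptscriptstyle\mathcal{N}_i}^\top P_i\|L_\phi\|x_{\scriptscriptstyle\mathcal{N}_i}\|^2$ near the origin, invoke \eqref{Lyapunov-equation} together with $\tau_P I\ge 2T^\top_{\scriptscriptstyle\mathcal{N}_i}P_iL_{\phi_i}$, and sum over $i$ so that the coupling terms are absorbed by $\sum_i\Gamma_{\scriptscriptstyle\mathcal{N}_i}\le0$. This yields
\[
\mathbb{E}\{\dot V_f\}+\sum_i \mathbb{E}\{r_i\}\le 0
\]
in a sublevel set $\Omega=\{V_f\le\alpha\}$, with $\alpha$ chosen small enough that the $L_\phi$ bound is valid there. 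By assumption, $x(t_0+t_p)\in\Omega$ at $t_0=0$.

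\textbf{Step 2: feasibility and decrease.} At $t_0$ the optimal pair is $(u_s^\ast,u_a^\ast)$. At $t_0+t_c$ we build the candidate defense by concatenation: $u_s^\ast$ on $[t_0+t_c,t_0+t_p]$, followed by $K_{\scriptscriptstyle\mathcal{N}_{s,i}}x_{\scriptscriptstyle\mathcal{N}_i}$ on $(t_0+t_p,t_0+t_c+t_p]$. Step 1 shows $\Omega$ is invariant under this terminal law against the attack, which gives feasibility and terminal membership at the next step. Because the problem is min-max, we evaluate the candidate against the worst-case attack $\tilde u_a$ at $t_0+t_c$. Then
\[
J^\ast(x(t_0+t_c))\le \max_{\tilde u_a}J(\text{candidate},\tilde u_a).
\]
On the overlapping interval, the attacker's best response cannot beat its optimal value from $t_0$, by the saddle-point property of the HJI solution \eqref{Eqn:hjb} with \eqref{Eqn:optimal}. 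On the tail interval, Step 1 bounds the cost: the added stage cost plus the terminal change is nonpositive for any attack, since \eqref{Lyapunov-equation} already accounts for the attacker's energy term weighted by $\beta_i$. Combining these gives
\[
\mathbb{E}\{J^\ast(x(t_0+t_c))\}-J^\ast(x(t_0))\le-\mathbb{E}\int_{t_0}^{t_0+t_c}\big(\|x\|_Q^2+\|u_s\|^2_{R_s}-\beta\|u_a\|^2_{R_a}\big)d\tau.
\]

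\textbf{Step 3: conclusion.} Along the saddle-point trajectory, the attack term is what makes the inequality delicate. We handle it by rewriting the stage cost via the optimal policies \eqref{Eqn:optimal}, or else by using $Q_i^\ast\succ0$ in the terminal phase, so that the right-hand side is bounded by $-\lambda_{\min}(Q)\int\|x\|^2$. Summing over successive intervals makes $\int_0^\infty\|x\|^2$ finite. Since $x$ is bounded and $\dot x$ is bounded, Barbalat's lemma gives $x\to0$. Lyapunov stability follows from $J^\ast$ being sandwiched between class-$\mathcal{K}$ bounds near $0$, using $Q_i\succ0$ for the lower bound and the terminal-cost bound in $\Omega$ for the upper bound.

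\textbf{Main obstacle.} The hardest part is Step 2's handling of the attacker. The maximizer at $t_0+t_c$ differs from the one at $t_0$, and the $-\beta\|u_a\|^2$ term makes the stage cost sign-indefinite. I would first try to rely on the saddle-point (min-max equals max-min) property of \eqref{Eqn:optimal} together with the $\beta_i$-weighted Lyapunov inequality \eqref{Lyapunov-equation}, which was designed for exactly this. If that fails, the fallback is an $L_2$-gain argument in which the attack-energy term is dominated by the $Q_i^\ast$ margin.
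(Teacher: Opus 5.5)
Your architecture matches the paper's. Both use the terminal inequality from \eqref{Lyapunov-equation}, with the coupling absorbed by $\sum_i\Gamma_{\scriptscriptstyle\mathcal{N}_i}\le0$. Both extend the optimal defense by $K_{\scriptscriptstyle\mathcal{N}_{s,i}}x_{\scriptscriptstyle\mathcal{N}_i}$, and both obtain a decrease of $J$ by $\sum_i\int\chi_i\,d\tau$. The paper packages your shifted candidate as monotonicity of $S_i(x_{\scriptscriptstyle\mathcal{N}_i}(t_0),\sigma)$ in the horizon endpoint plus a dynamic-programming inequality, which amounts to the same thing.

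\textbf{Step~3 fails as you argue it.} Neither of your two remedies gives the bound $-\lambda_{\min}(Q)\int\|x\|^2$.
\begin{itemize}
\item The condition $Q_i^\ast\succ0$ only concerns the linear terminal gains. It says nothing about the inputs actually applied on $[t_0,t_0+t_c]$.
\item Rewriting via \eqref{Eqn:optimal} gives $u_{a,i}^\ast=-R_{a,i}^{-1}R_{s,i}u_{s,i}^\ast$. With $p_i=\tfrac12\sum_{j\in\bar{\mathcal{N}}_i}g_i^\top\nabla J_j^\ast$, the control part of $\chi_i$ is $p_i^\top(R_{s,i}^{-1}-\beta_iR_{a,i}^{-1})p_i$. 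This is nonnegative for all $p_i$ only if $R_{a,i}\succeq\beta_iR_{s,i}$.
\end{itemize}
With the paper's own weights $R_{s,i}=0.5I_2$ and $R_{a,i}=0.1I_2$, this control part equals $(2-10\beta_i)\|p_i\|^2$. That is negative for $p_i\neq0$ whenever $\beta_i>0.2$. So your bound does not follow from the theorem's hypotheses.

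The paper closes this point only by an explicit extra requirement: ``with a suitable weight matrix $R_{a,i}$, $\chi_i(\tau)\ge0$ can be guaranteed.'' You need to add such a condition before the summation and Barbalat step. Examples are $R_{a,i}\succeq\beta_iR_{s,i}$, or a quantitative bound letting $\|x_{\scriptscriptstyle\mathcal{N}_i}\|_{Q_i}^2$ dominate. Once that is in place, your conclusion is more complete than the paper's closing ``$\Delta J\to0$''.

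\textbf{Your tail estimate rests on a related unjustified claim.} \eqref{Lyapunov-equation} certifies $\mathbb{E}\{\dot V_f\}+\sum_i\mathbb{E}\{r_i\}\le0$ only for the particular attack $K_{\scriptscriptstyle\mathcal{N}_{a,i}}x_{\scriptscriptstyle\mathcal{N}_i}$, not ``for any attack.'' Let $c_i=R_{a,i}^{-1}B_i^\top P_ix_i$. Replacing that attack by an arbitrary $\tilde u_{a,i}$ changes the left-hand side by $\beta_i\big(\|K_{\scriptscriptstyle\mathcal{N}_{a,i}}x_{\scriptscriptstyle\mathcal{N}_i}-c_i\|_{R_{a,i}}^2-\|\tilde u_{a,i}-c_i\|_{R_{a,i}}^2\big)$. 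This is positive whenever $\tilde u_{a,i}$ lies closer to $c_i$ than $K_{\scriptscriptstyle\mathcal{N}_{a,i}}x_{\scriptscriptstyle\mathcal{N}_i}$ does.

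So you need $K_{\scriptscriptstyle\mathcal{N}_{a,i}}=R_{a,i}^{-1}B_i^\top P_iT_{\scriptscriptstyle\mathcal{N}_i}$, the pointwise maximizer. Even then you only get an $L_2$-type dissipation inequality, not invariance of $\Omega$ against arbitrary attacks: large attacks can still increase $V_f$.

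The paper makes the same tacit move. It maximizes both sides of an inequality that was derived with the tail attack fixed at $K_{\scriptscriptstyle\mathcal{N}_{a,i}}x_{\scriptscriptstyle\mathcal{N}_i}$. So this weakness is shared, but you should state it as a design assumption on $K_{\scriptscriptstyle\mathcal{N}_{a,i}}$ rather than claim it follows from \eqref{Lyapunov-equation}.
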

\begin{IEEEproof}
The terminal cost function for the $i$-th robot is considered as $V_i = \mathbb{E}\{ x_i^{\top} P_i x_i\} $.  Then, taking the derivative of $V_i$ along system \eqref{Eqn:LL} with the admissible control policies  $ u_{s,i} = K_{\scriptscriptstyle \mathcal{N}_{s,i}} x_{\scriptscriptstyle\mathcal{N}_i}$, $ u_{a,i} = K_{\scriptscriptstyle \mathcal{N}_{a,i}} x_{\scriptscriptstyle\mathcal{N}_i}$, and combining with~\eqref{Lyapunov-equation}, we can obtain
\begin{equation}\label{V-1}
\begin{aligned}
    \dot V_i = & \mathbb{E}\{\dot x_i^{\top} P_i x_i + x_i^{\top} P_i \dot x_i\}\\
             = & (A_{\scriptscriptstyle\mathcal{N}_i} x_{\scriptscriptstyle\mathcal{N}_i} + B_i K_{\scriptscriptstyle \mathcal{N}_{s,i}} x_{\scriptscriptstyle\mathcal{N}_i} + \beta_{i} B_i K_{\scriptscriptstyle \mathcal{N}_{a,i}} x_{\scriptscriptstyle\mathcal{N}_i} + \phi_i)^{\top} P_i x_i \\
             &+ x_i^{\top} P_i (A_{\scriptscriptstyle\mathcal{N}_i} x_{\scriptscriptstyle\mathcal{N}_i} + B_i K_{\scriptscriptstyle \mathcal{N}_{s,i}} x_{\scriptscriptstyle\mathcal{N}_i} + \beta_{i} B_i K_{\scriptscriptstyle \mathcal{N}_{a,i}} x_{\scriptscriptstyle\mathcal{N}_i} + \phi_i)\\
            \le & -x_{\scriptscriptstyle\mathcal{N}_i}^{\top}(\tau_{\scriptscriptstyle P} I - 2 T_{\scriptscriptstyle\mathcal{N}_i}^{\top} P_i L_{\phi_i} )x_{\scriptscriptstyle\mathcal{N}_i} - \|x_{\scriptscriptstyle\mathcal{N}_i}\|_{Q_i^*}^2+\|x_{\scriptscriptstyle\mathcal{N}_i}\|_{\Gamma_{\scriptscriptstyle \mathcal{N}_i}}^2 \\
            \le & - \chi_i (x_{\scriptscriptstyle \mathcal{N}_i},u_{s,i}, u_{a,i})+\|x_{\scriptscriptstyle\mathcal{N}_i}\|_{\Gamma_{\scriptscriptstyle \mathcal{N}_i}}^2:=\bar{\chi}_i (x_{\scriptscriptstyle \mathcal{N}_i},u_{s,i}, u_{a,i}),
\end{aligned}
\end{equation}
where $\chi_i(x_{\scriptscriptstyle \mathcal{N}_i},u_{s,i}, u_{a,i}) = \| x_{\scriptscriptstyle \mathcal{N}_i}\|_{Q_i}^{2}+\| u_{s,i}\|_{R_{s,i}}^{2}-\beta_{i} \|u_{a,i}\|^2_{R_{a,i}}$. Since $Q_i^*>0$, $\chi_i(x_{\scriptscriptstyle \mathcal{N}_i},u_{s,i}, u_{a,i}) >0 $ can be guaranteed.
For $\sigma_1 \le \sigma_2$, integrating both side of~\eqref{V-1} yields
\begin{equation}\label{V-3}
    \begin{aligned}
    V_i(\sigma_2) \le V_i(\sigma_1) - \int_{\sigma_1 }^{\sigma_2} \bar{\chi}_i(x_{\scriptscriptstyle \mathcal{N}_i},u_{s,i}, u_{a,i}) d\tau.
    \end{aligned}
\end{equation}
Since the optimal problem is feasible, there exists an optimal control policy $u_{s,i}^*$.
Based on the optimal control policy $u_{s,i}^*$, we select
\begin{equation}\label{V-5}
    \begin{aligned}
        & \bar u_{s,i}(t) = \left\{\begin{array}{ll}
            u_{s,i}^*(t)&  t \in [ t_0,\sigma_1)\\
             K_{\scriptscriptstyle \mathcal{N}_{s,i}} x_{\scriptscriptstyle \mathcal{N}_i}(t)&  t  \in [\sigma_1,\sigma_2),\\
        \end{array}\right.\\
    \end{aligned}
\end{equation}
with $u_{a,i}(t) = K_{\scriptscriptstyle \mathcal{N}_{a,i}} x_{\scriptscriptstyle \mathcal{N}_i}(t)$ for $t \in [\sigma_1,\sigma_2)$.
Then, define that $S_i\left(x_{\scriptscriptstyle \mathcal{N}_i}, \bar u_{s,i},u_{a,i}\right) = \underset{u_{a,i}}{\max}\,\mathbb{E}\{J_i(x_{\scriptscriptstyle \mathcal{N}_i},u_{s,i}^*,u_{a,i})\}$. 
The expectation of $J_i(x_{\scriptscriptstyle \mathcal{N}_i}(t))$ can be written as
\begin{equation}\label{V-4}
    \begin{aligned}
        V_i(\sigma_2) + \int_{t_0}^{\sigma_2}\chi_i(\tau) d\tau = &V_i(\sigma_2) - V_i(\sigma_1) + \int_{\sigma_1}^{\sigma_2}\chi_i(\tau) d\tau \\
        &+ V_i(\sigma_1) +     \int_{t_0}^{\sigma_1}\chi_i(\tau) d\tau.
    \end{aligned}
\end{equation}
According to~\eqref{V-3},~\eqref{V-4} becomes
\begin{equation}\label{V-6}
    \begin{aligned}
    V_i(\sigma_2) + \int_{t_0}^{\sigma_2}\chi_i(\tau) d\tau \le  V_i(\sigma_1) +&     \int_{t_0}^{\sigma_1}\chi_i(\tau) d\tau+\\
    &\int_{\sigma_1}^{\sigma_2}\|x_{\scriptscriptstyle \mathcal{N}_i}(\tau)\|_{\Gamma_{\scriptscriptstyle \mathcal{N}_i}}^2.
    \end{aligned}
\end{equation}
Thus, we can obtain
\begin{equation*}
    \begin{aligned}
    &\underset{u_{a,i}}{\max}\left\{V_i(\sigma_2) + \int_{t_0}^{\sigma_2}\chi_i(\tau) d\tau\right\} \\
    & \quad\quad\le  \underset{u_{a,i}}{\max}\left\{V_i(\sigma_1) +     \int_{t_0}^{\sigma_1}\chi_i(\tau) d\tau \right\}+\int_{\sigma_1}^{\sigma_2}\|x_{\scriptscriptstyle \mathcal{N}_i}(\tau)\|_{\Gamma_{\scriptscriptstyle \mathcal{N}_i}}^2.
    \end{aligned}
\end{equation*}
The maximum value of~\eqref{V-6} is $ S_i(x_{\scriptscriptstyle \mathcal{N}_i}(t_0),\sigma) $, and then, it leads to
\begin{equation}\label{V-7}
    \begin{aligned}
    S_i(x_{\scriptscriptstyle \mathcal{N}_i}(t_0),\sigma_1) +\int_{\sigma_1}^{\sigma_2}\|x_{\scriptscriptstyle \mathcal{N}_i}(\tau)\|_{\Gamma_{\scriptscriptstyle \mathcal{N}_i}}^2 d\tau\ge S_i(x_{\scriptscriptstyle \mathcal{N}_i}(t_0),\sigma_2). 
    \end{aligned}
\end{equation}
According to the definition of $S_i$, for any admissible attack input $u_{a,i}$ in $t \in [t_0,t_0+ t_p]$, we have 
\begin{equation}\label{V-8}
\begin{aligned}
    &S_i(x_{\scriptscriptstyle \mathcal{N}_i}(t_0),t_0+t_p) \ge V_i(t_0 + t_p) + \int_{t_0}^{t_0+t_p}\chi_i(\tau) d\tau\\
    &= V_i(t_0 + t_p)+\int_{t_0}^{t}\chi_i(\tau) d\tau + \int_{t}^{t_0+t_p}\chi_i(\tau) d\tau.
\end{aligned}
\end{equation}
Similarly, we obtain
\begin{equation}\label{V-9}
\begin{aligned}
    &S_i(x_{\scriptscriptstyle \mathcal{N}_i}(t_0),t_0+t_p) \ge \int_{t_0}^{t}\chi_i(\tau) d\tau \\
    &\quad\quad\quad\quad\quad+ \underset{u_{a,i}}{\max}\left\{V_i(t_0 + t_p)+ \int_{t}^{t_0+t_p}\chi_i(\tau) d\tau\right\}.
\end{aligned}
\end{equation}
For any attack input in $\tau \in [t,t_0+t_p]$, $S_i(x_{\scriptscriptstyle \mathcal{N}_i}(t),t_0+t_p) \le \underset{u_{a,i}}{\max}\left\{V_i(t_0 + t_p)+ \int_{t}^{t_0+t_p}\chi_i(\tau) d\tau\right\}$ holds, and then~\eqref{V-9} can be rewritten as
\begin{equation}\label{V-10}
\begin{aligned}
    &S_i(x_{\scriptscriptstyle \mathcal{N}_i}(t_0),t_0+t_p) \ge \int_{t_0}^{t}\chi_i(\tau) d\tau + S_i(x_{\scriptscriptstyle \mathcal{N}_i}(t),t_0+t_p).\\ 
\end{aligned}
\end{equation}
Due to $t_0 +t_p \le t+t_p$, combining with Eqs.~\eqref{V-7},~\eqref{V-10}, yields
\begin{equation*}
\begin{aligned}
    S_i(x_{\scriptscriptstyle \mathcal{N}_i}(t_0),t_0+t_p) \ge& \int_{t_0}^{t}\chi_i(\tau) d\tau + S_i(x_{\scriptscriptstyle \mathcal{N}_i}(t),t+t_p)  \\
    &-\int_{t_0+t_p}^{t+t_p}\|x_{\scriptscriptstyle \mathcal{N}_i}(\tau)\|_{\Gamma_{\scriptscriptstyle \mathcal{N}_i}}^2d\tau.
\end{aligned}
\end{equation*}
By utilizing the above results iteratively, for any $t \le t+\Delta t$, we have 
\begin{equation}\label{V-11}
\begin{aligned}
    S_i(x_{\scriptscriptstyle \mathcal{N}_i}(t),t+\Delta t+t_p) \ge -\int_{t+t_p}^{t+\Delta t+t_p}\|x_{\scriptscriptstyle \mathcal{N}_i}(\tau)\|_{\Gamma_{\scriptscriptstyle \mathcal{N}_i}}^2d\tau \\ 
       \int_{t}^{t+\Delta t}\chi_i(\tau) d\tau + S_i(x_{\scriptscriptstyle \mathcal{N}_i}(t+\Delta t),t+\Delta t+t_p).
\end{aligned}
\end{equation}
Considering $J_i(t) = S_i(x_{\scriptscriptstyle \mathcal{N}_i}(t), t+t_p) $,
$J(t_0+\Delta t) - J(t_0)  = \sum_{i=1}^M \left(J_i(t_0+\Delta t) -J_i(t_0)\right)$. Setting $\Delta t=t_c$, according to \eqref{V-11}, we can obtain $J(t_0+t_c) - J(t_0)= -\sum_{i=1}^M \int_{\tau=t_0}^{t_0+t_c} \chi_i(\tau) d\tau+\int_{t_0+t_p}^{t_0+ t_c+t_p}\|x_{\scriptscriptstyle \mathcal{N}_i}(\tau)\|_{\Gamma_{\scriptscriptstyle \mathcal{N}_i}}^2d\tau$. Note that, $\sum_{i=1}^M \Gamma_{\scriptscriptstyle \mathcal{N}_i}\leq 0$, one has $\sum_{i=1}^M\int_{t_0+t_p}^{t_0+ t_c+t_p}\|x_{\scriptscriptstyle \mathcal{N}_i}(\tau)\|_{\Gamma_{\scriptscriptstyle \mathcal{N}_i}}^2d\tau\leq 0$. In addition, with a suitable weight matrix $R_{a,i}$, $\chi_i(\tau) \ge 0 $ can be guaranteed. Hence, we can obtain $  \Delta J(t) \rightarrow 0$ as $t\rightarrow+\infty$, and the closed-loop system is asymptotically stable.
\end{IEEEproof}

\subsection{Learning Convergence and Practical Stability Verification}\label{app:closed-loop}
\emph{Learning convergence}: We first give the convergence analysis of the distributed attacker-actor-critic algorithm in each prediction interval.
%
To this end, according to the Weierstrass higher order approximation theorem \cite{vamvoudakis2010online}, one writes
the optimal cost $J_i^*(x_{\scriptscriptstyle \mathcal{N}_i}(t))$ at each time  $t\in [t_0,\,t_0+t_p]$
as
\begin{equation}\label{target-critic}
    J_i^*(x_{\scriptscriptstyle{\mathcal{N}}_i}(t)) = W_{c,i}^{\top} \varphi_{c,i}(x_{\scriptscriptstyle{\mathcal{N}}_i}(t)) + \varepsilon_{c,i} (x_{\scriptscriptstyle{\mathcal{N}}_i}(t)),
\end{equation}
where $W_{c,i}$ is the optimal weight matrix, $\varepsilon_{c,i}(x_{\scriptscriptstyle{\mathcal{N}}_i}(t))$ denote the residual error of the neural network for the $i$-th robot, respectively.
Taking the derivative of~\eqref{target-critic} on $x_{\scriptscriptstyle {\mathcal{N}}_i}(t)$ leads to
\begin{equation*}
\triangledown J_i^*= \triangledown\varphi_{c,i}^{\top} W_{c,i} + \triangledown \varepsilon_{c,i},\\
\end{equation*}
where $\triangledown J_i^*=\frac{\partial J_i^*}{\partial x_{\scriptscriptstyle {\mathcal{N}}_i(t)}}$, 
$\triangledown \varphi_{c,i} = \frac{\partial \varphi_{c,i} }{\partial x_{\scriptscriptstyle {\mathcal{N}}_i(t)}}$, 
and $\triangledown \varepsilon_{c,i} = \frac{\partial \varepsilon_{c,i} }{\partial x_{\scriptscriptstyle{\mathcal{N}}_i(t)}}$.

Likewise, the optimal control inputs for the $i$-th robot can be represented as follows:
\begin{equation*}
\begin{aligned}
u_{a,i}^*(x_{\scriptscriptstyle{\mathcal{N}}_i}(t))&=W_{a,i}^{\top} \varphi_{a,i}(x_{\scriptscriptstyle{\mathcal{N}}_i}(t)) + \varepsilon_{a,i} (x_{\scriptscriptstyle{\mathcal{N}}_i}(t))\\
u_{s,i}^*(x_{\scriptscriptstyle{\mathcal{N}}_i}(t))&=W_{s,i}^{\top} \varphi_{s,i}(x_{\scriptscriptstyle{\mathcal{N}}_i}(t)) + \varepsilon_{s,i} (x_{\scriptscriptstyle{\mathcal{N}}_i}(t)),\\
\end{aligned}
\end{equation*}
where $W_{a,i}$ and $W_{s,i}$ are the optimal weight matrices; $\varepsilon_{a,i}$ and $\varepsilon_{s,i} $ are the residual errors of the neural networks.

The following basic assumption holds.
\begin{assumption}[Neural networks~\cite{vamvoudakis2010online}]\label{assmption-NN}
The optimal weight matrices,  approximation errors,  activation functions, and associated gradients of the neural networks, are bounded on a compact set $ \Omega $, i.e., 
\begin{equation*}
    \begin{aligned}
    & \|W_{*,i}\| \le \bar W_{*},\;\|\varepsilon_{*,i}\| \le \bar \varepsilon_*,\; \|\varphi_{*,i}\| \le \bar \varphi_*,\;(*=c,a,s),\\
    & \|\triangledown\varphi_{c,i}\| \le \triangledown \bar \varphi_c,\; \|\triangledown\varepsilon_{c,i}\| \le \triangledown \bar \varepsilon_c,
    \end{aligned}
\end{equation*}
where $\bar W_{*},\; \bar \varepsilon_*,\;\bar \varphi_*\;(*=c,a,s)$, $\triangledown \bar \varphi_c$, $ \triangledown \bar \varepsilon_c$  are positive constants.
\end{assumption}

Let $\Tilde{W}_{c,i} = W_{c,i} - \hat W_{c,i}$, 
    $\tilde{W}_{a,i} = W_{a,i} - \hat W_{a,i}$, $\tilde{W}_{s,i} = W_{s,i} - \hat W_{s,i}$. The following convergence result can be stated.
\begin{theorem}[Convergence analysis]\label{convergence_analysis}
If the persistent excitation conditions of $\varphi_{*,i}\; (*=c,a,s)$ are satisfied and Assumption~\ref{assmption-NN} is fulfilled, then the approximation error matrices $\tilde{W}_{c,i} $, $ \tilde{W}_{a,i}$, and $\tilde{W}_{s,i}$ are uniformly ultimately bounded.
\end{theorem}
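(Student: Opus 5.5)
We build a composite Lyapunov function over all $M$ robots,
\begin{equation*}
L=\sum_{i=1}^M\Big(\tfrac{1}{2\eta_{c,i}}\tilde W_{c,i}^{\top}\tilde W_{c,i}+\tfrac{1}{2\eta_{a,i}}\mathrm{tr}(\tilde W_{a,i}^{\top}\tilde W_{a,i})+\tfrac{1}{2\eta_{s,i}}\mathrm{tr}(\tilde W_{s,i}^{\top}\tilde W_{s,i})\Big),
\end{equation*}
and show that $\dot L$ is negative outside a compact ball. This is the standard Lyapunov argument of \cite{vamvoudakis2010online}, transposed to the integral form~\eqref{HJB-IRL} and the three-network structure.

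\textbf{Critic error dynamics.} Substituting~\eqref{target-critic} into~\eqref{HJB-IRL} gives
\begin{equation*}
\mathbb{E}\Big\{\int_t^{t+T}r_i\,d\tau\Big\}=-W_{c,i}^{\top}\Delta\varphi_{c,i}-\Delta\varepsilon_{c,i},
\end{equation*}
where $\Delta\varepsilon_{c,i}$ is the residual difference, bounded by $2\bar\varepsilon_c$ under Assumption~\ref{assmption-NN}. The integrand $\hat r_i$ in~\eqref{update-law-critic} differs from $r_i$ only through $\hat u_{s,i}-u_{s,i}^*$ and $\hat u_{a,i}-u_{a,i}^*$. These differences are affine in $\tilde W_{s,i},\tilde W_{a,i}$ plus residuals, so the resulting term $\delta_i$ is bounded on $\Omega$ by constants times $\|\tilde W_{s,i}\|,\|\tilde W_{a,i}\|$ plus a constant. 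Rewriting~\eqref{update-law-critic} then yields
\begin{equation*}
\dot{\tilde W}_{c,i}=-\eta_{c,i}\,\bar\varphi_i\bar\varphi_i^{\top}\tilde W_{c,i}+\eta_{c,i}\,\frac{\Delta\varphi_{c,i}}{(1+\Delta\varphi_{c,i}^{\top}\Delta\varphi_{c,i})^2}(\Delta\varepsilon_{c,i}-\delta_i),
\end{equation*}
with $\bar\varphi_i=\Delta\varphi_{c,i}/(1+\Delta\varphi_{c,i}^{\top}\Delta\varphi_{c,i})$. Persistent excitation of $\varphi_{c,i}$ gives $\bar\varphi_i\bar\varphi_i^{\top}\succeq\lambda_{c}I$ in the averaged sense, which supplies the negative term $-\lambda_c\|\tilde W_{c,i}\|^2$. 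Note that $\|\bar\varphi_i\|\le 1$ and the normalized regressor is bounded by $1$.

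\textbf{Actor/attacker error dynamics.} Using $u_{s,i}^*=-\tfrac12R_{s,i}^{-1}\sum_{j}g_i^{\top}(\triangledown\varphi_{c,j}^{\top}W_{c,j}+\triangledown\varepsilon_{c,j})$, the target is $\rho_{u_{s,i}^d}=u_{s,i}^*+\tfrac12R_{s,i}^{-1}\sum_{j\in\bar{\mathcal N}_i}g_i^{\top}(\triangledown\varphi_{c,j}^{\top}\tilde W_{c,j}+\triangledown\varepsilon_{c,j})$. Therefore
\begin{equation*}
\dot{\tilde W}_{s,i}=-\eta_{s,i}\varphi_{s,i}\varphi_{s,i}^{\top}\tilde W_{s,i}-\eta_{s,i}\varphi_{s,i}\Big(\varepsilon_{s,i}+\tfrac12R_{s,i}^{-1}\sum_{j\in\bar{\mathcal N}_i}g_i^{\top}(\triangledown\varphi_{c,j}^{\top}\tilde W_{c,j}+\triangledown\varepsilon_{c,j})\Big)^{\top}.
\end{equation*}
The same form holds for $\tilde W_{a,i}$ with $R_{a,i}$ and the opposite sign. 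Persistent excitation of $\varphi_{s,i},\varphi_{a,i}$ gives $-\lambda_s\|\tilde W_{s,i}\|^2$ and $-\lambda_a\|\tilde W_{a,i}\|^2$. The cross terms couple $\tilde W_{s,i},\tilde W_{a,i}$ to the neighbors' $\tilde W_{c,j}$, with coefficients bounded via $\bar g$ (where $g_i$ is bounded on $\Omega$), $\triangledown\bar\varphi_c$ and $\bar\varphi_{s},\bar\varphi_a$.

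\textbf{Assembling $\dot L$ (main obstacle).} Summing over $i$ produces a quadratic form $-\mathcal W^{\top}\mathcal M\mathcal W+\mathcal b^{\top}\mathcal W$ in the stacked vector $\mathcal W=\mathrm{col}_i(\|\tilde W_{c,i}\|,\|\tilde W_{a,i}\|,\|\tilde W_{s,i}\|)$, where $\mathcal b$ collects the residual bounds. The hard part is that the inter-robot coupling from the distributed gradient sums over $\bar{\mathcal N}_i$ must not destroy the positive definiteness of $\mathcal M$. The coupling appears in two places: actor/attacker $\leftrightarrow$ neighbor critics, and critic $\leftrightarrow$ own actor/attacker through $\delta_i$.

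I would handle it with Young's inequality, $2ab\le\kappa a^2+b^2/\kappa$, assigning each cross term's critic part to the critic diagonal and its policy part to the policy diagonal. The graph degree $\max_i|\bar{\mathcal N}_i|$ enters as a multiplier. This reduces $\mathcal M\succ0$ to a diagonal-dominance condition: $\lambda_c$, weighted by $\eta_{c,i}$, must dominate the neighbor-induced terms, which scale with $\eta_{s,i},\eta_{a,i}$. This is exactly where the requirement $\eta_{c,i}>\eta_{s,i},\eta_{a,i}$, i.e.\ condition~\eqref{Eqn:convergence}, should arise.

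Once $\mathcal M\succeq\lambda_{\min}(\mathcal M)I\succ0$, we get $\dot L\le-\lambda_{\min}(\mathcal M)\|\mathcal W\|^2+\|\mathcal b\|\|\mathcal W\|<0$ whenever $\|\mathcal W\|>\|\mathcal b\|/\lambda_{\min}(\mathcal M)$. By the standard UUB lemma, all $\tilde W_{c,i},\tilde W_{a,i},\tilde W_{s,i}$ are uniformly ultimately bounded, with the bound proportional to the residual bounds $\bar\varepsilon_*,\triangledown\bar\varepsilon_c$.
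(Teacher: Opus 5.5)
Your error dynamics for the three networks are essentially right, but the step where you assemble $\dot L$ does not go through. The failure is exactly where you depart from the paper: the critic's dependence on its own actor and attacker through $\delta_i$.

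\textbf{The critic-to-policy term is not affine.} Since $\hat u_{s,i}=u_{s,i}^*-e_{s,i}$ with $e_{s,i}=\tilde W_{s,i}^{\top}\varphi_{s,i}+\varepsilon_{s,i}$, one has $\|\hat u_{s,i}\|_{R_{s,i}}^2-\|u_{s,i}^*\|_{R_{s,i}}^2=\|e_{s,i}\|_{R_{s,i}}^2-2u_{s,i}^{*\top}R_{s,i}e_{s,i}$. The attacker gives the same structure with a factor $-\beta_i$. Hence $\dot L$ contains cubic terms of the type $\|\tilde W_{c,i}\|\,\|\tilde W_{s,i}^{\top}\varphi_{s,i}\|^2$. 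It is therefore not a negative-definite quadratic form plus a linear term, and no Young splitting dominates it for large errors; at best you get a local result.

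\textbf{Even the linear part closes a loop.} The critic depends on its own policies through $\delta_i$, and the policies depend on neighbouring critics through $\rho_{u^d_{*,i}}$. For such a two-way coupling, Young's inequality only delivers a small-gain condition: the product of the two coupling gains must be smaller than the product of the excitation levels. That condition is invariant under any rescaling of the learning rates or of the weights in $L$, so it cannot be enforced by tuning.

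\textbf{Your weighting removes the learning rates.} With your $1/\eta$ weights, every learning rate cancels from $\dot L$. Condition~\eqref{Eqn:convergence} therefore cannot emerge from your computation. You would instead get a structural condition on the PE levels, $R_{*,i}^{-1}$, $b_g$, $\triangledown\bar\varphi_c$ and the graph degrees, which need not hold.

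\textbf{The missing ingredient} is the one the paper uses to cut this loop. The policies are assumed bounded, $\|\hat u_{*,i}\|,\|u_{*,i}\|\le\varrho_*$ (actuator saturation). The whole integrand mismatch is then absorbed into the constant $b_c=b_\epsilon+T\bar\lambda_{\scriptscriptstyle R_s}\varrho_s^2+T\bar\lambda_{\scriptscriptstyle R_a}\varrho_a^2$ in~\eqref{e1i}, and $\dot L_{c,i}$ in~\eqref{dotLc} no longer involves $\tilde W_{s,i}$ or $\tilde W_{a,i}$. The only remaining coupling runs one way, from neighbouring critics into the actor and attacker.

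The paper then uses the unweighted $L_i=\tfrac12\tilde W_{c,i}^{\top}\tilde W_{c,i}+\tfrac12\mathrm{tr}(\tilde W_{a,i}^{\top}\tilde W_{a,i})+\tfrac12\mathrm{tr}(\tilde W_{s,i}^{\top}\tilde W_{s,i})$. Young's inequality moves the coupling entirely onto the critic diagonal, with coefficients $\varkappa_{a,i},\varkappa_{s,i}$ proportional to the actor and attacker learning rates. These are dominated by $\eta_{c,i}\underline\lambda_{\scriptscriptstyle\Lambda_i}$ once $\eta_{c,i}$ is large enough, which is precisely~\eqref{Eqn:convergence}.

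\textbf{A smaller point on PE.} The matrices $\varphi_{s,i}\varphi_{s,i}^{\top}$ and $\varphi_{a,i}\varphi_{a,i}^{\top}$ are rank one, so PE gives no pointwise term $-\lambda_s\|\tilde W_{s,i}\|^2$. Keep $-\tfrac78\eta_{s,i}\|\tilde W_{s,i}^{\top}\varphi_{s,i}\|^2$ in $\dot L$, as the paper does. Invoke PE only at the end, to pass from $\|\tilde W_{*,i}^{\top}\varphi_{*,i}\|$ to $\|\tilde W_{*,i}\|$. The rank-one critic regressor $\bar\varphi_i\bar\varphi_i^{\top}$ carries the same caveat in both your argument and the paper's; a fully rigorous treatment integrates over the PE window.
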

\begin{IEEEproof}
The convergence analysis of the neural networks is performed by the Lyapunov stability theory, and the Lyapunov function candidate is considered as 
$L = 
 \sum_{i=1}^{M} L_i $,
 where
\begin{equation}
    \begin{aligned}
        L_i=  L_{c,i} + L_{a,i} + L_{s,i},
    \end{aligned}
\end{equation}
with 
\begin{equation}
    \begin{aligned}
    &L_{c,i} = \frac{1}{2}\mathbb{E}\left\{ \text{tr} \left\{\tilde W_{c,i}^{\top}  \tilde W_{c,i}\right\}\right\},\\
    &L_{a,i} = \frac{1}{2} \text{tr} \left\{\tilde W_{a,i}^{\top}  \tilde W_{a,i}\right\},\;L_{s,i} = \frac{1}{2} \text{tr} \left\{\tilde W_{s,i}^{\top}  \tilde W_{s,i}\right\}.\\
    \end{aligned}
\end{equation}
Taking the derivative of $L$, we can obtain $ \dot L =  \mathbb{E}\left\{\sum_{i=1}^{M} \dot L_i\right\} $, where $  \dot L_i= \dot L_{c,i} + \dot L_{a,i} + \dot L_{s,i} $.
Considering $\dot L_{c,i}$, we can obtain
\begin{equation*}
    \begin{aligned}
    \dot L_{c,i}=&\mathbb{E}\left\{ \tilde W_{c,i}^{\top} \dot{\tilde W}_{c,i}\right\}\\
    =&\eta_{c,i}  \tilde W_{c,i}^{\top} \frac{\Delta\varphi_{c,i}}{(1+\Delta\varphi_{c,i}^{\top}\Delta\varphi_{c,i})^2}\mathbb{E}\left\{ \zeta_{i}^{\top}\right\}.
    \end{aligned}
\end{equation*}
Utilizing the HJI equation and~\eqref{error-HJB-IRL-1} yields
\begin{equation}\label{e1i}
    \begin{aligned}
     \mathbb{E}\{\zeta_{i}\} =&  \int_{\tau=t}^{t+T}\hat r_i(\tau)d\tau + \hat W_{c,i}^{\top}\Delta \varphi_{c,i} \\
     = & -\beta_{i}\int_{\tau=t}^{t+T} \|\hat u_{a,i}\|^2_{R_{a,i}} - \| u_{a,i}\|^2_{R_{a,i}} d\tau + \epsilon_i\\
     & + \int_{\tau=t}^{t+T} \| \hat u_{s,i}\|_{R_{s,i}}^{2} - \| u_{s,i}\|_{R_{s,i}}^{2} d\tau  - \tilde W_{c,i}^{\top}\Delta  \varphi_{c,i}\\
      \le& - \tilde W_{c,i}^{\top}\Delta \varphi_{c,i}  + b_c,
    \end{aligned}
\end{equation}
where $b_c =  b_\epsilon + T \bar \lambda_{\scriptscriptstyle R_s} \varrho_s^2 + T  \bar \lambda_{\scriptscriptstyle R_a} \varrho_a^2 $, $ \epsilon_i=\varepsilon_{c,i}(t)- \varepsilon_{c,i}(t+T)$, with $\|\epsilon_i\| \le b_\epsilon $, $b_\epsilon= 2 \bar \epsilon_c$, $\bar \lambda_{\scriptscriptstyle R_*}\;(*=a,s)$ denotes the maximum eigenvalue of $R_{*,i}\;(*=a,s)$. 
Since the actor networks are specical designed to deal with the constraints on the actuators of the real rotot systems, $\|\hat u_{*,i}\| \le \varrho_{*},\; \| u_{*,i}\| \le \varrho_{*}$ hold, where $ \varrho_{*} = \max\{\varrho_{*,i}\}$, $*=a,s;\;i=1,2,\cdots,M$.

Define $ m_{\varphi_{c,i}} =  \frac{\Delta\varphi_{c,i}}{1+\Delta\varphi_{c,i}^{\top}\Delta\varphi_{c,i}}$, and then, with~\eqref{e1i}, the derivative of $ L_{c,i}$ becomes
\begin{equation}\label{dotLc}
    \begin{aligned}
    \dot L_{c,i} \le & - \eta_{c,i} \underline \lambda_{\scriptscriptstyle \Lambda_i}  \|\tilde W_{c,i}\|^2 + \eta_{c,i} b_c \triangledown \bar \varphi_c \|\tilde W_{c,i}\|,
    \end{aligned}
\end{equation}
where $\underline \lambda_{\scriptscriptstyle \Lambda_i}$ is the minimum eigenvalue of $ \Lambda_i $, with $\Lambda_i = m_{\varphi_{c,i}}  m_{\varphi_{c,i}}^{\top} $.
By~\eqref{update-law-actor}, the derivative of $L_{a,i}$ can written as

\begin{equation}\label{dotLa}
    \begin{aligned}
    &\dot L_{a,i} =  \tilde W_{a,i}^{\top}  \dot{\tilde W}_{a,i}\\
    &=-\eta_{a,i} \tilde W_{a,i}^{\top}   \varphi_{a,i}\left(\frac{1}{2} R_{a,i}^{-1}\sum_{j\in\bar{\mathcal{N}}_i} g_i^{\top}\triangledown\varphi_{c,j}^{\top} \hat W_{c,j} - \hat W_{a,i}^{\top} \varphi_{a,i}\right)^{\top}\\
    &\le  -\frac{7}{8}\eta_{a,i} \|\tilde W_{a,i}^{\top} \varphi_{a,i} \|^2  + \frac{1}{2} \eta_{a,i} \bar \lambda_{\scriptscriptstyle R_{a}^{-1}}^2 b_g^2 \triangledown \bar \varphi_c^2 n_{\scriptscriptstyle\bar{\mathcal{N}}}\sum_{j\in\bar{\mathcal{N}}_i} \tilde W_{c,j}^2 \\
    & \hspace{3mm}
    + (\eta_{a,i} \bar \varphi_a \bar W_a + \frac{1}{2} \eta_{a,i} \bar \lambda_{R_a^{-1}} n_{\scriptscriptstyle\bar{\mathcal{N}}} b_g \triangledown \bar \varphi_c \bar W_c)\|\tilde W_{a,i}^{\top} \varphi_{a,i} \|,\\
    \end{aligned}
\end{equation}
where $\|g_i\| \le b_g$, $n_{\bar{\scriptscriptstyle \mathcal{N}}}$ is the maximum number of elements in $\bar{\mathcal{N}}_i$, and $ \bar \lambda_{\scriptscriptstyle R_{a}^{-1}}$ represents the maximum eigenvalue of $ R_{a,i}^{-1} $. 
Similarly, $\dot L_{s,i}$ can be written as
\begin{equation}\label{dotLs}
    \begin{aligned}
    \dot L_{s,i} 
  \le & -\frac{7}{8}\eta_{s,i} \|\tilde W_{s,i}^{\top} \varphi_{s,i} \|^2  + \frac{1}{2} \eta_{s,i} \bar \lambda_{\scriptscriptstyle R_{s}^{-1}}^2 b_g^2 \triangledown \bar \varphi_c^2 n_{\scriptscriptstyle\bar{\mathcal{N}}}\sum_{j\in\bar{\mathcal{N}}_i} \tilde W_{c,j}^2 \\
    & + (\eta_{s,i} \bar \varphi_s \bar W_s + \frac{1}{2} \eta_{s,i} \bar \lambda_{R_s^{-1}} n_{\scriptscriptstyle\bar{\mathcal{N}}} b_g \triangledown \bar \varphi_c \bar W_c)\|\tilde W_{s,i}^{\top} \varphi_{s,i} \|,\\ 
    \end{aligned}
\end{equation}
where $\bar \lambda_{\scriptscriptstyle R_s^{-1}}$ denotes the maximum eigenvalue of $R_{s,i}^{-1}$.
Together with Eqs. \eqref{dotLc}, \eqref{dotLa}, \eqref{dotLs}, the derivative of $L$ can be written as
\begin{equation}\label{dotL}
    \begin{aligned}
    \dot L \le &- \sum_{i=1}^M  \frac{7}{8}\left( \eta_{a,i} \|\tilde W_{a,i}^{\top} \varphi_{a,i} \|^2  + \eta_{s,i} \|\tilde W_{s,i}^{\top} \varphi_{s,i} \|^2 \right)\\
    & - \sum_{i=1}^M  \left( \check \eta_{c,i} \|\tilde W_{c,i}\|^2 + \eta_{c,i} b_c \triangledown \bar \varphi_c \|\tilde W_{c,i}\|\right) \\
    &-  \sum_{i=1}^M \left( \wp_{a,i} \|\tilde W_{a,i}^{\top} \varphi_{a,i} \| + \wp_{s,i} \|\tilde W_{s,i}^{\top} \varphi_{s,i} \|  \right), \\
    \end{aligned}
\end{equation}
where $\wp_{a,i} = \eta_{a,i} \bar \varphi_a \bar W_a + \frac{1}{2} \eta_{a,i} \bar \lambda_{R_a^{-1}} n_{\scriptscriptstyle\bar{\mathcal{N}}} b_g \triangledown \bar \varphi_c \bar W_c$, $ \wp_{s,i}=\eta_{s,i} \bar \varphi_s \bar W_s + \frac{1}{2} \eta_{s,i} \bar \lambda_{R_s^{-1}} n_{\scriptscriptstyle\bar{\mathcal{N}}} b_g \triangledown \bar \varphi_c \bar W_c$, $\varkappa_{a,i} = \frac{1}{2} \eta_{a} \bar \lambda_{\scriptscriptstyle R_{a}^{-1}}^2 b_g^2 \triangledown \bar \varphi_c^2 n_{\scriptscriptstyle\bar{\mathcal{N}}} \varsigma_i$, $\varkappa_{s,i} = \frac{1}{2} \eta_{s,i} \bar \lambda_{\scriptscriptstyle R_{s}^{-1}}^2 b_g^2 \triangledown \bar \varphi_c^2 n_{\scriptscriptstyle\bar{\mathcal{N}}} \varsigma_i$, $\eta_a = \max\{\eta_{a,i}\}$, $\eta_s = \max\{\eta_{s,i}\}$,  and $\varsigma_i$ denotes the number of the robots which can receive the $i$-th robot's information. To achieve uniformly ultimately boundedness property in policy learning, one requires that 
\begin{equation}\label{Eqn:convergence}\check \eta_{c,i} = \eta_{c,i} \underline \lambda_{\scriptscriptstyle \Lambda_i} - \varkappa_{a,i} - \varkappa_{s,i}>0
\end{equation} is satisfied, which can be guaranteed with proper parameter tuning.
According to~\eqref{dotL}, if $ \|\tilde W_{c,i}\| \ge \frac{\eta_{c,i} b_c \triangledown \bar \varphi_c}{\check\eta_{c,i}}$, $ \|\tilde W_{a,i}^{\top} \varphi_{a,i} \| \ge \frac{\wp_{a,i}}{\eta_{a,i}}$, $\|\tilde W_{s,i}^{\top} \varphi_{s,i} \| \ge \frac{\wp_{s,i}}{\eta_{s,i}} $ hold, we can obtain $\dot L \le 0$, $\|\tilde W_{c,i}\|$, $ \|\tilde W_{a,i}^{\top} \varphi_{a,i}\| $, $\|\tilde W_{s,i}^{\top} \varphi_{s,i}\|$ are uniformly ultimately bounded.  Since $ \varphi_{*,i} \; (*=c,a,s)$ is assumed to be persistently excited, $\|\tilde W_{a,i}\| $, $\|\tilde W_{s,i}\|$ are uniformly ultimately bounded.
\end{IEEEproof}
    \begin{figure}[h]
					\centering
     \includegraphics[width=1\columnwidth]{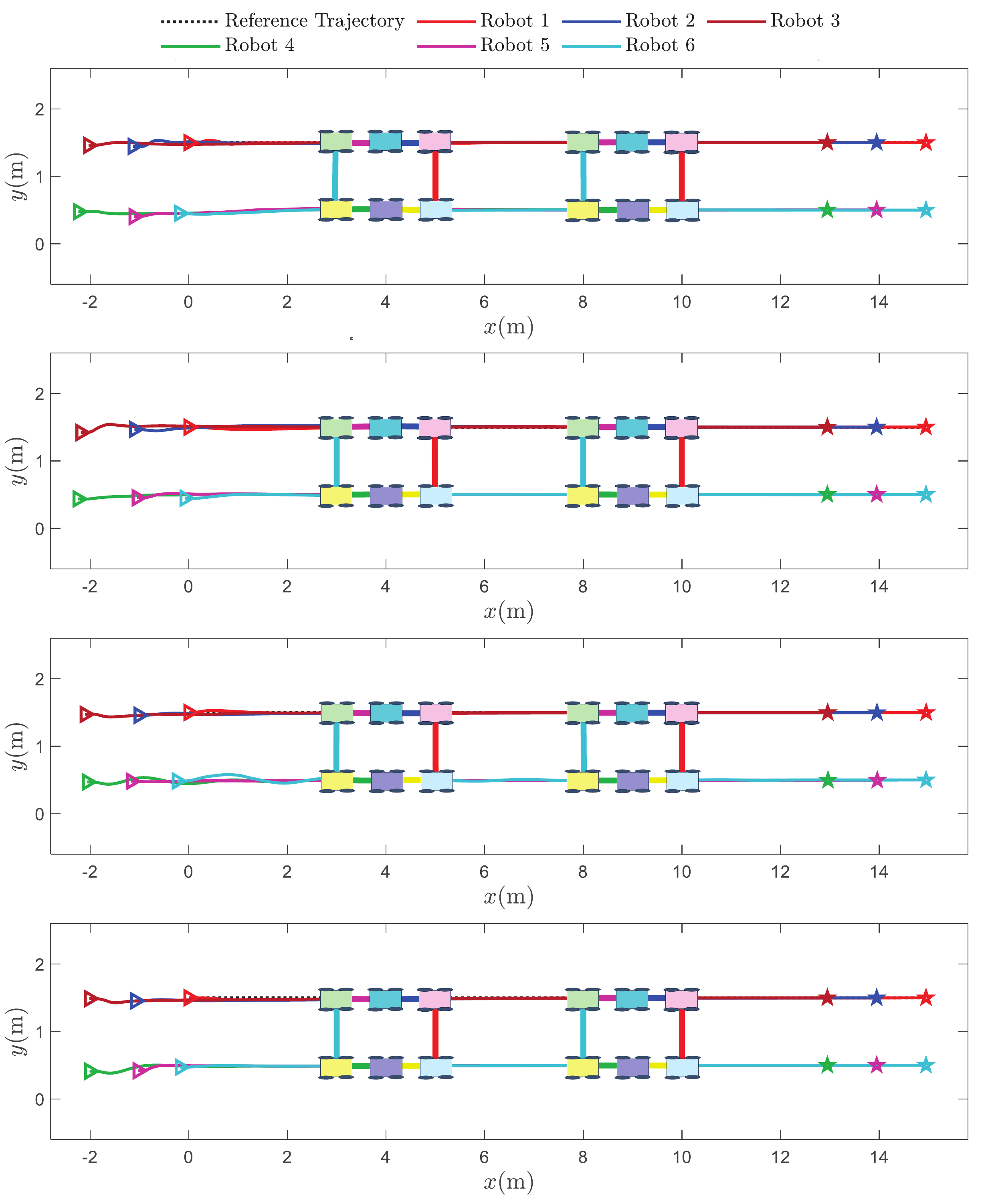}
				\caption{{\color{black}The trajectories of six-robot formation with  different random attack probabilities across robots: (a), $\vec\beta=(0.4,0.8,0.6,0.6,0.4,0.4)$; (b), $\vec \beta=(0.3,0.2,0.3,0.5,0.2,0.6)$; (c), $\vec \beta=(0.3,0.5,0.4,1,0.9,0.3)$; (d), $\vec \beta=(0.9,0.3,0.5,0.1,0.1,0.9)$, where $\vec \beta=(\beta_1,\,\cdots,\,\beta_6).$}}
		\label{six-robots-different}
\end{figure}
 \begin{figure*}[http]
					\centering
     \includegraphics[width=1.8\columnwidth]{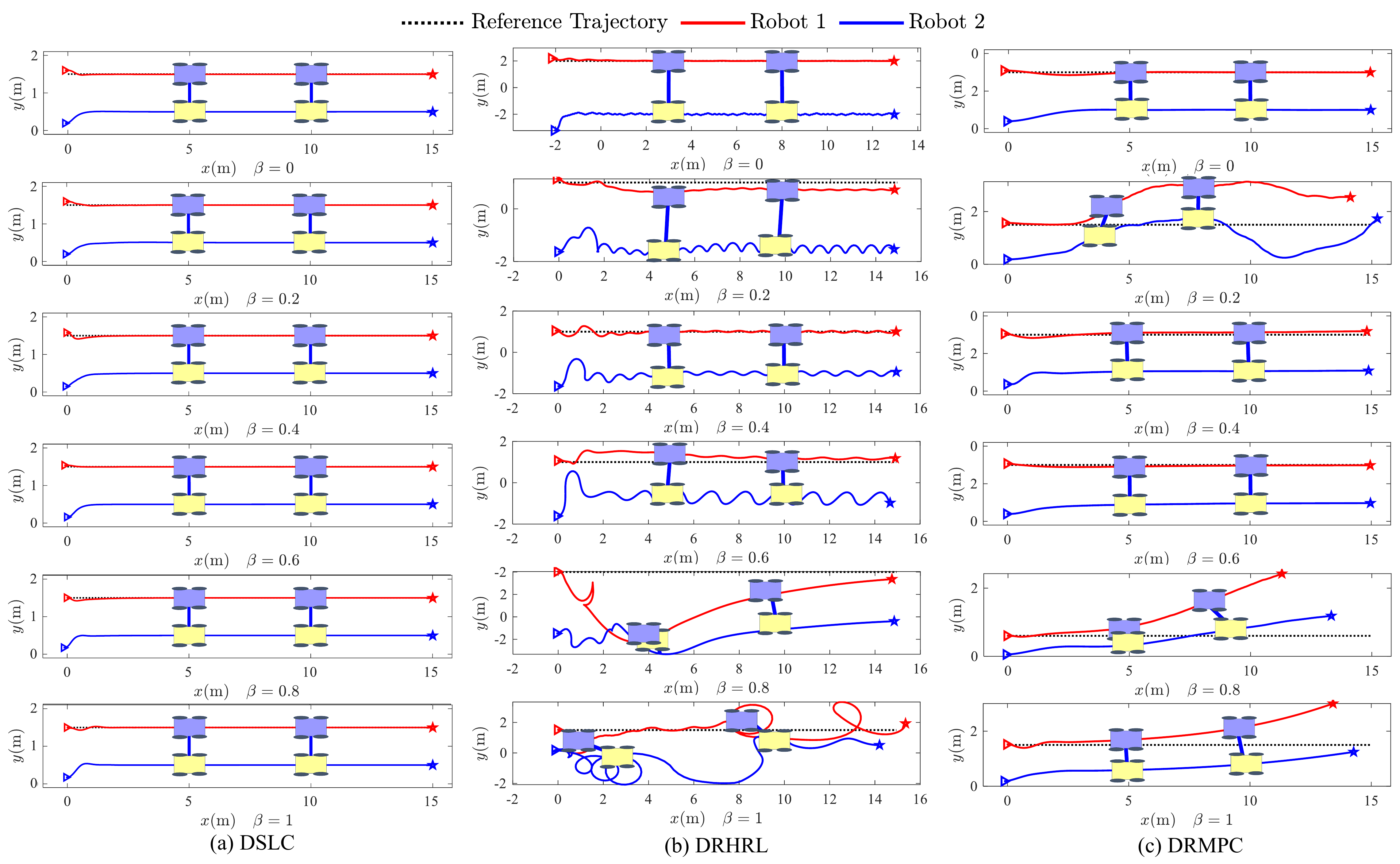}
				\caption{{The trajectories of two-robot formation with  different attack probabilities under DRHRL~\cite{zhang2025toward}, DRMPC~\cite{zhou2022event}, and DSLC.}}
				\label{two-robots}
\end{figure*}
\begin{figure*}[http]
					\centering
     \includegraphics[width=1.8\columnwidth]{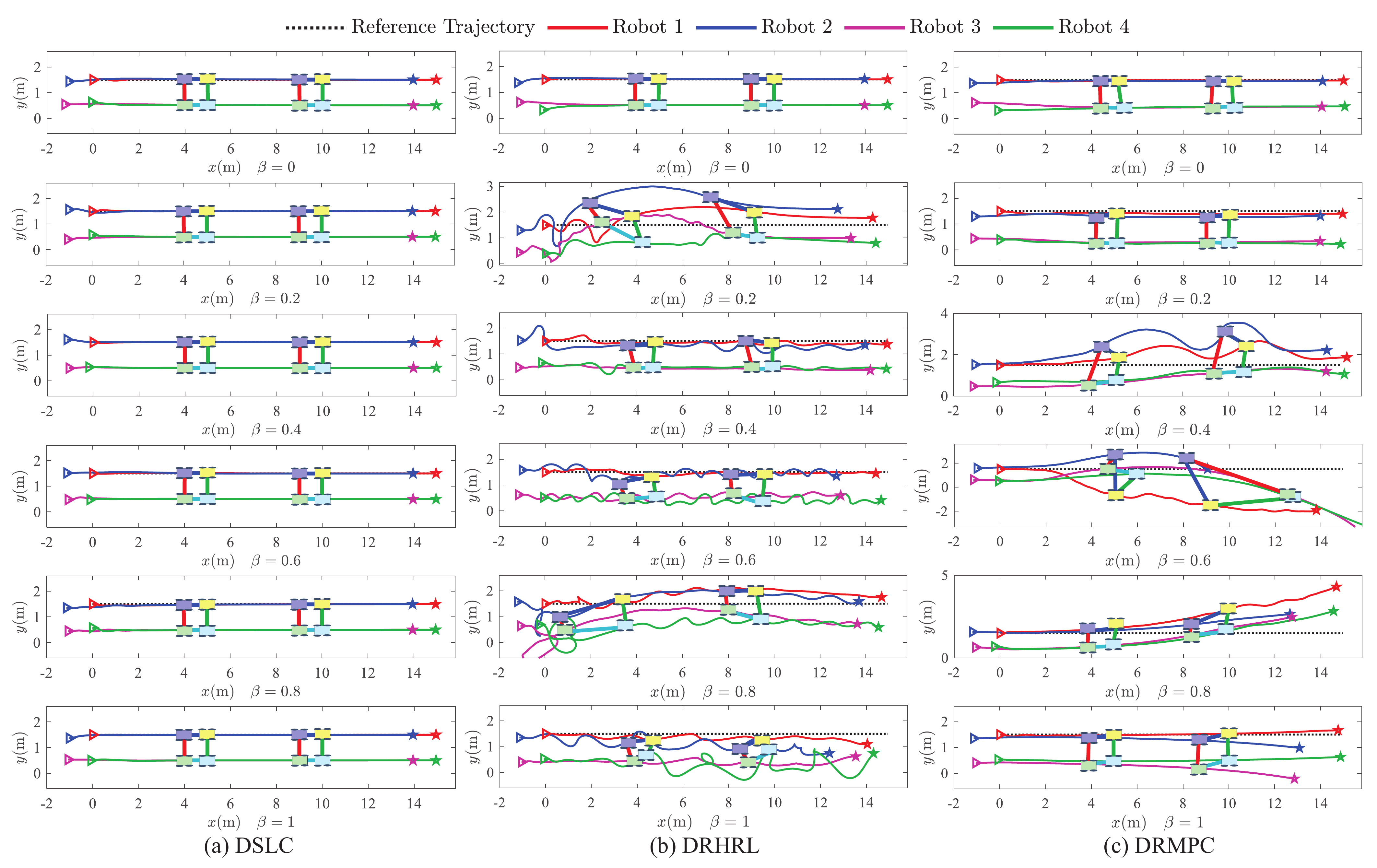}
				\caption{{The trajectories of four-robot formation with different attack probabilities under DRHRL~\cite{zhang2025toward}, DRMPC~\cite{zhou2022event}, and DSLC.}}
				\label{four-robots}
\end{figure*}
\emph{Practical stability verification}: Theorem~\ref{convergence_analysis} implies that the learned control policy might be suboptimal. The associated cost value $J(t)$ might not satisfy the monotonic decreasing property. Hence, as demonstrated in~\cite{zhang2025toward}, the theoretical results developed in DMPC may not be directly applicable. In line with~\cite{zhang2025toward}, we introduce a practical condition to verify the closed-loop stability. 
Let $\bm u_{a,i}^{b}=\break{\rm col}_{i\in\mathbb{N}_1^M}\bm u_{a,i}^{b}$ and $\bm u_{s,i}^{b}={\rm col}_{i\in\mathbb{N}_1^M}\bm u_{s,i}^{b}$,  such that $J^b(t)$  is a Lyapunov candidate function satisfying
		\begin{equation}\label{Eqn:baseline_cost-o} J^b(t+ t_c)- J^b(t)<-\sum_{i=1}^M \int_{\tau=t}^{t+t_c}\chi_i(x_{\scriptscriptstyle \mathcal{N}_i},u_{s,i}, u_{a,i})d\tau,
		\end{equation}
		where $\chi_i(x_{\scriptscriptstyle \mathcal{N}_i},u_{s,i}, u_{a,i})$ is a class $\mathcal{K}$ function, $ J^b=\sum_{i=1}^{M} J_i^b$, $J^b(x_{\scriptscriptstyle \mathcal{N}_i}^b)$ is the associated performance index in~\eqref{Eqn:HL_cost} under $u_i^{b}$, respectively.
		Hence, we introduce the following condition to verify closed-loop stability:
		\begin{equation}\label{Eqn:sta-con-o}
			J(t)\leq J^b(t), \ t=t_0,\,t_0+ t_c,\,t_0+2t_c\,\cdots.
		\end{equation}
		
		This condition is a practical and easily verifiable condition for verifying closed-loop stability. A simple but practical way is to draw $ J^b(t)$ as a monotonically decreasing function, with $J^b(\infty)=0$. With the designed monotonically decreasing function $ J^b(t)$, the only thing left is to verify whether $J(t)\leq J^b(t)$ during policy learning.  

\subsection{Auxiliary Simulation Results}\label{Auiliary-results}
{\color{black} This subsection first presents simulation results of DSLC under different attack probabilities across robots (see Fig.~\ref{six-robots-different}), demonstrating the effectiveness of our approach under diverse attack probabilities.}
The comparative results for the two-robot and four-robot formation scenarios under attacks are provided in Figs.~\ref{two-robots} and~\ref{four-robots}, respectively, which complement the six-robot formation scenario shown in Fig.~\ref{six-robots}. 
{\color{black} Due to space limitations, other simulation results including deploying of a defense policy trained under a specific attack probability to scenarios with different attack probabilities and secure control with alternative destabilizing attacker policies, are given in Appendix~\ref{Auiliary-MATERIAL}. } 
%
%
	{\color{black}\section{Auxiliary Materials}\label{Auiliary-MATERIAL}
\subsection{Learned Weights for Two-Robot Formation under $\beta=0.5,\, 1$}\label{weights}
In the following, we provide the learned weights of the actors and attackers for a two-robot formation scenario under the attack probabilities $\beta=0.5$ and 1.

In the case $\beta=0.5$, the learned weights of the actor and attack for each robot are as follows:\vspace{1mm} \\
%
%
\hspace{-1.5mm}\scalebox{0.9}{$
\begin{array}{ll}W_{s,1}=\\
\begin{bmatrix}
1.64&-0.89&0.52&3.13&-1.03&2.92&-0.54&2.16\\
-3.85&2.54&3.54&
-0.38&-0.07&1.49&0.70&-0.11    
\end{bmatrix}^{\top},
\end{array}
$}
\vspace{1mm}
\hspace{-1mm}\scalebox{0.9}{$\begin{array}{ll}W_{s,2}=\\
\begin{bmatrix}2.88&-2.55&0.40&3.70&2.14&-2.54&-3.47&-0.78\\
-2.57&2.87&
3.11&-1.47&-1.14&-0.20&2.04&-1.47\end{bmatrix}^{\top},\end{array}$} 
\vspace{1mm}
\hspace{-1mm}\scalebox{0.9}{$\begin{array}{ll}W_{a,1}=\\
\begin{bmatrix}1.26&3.44&0.73&2.25&1.89&2.21&-0.76&0.04\\
2.80&2.86& 4.39&
-2.16&-0.12&-2.00&-1.89&1.20\end{bmatrix}^{\top},\end{array}$}
\vspace{1mm}
\hspace{-1mm}\scalebox{0.9}{$\begin{array}{ll} W_{a,2}=\\
\begin{bmatrix}2.88&-0.72&-2.17&1.07&5.25&-1.20&1.98&-3.00\\
4.56&2.49&
-1.42&0.68&1.69&1.26&-3.00&0.43\end{bmatrix}^{\top}.\end{array}$}
\vspace{1mm}

In the case  $\beta=1$, the learned weights of the actor and attack for each robot are as follows: \\
%
%
%
\vspace{1mm}
\hspace{-1mm}\scalebox{0.9}{$\begin{array}{ll}W_{s,1}=\\
\begin{bmatrix}-0.07&-1.40&-2.58&2.95&3.42&-3.47&3.01&0.90\\        -0.93&2.66&
0.72&-3.08&-0.32&3.04&-1.80&1.95\end{bmatrix}^{\top},\end{array}$}
\vspace{1mm}
\hspace{-1mm}\scalebox{0.9}{$\begin{array}{ll}W_{s,2}=\\
\begin{bmatrix}1.67&2.04&-0.39&3.15&0.86&0.64&0.88&1.92\\
         -0.67&-0.48&2.90&         -2.47&2.15&2.72&-1.26&0.74\end{bmatrix}^{\top},\end{array}$}
\vspace{1mm}
\hspace{-1mm}\scalebox{0.9}{$\begin{array}{ll}
W_{a,1}=\\
\begin{bmatrix}4.02&0.33&1.99&2.53&-3.93&0.51&1.01&2.72\\
         1.18& 3.97&3.04&      -1.88&1.62&1.09&-2.10&3.24\end{bmatrix}^{\top},\end{array}$}
         \vspace{1mm}
\hspace{-1mm}\scalebox{0.9}{$\begin{array}{ll} W_{a,2}=\\
\begin{bmatrix}2.52&-1.30&-0.73&2.69&1.12&1.14&-1.99& 0\\        
	-0.12&3.48&0.08&-1.23&0.63&-0.46&-0.45&0.20\end{bmatrix}^{\top}.\end{array}$}
\subsection{Auxiliary Results and Analysis}\label{auxiliary-res}
\emph{Policy deployment to scenarios with different attack probabilities:} We have conducted simulation studies to demonstrate that a defense policy trained under a specific attack probability can be deployed to scenarios with different attack probabilities. As shown in Fig.~\ref{deploy-different}, the policy learned under $\beta = 0.4$ successfully stabilizes the MRS for $\beta = 0.1,0.2,0.3,0.5$, while the policy learned under $\beta = 0.7$ stabilizes the MRS for $\beta = 0.6,0.8,0.9,1$. These results alleviate the requirement for exact knowledge of $\beta$ and highlight the practical value of the proposed approach under unknown attack probabilities.

 \emph{Scalability of DSLC versus DRMPC:} We have recorded the average computational time of our approach and DRMPC under different robot scales (see Table~\ref{tab:computation_time}). The results show that the average computational time of the proposed approach grows approximately linearly with the number of robots and exhibits a substantial reduction in computational load compared to DRMPC.

\emph{Completion time comparison with DRHRL in experiments:} Note that, the experimental results shown in Figs.~\ref{R4LB00}–\ref{ContainmentB10_P_C} demonstrate that the proposed approach successfully stabilizes the MRS in general formation, affine formation, and containment tasks, whereas DRHRL suffers from performance degradation under attacks in most cases. We also collected the corresponding task completion times associated with Figs.~\ref{R4LB00}–\ref{ContainmentB10_P_C}, as summarized in Table~\ref{tab:completion_time}. The results indicate that DSLC completes the tasks faster than, or comparably to, DRHRL. Taken together, both the control performance and task completion time validate the effectiveness and robustness of the proposed approach.

\emph{Defense performance under other destabilizing attack policies:}
In addition to the worst-case attack policies learned through the game-theoretic learning framework, we also conducted simulation studies using a general destabilizing attack policy, i.e., $u_{a,i}=K x_i,\, \forall i\in \mathbb{N}_1^M$, where $K=[0.5,1,0.1,0.5;0.2,0.7,0.3,1]$, to attack MRS. The results shown in Fig.~\ref{six-robots-attack-model} demonstrate that the deployed defense policy can still stabilize the MRS under unseen attack strategies.
}

\begin{figure*}[htbp]
\centering
     \includegraphics[width=1.7\columnwidth]{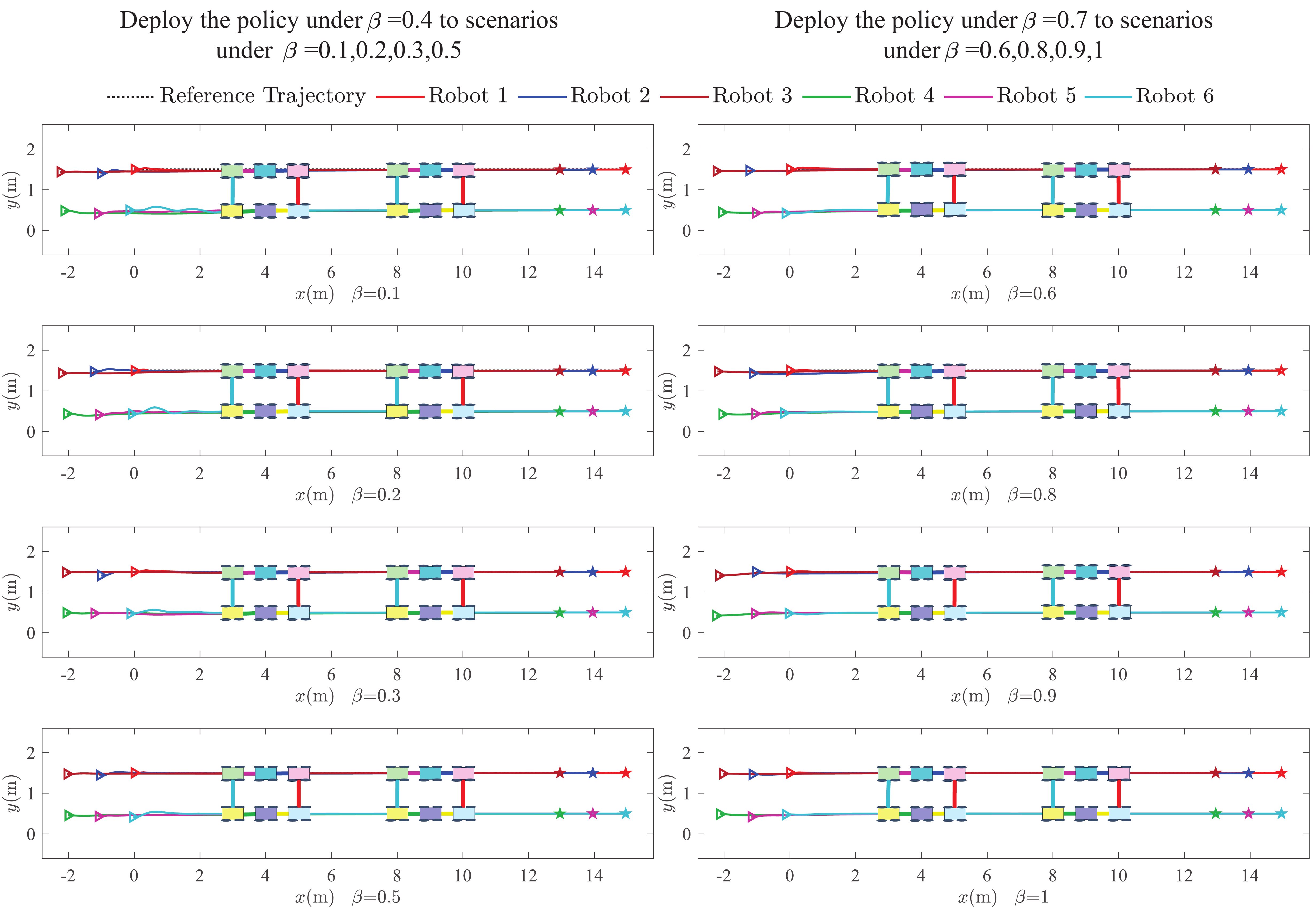}
				\caption{{\color{black}The trajectories of a six-robot formation illustrating the deployment of a defense policy trained under a specific attack probability to scenarios with different attack probabilities.}}
				\label{deploy-different}
\end{figure*}
\begin{table*}[htbp]
\centering
\caption{Average Computational Time Comparison between DSLC and DRMPC}
\label{tab:computation_time}
{\color{black}\begin{tabular}{cccccc}
\toprule
\multirow{2}{*}{Robot Number} & \multirow{2}{*}{$\beta$} & \multicolumn{2}{c}{Average Computation Time (ms)} \\
\cmidrule{3-4}
& & DSLC &DRMPC \\
\midrule
\multirow{3}{*}{2} & 0.1 & $1.38$ & $25.6$ \\
& 0.4 & $1.53$ & $16.3$ \\
& 0.7 & $1.28$ & $37.4$ \\
\midrule
\multirow{3}{*}{4} & 0.2 & $2.28$ & $56.2$ \\
& 0.5 & $2.00 $ & $54.2$ \\
& 0.8 & $3.05$ & $60.7$ \\
\midrule
\multirow{3}{*}{6} & 0.3 & $4.60$ & 183 \\
& 0.6 & $3.73$ & 195 \\
& 0.9 & $3.10$ & 156 \\
\midrule
\multirow{3}{*}{100} & 0.1 & $10.1$ & -- \\
& 0.4 & $10.2$ & -- \\
& 0.7 & $10.2$ & -- \\
\bottomrule
\end{tabular}}
\end{table*}
\begin{table*}[htbp]
    \centering
    \caption{Task Completion Time Comparison}
    \label{tab:completion_time}
   { \color{black}\begin{tabular}{l c c c c}
        \toprule
         \multirow{2}{*}{Task} & \multirow{2}{*}{Robot Number} & \multirow{2}{*}{$\beta$} & \multicolumn{2}{c}{Completion Time (s)} \\
        \cmidrule{4-5}
        & & & DSLC & DRHRL \\
        \midrule
        \multirow{2}{*}{Containment} & \multirow{2}{*}{5} & 0.5 & 25.02 & 36.11 \\
        & & 1 & 25.02& 34.28 \\
        \midrule
        \multirow{2}{*}{Affine Formation} & \multirow{2}{*}{4} & 0.5 & 10.01 & 10.01 \\
        & & 1 & 10.02 & 10.02 \\
        \midrule
        \multirow{8}{*}{General Formation} & \multirow{2}{*}{4 (Scenario I)} & 0.5 & 11.11 & 11.18 \\
        & & 1 & 11.04 & 11.07 \\
        \cmidrule{2-5}
        & \multirow{2}{*}{6 (Scenario I)} & 0.5 & 11.11 & 10.29 \\
        & & 1 & 11 & 10.21 \\
        \cmidrule{2-5}
        & \multirow{2}{*}{4 (Scenario II)} & 0.5 & 20.02 & 22.04 \\
        & & 1 & 20.05 & 21.06 \\
        \cmidrule{2-5}
        & \multirow{2}{*}{6 (Scenario II)} & 0.5 & 20.09 & 20.13 \\
        & & 1 & 20.17 & 20.09 \\
        \bottomrule
    \end{tabular}}
\end{table*}
\begin{figure*}[h]
\label{policy_under_other_attack}
     \centering
     \includegraphics[width=1.6\columnwidth]{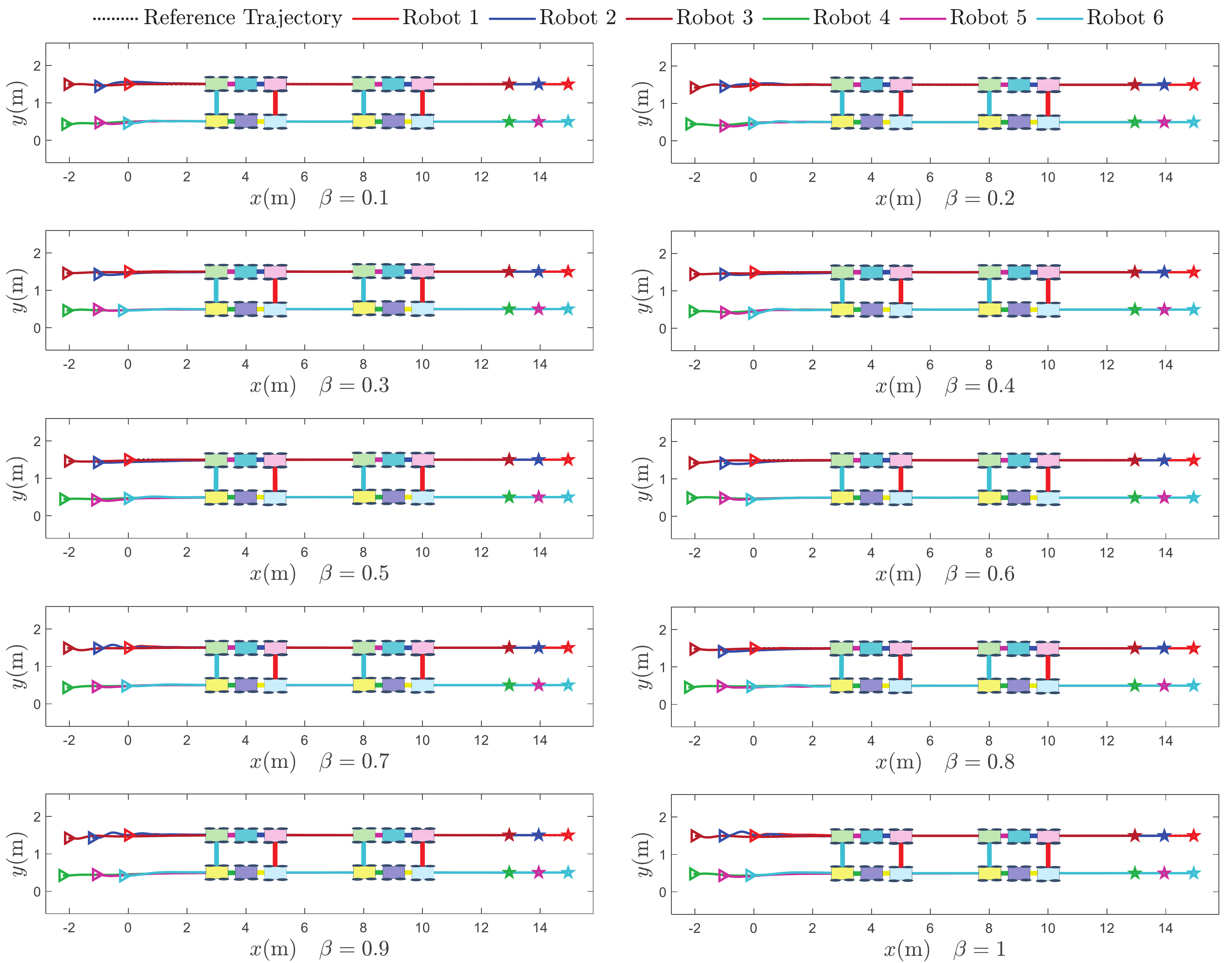}
				\caption{{\color{black}
                The trajectories of a six-robot formation under DSLC in the presence of a destabilizing attack policy, i.e., $u_{a,i}=K x_i,\, \forall i\in \mathbb{N}_1^M$, where $K=[0.5,1,0.1,0.5;0.2,0.7,0.3,1]$ is a randomly generated matrix.}}
				\label{six-robots-attack-model}
\end{figure*}
\end{document}